\theoremstyle{plain}
\newtheorem{theorem}{Theorem}[section]
\newtheorem{lemma}[theorem]{Lemma}
\theoremstyle{definition}
\newtheorem{definition}[theorem]{Definition}
\newtheorem{assumption}[theorem]{Assumption}
\theoremstyle{remark}
\newtheorem{remark}[theorem]{Remark}
\crefname{algocf}{alg.}{algs.}
\Crefname{algocf}{Algorithm}{Algorithms}
\newcommand{\EE}{\mathbb{E}}
\newcommand{\calA}{\mathcal{A}}
\newcommand{\calF}{\mathcal{F}}
\newcommand{\calO}{\mathcal{O}}
\newcommand{\calP}{\mathcal{P}}
\newcommand{\calS}{\mathcal{S}}
\newcommand{\calL}{\mathcal{L}}
\newcommand{\RR}{\mathbb{R}}
\newcommand*\dotp{\mathpalette\dotp@{.5}}
\newcommand*\dotp@[2]{\mathbin{\vcenter{\hbox{\scalebox{#2}{$\m@th#1\bullet$}}}}}
\setlist[itemize]{leftmargin=*} 
\setlist[enumerate]{leftmargin=*}
\setlist[description]{leftmargin=*}
\NewDocumentEnvironment{places}{mm}
 {% #1 is the desired width, #2 is the number of photos per line
  \setlength{\tabcolsep}{0pt} % no space between rows
  \dim_set:Nn \l_places_width_dim
   {
    (#1-\ht\strutbox-\dp\strutbox-2pt)/(#2)
   }
  \begin{tabular}{r @{\hspace{2pt}} *{#2}{c}}
 }
 {
  \end{tabular}
 }
\NewDocumentCommand{\place}{mm}
 {% #1 is the name of the place, #2 is the comma separated list of images
  \seq_set_from_clist:Nn \l_places_images_in_seq { #2 }
  \seq_set_map:NNn \l_places_images_out_seq \l_places_images_in_seq { \places_set_image:n {##1} }
  \seq_put_left:Nn \l_places_images_out_seq
   {
    \begin{tabular}{c}\rotatebox[origin=c]{90}{\strut#1}\end{tabular}
   }
  \seq_use:Nn \l_places_images_out_seq { & } \\ \addlinespace
 }
\title{Rectified Robust Policy Optimization for  Model-Uncertain Constrained Reinforcement Learning without Strong Duality}
\author{\name Shaocong Ma \email scma0908@umd.edu \\  
      \addr Department of Computer Science\\
      University of Maryland, College Park
      \AND
      \name Ziyi Chen \email zc286@umd.edu \\
      \addr Department of Computer Science\\
      University of Maryland, College Park
      \AND
      \name Yi Zhou \email yi.zhou@tamu.edu\\
      \addr Department of Computer Science and Engineering\\
      Texas A\&M University
      \AND
      \name Heng Huang \email heng@umd.edu\\ 
      \addr Department of Computer Science\\
      University of Maryland, College Park}
\begin{document}

\maketitle

\begin{abstract}
The goal of robust constrained reinforcement learning (RL) is to optimize an agent's performance under the worst-case model uncertainty while satisfying safety or resource constraints. In this paper, we demonstrate that strong duality does not generally hold in robust constrained RL, indicating that traditional primal-dual methods may fail to find optimal feasible policies. To overcome this limitation, we propose a novel primal-only algorithm called Rectified Robust Policy Optimization (RRPO), which operates directly on the primal problem without relying on dual formulations. We provide theoretical convergence guarantees under mild regularity assumptions, showing convergence to an approximately optimal feasible policy with iteration complexity matching the best-known lower bound when the uncertainty set diameter is controlled in a specific level.  Empirical results in a grid-world environment validate the effectiveness of our approach, demonstrating that RRPO achieves robust and safe performance under model uncertainties while the non-robust method can violate the worst-case safety constraints.
\end{abstract}

\section{Introduction}
In many practical reinforcement learning (RL) applications, it is critical for an agent to not only maximize expected cumulative rewards but also satisfy certain constraints, such as safety requirements \citep{yao2024constraint,gu2024review} or resource limitations \citep{wang2023efficient}. However, real-world environments often diverge from the training environment due to model mismatch \citep{roy2017reinforcement, viano2021robust, zhai2024robust, wangunified} and environment uncertainty \citep{lutjens2019safe, wang2021online, ma2023decentralized}. Such discrepancies can lead to significant performance degradation and, more severely, violations of constraints, which is unacceptable in safety-critical applications. %For instance, an autonomous robot may encounter unforeseen transitions due to equipment aging or mechanical failures that were not present during training, potentially leading to unsafe maneuvers. 
For instance, an autonomous robot may encounter unforeseen transitions due to equipment aging or mechanical failures: In the environment navigation and the robust control tasks, the robot make take the action that perform better in the ideal simulated environment due to the ideal equipment condition  but  suffer high penalty from higher energy cost or unexpected operations in the worst-case scenario.  
% that were not present during training, potentially leading to unsafe maneuvers. 
% In addition, mission-critical finance systems must respect value-at-risk (VaR) limits while trading under shifting market dynamics, and healthcare treatment-planning agents must adhere to strict dose constraints even when patient responses deviate from nominal models. 

Despite its practical importance, robust constrained RL has been relatively underexplored in the literature. Two closely related areas are robust RL \citep{bagnell2001solving,Nilim2005,Iyengar2005}
 and constrained RL \citep{Altman1999,wachi2020safe}. Robust RL focuses on optimizing performance under model uncertainties but typically does not consider constraints. Constrained RL aims to optimize performance while satisfying certain constraints but often assumes a fixed environment without uncertainties.   Seamlessly combining two fields presents inherent challenges. 

To address these challenges, we propose a framework for robust constrained RL under model uncertainty. Specifically, we consider Markov Decision Processes (MDPs) where the transition dynamics are not fixed but lie within an uncertainty set, which is commonly known as the robust MDPs \citep{mannor2016robust,ho2018fast,tamar2013scaling,grand2021scalable}. Our objective is to optimize the worst-case cumulative reward over this uncertainty set while ensuring that all constraints are also simultaneously satisfied in the worst-case scenario. This robust approach ensures that the agent's policy remains effective and safe even when the environment deviates from the nominal model.

A common approach to solving such constrained problems is the primal-dual method \citep{Altman1999, paternain2019constrained, bai2022achieving, liang2018accelerated, chen2016stochastic, mahadevan2014proximal, chen2022finding}, which leverages the strong duality property to efficiently find optimal policies. Strong duality allows the original constrained problem to be solved by considering its dual problem, simplifying computations and enabling convergence guarantees. However, a crucial question arises:
\begin{center}\begin{minipage}{0.86\linewidth}\textit{\textbf{Q1:} Does strong duality hold in robust constrained RL?}\end{minipage}\end{center} 
In this paper, we address this question head-on. We first demonstrate that, unfortunately, \textbf{strong duality does not generally hold in robust constrained RL}. The presence of model uncertainties breaks the Fenchel-Moreau condition, the common routine of showing the strong duality in the non-robust constrained RL \citep{Altman1999}. We construct a specific counterexample where the duality gap, the difference between the optimal values of the primal and dual problems, is strictly positive. {This finding indicates that traditional constrained RL algorithm may fail to be directly generalized to the robust constrained setting; we also note that the non-zero duality gap does not exclude the potential global convergence of primal-dual algorithm.} Recognizing this fundamental issue, we are motivated to ask the following question:
\begin{center}\begin{minipage}{0.86\linewidth}\textit{\textbf{Q2:} Can we develop a non-primal-dual algorithm for solving robust constrained RL problems with provable convergence guarantees?}\end{minipage}\end{center} 
To address this question, we introduce the Rectified Robust Policy Optimization (RRPO), a primal-only algorithm adapted from the CRPO \citep{xu2021crpo}. RRPO is specifically designed for robust constrained RL, which bypasses the issues associated with the duality gap. Our algorithm operates directly on the primal problem, ensuring constraint satisfaction and robustness without relying on dual formulations or strong duality assumptions. We summarize our key contributions as follows: 
\begin{enumerate}[leftmargin=*]
\item \textbf{Counterexample---Non-Zero Duality Gap:} In \Cref{sec:counterexample}, we provide a concrete example showing that strong duality does not hold in robust constrained RL. To the best of our knowledge, this negative result is the \textbf{first} theoretical example showing the non-zero duality gap in the existing literature, which resolves an open problem in constrained robust RL. We emphasize that establishing such a counterexample is nontrivial and crucial for understanding the theoretical boundaries of constrained robust RL.

\item \textbf{Proposed Primal-Only Algorithm---RRPO:} Motivated by the lack of strong duality of constrained robust RL problems, we introduce RRPO, a primal-only algorithm designed to solve robust constrained RL problems without relying on strong duality.  Moreover, in \Cref{sec:convergence}, we rigorously analyze the convergence properties of RRPO. Specifically, we prove that under appropriate conditions, RRPO converges to an approximately optimal feasible policy $\pi^*$ within a specified tolerance $\delta$ in the worst-case scenario. {\color{black}When the uncertainty set is sufficiently small,} our derived convergence rate and iteration complexity also \textbf{recover the best-possible lower bound} for non-robust constrained RL problems \citep{vaswani2022near}.

\item \textbf{Empirical Validation:} We validate the effectiveness of RRPO through experiments in a grid-world environment and the classical mountain car environment. Our results show that RRPO achieves robust and safe performance under model uncertainties, outperforming the original CRPO method that may fail to maintain constraint satisfaction in the worst-case scenario.
\end{enumerate}

\subsection{Related Work}
\paragraph{Robust Constrained RL}
Here, we mainly explore the  existing literature regarding robust constrained RL. In \Cref{appendix:related_work}, we provide other related work. Robust constrained RL considers the problem of optimizing performance while satisfying constraints in the worst-case scenario. Although robust RL and constrained RL have each been extensively studied, fewer works address their intersection. In \citet{russel2020robust}, the authors investigate robust constrained RL and propose a heuristic approach that estimates robust value functions and employs a standard policy gradient method \citep{sutton1999policy}, substituting the nominal value function with the robust one. However, as \citet{wang2022robust} points out, this approach overlooks how the worst-case transition kernel depends on the policy, resulting in updates that do not correspond to actual gradients of the robust value function and thus lack theoretical convergence guarantees. To remedy this, \citet{wang2022robust} introduces a robust primal-dual algorithm for solving robust constrained RL problems. However, this method assumes the strong duality, which we will show later, generally does not hold in robust constrained RL. Several other studies also examine the strong duality of robust constrained RL problems: \citet{ghosh2024sample} points out that the standard routine in proving the strong duality of constrained RL problems \citep{panaganti2021sample} cannot hold in the robust case. \citet{zhang2024distributionally} proves the strong duality by considering a different policy space, which is different from the space considered in \citet{paternain2019constrained}. We include further discussion on it in \Cref{appendix:strong_duality}. {Complementing these works, \citet{he2025sample} tackle the online distributionally robust off‑dynamics setting, and present the ORBIT algorithm that attains sublinear regret with matching upper and lower sample‑complexity bounds under general $f$‑divergence uncertainty sets.}

{

The uncertainty set captures the discrepancy between training and deployment environments. In our work, we mainly focus on the $(s,a)$-rectangular uncertainty set \citep{Iyengar2005,Nilim2005}, where the transition uncertainty is modeled independently for each state-action pair. As shown by \citet{lu2024distributionally,kumar2023policy,wang2023policy,wang2021online}, this structure induced by a specific norm or divergence usually enables efficient solution methods through robust value iteration and linear programming, preserving the tractability of dynamic programming. While our work is also potential to be extended to other settings when a valid robust policy evaluation algorithm presents, including: 

\paragraph{$s$-Uncertainty Sets}  The $s$-rectangular uncertainty set \citep{li2022first,li2023rectangularity,kumar2023policy} relaxes independence by allowing nature to choose transitions jointly across all actions at a given state. The widely known results include the robust policy gradient formula \citep{li2022first} and the closed-form solution for $p$-norm uncertainty sets \citep{kumar2023policy}. These results make it possible to solve the robust value function more efficiently.

\paragraph{$d$-Uncertainty Sets}  More recently, the $d$-rectangular uncertainty set \citep{ma2022distributionally} has been proposed for linear MDPs to further generalize the existing concepts of $(s,a)$-rectangular uncertainty sets, where the ambiguity is structured in a low-dimensional decision space, enabling scalable robust value iteration in high-dimensional settings. Representative works include \citet{blanchet2023double,liu2024distributionally,liu2024upper,liu2025linear}. Remarkably, this setting has also been widely explored in the offline RL setting \citep{tang2024robust,liu2024minimax}. These results make it possible to extend our results potentially to be able to handle more complicated robust MDPs with the function approximation.

Additionally, we discuss our connection to other setting in the robust reinforcement learning and constrained reinforcement learning.
\paragraph{Robust RL with Reward and State Uncertainty}
Modern robust RL tackles misspecified or learned rewards by optimizing worst-case or risk-aware returns. Early formulations blending nominal and worst-case objectives \citep{Xu2006,Delage2010} have been extended by coupling-aware uncertainty sets that avoid excessive conservatism \citep{Mannor2012}. Recent work uses Bayesian reward ensembles and CVaR objectives for preference-based or RLHF settings, giving reliability under noisy learned rewards \citep{Brown2020}.
{Newest advances} address uncertainty \emph{inside} the reward or state signal itself.
Seeing-is-Not-Believing RSC-MDPs explicitly model spurious correlations in the \emph{state} and learn policies robust to such confounding \citep{ding2023spurious}.
Mirror-Descent inverse RL adds robustness directly to the \emph{learned reward} via adversarial mirror steps with $O(1/T)$ regret \citep{han2022robust}.
Distributional Reward Estimation builds a reward-uncertainty distribution whose risk-sensitive aggregation stabilises multi-agent training \citep{hu2022distributional}.
Moreover, robust MDPs cast dynamics error as a zero-sum game; rectangular ambiguity sets admit efficient robust DP with deterministic optimal policies \citep{Iyengar2005,Nilim2005}. Follow-ups loosen rectangularity to retain tractability while reducing pessimism \citep{Wiesemann2013,Goyal2023}. Practical algorithms combine adversarial disturbances or domain randomization with policy learning to harden policies against dynamics shifts \citep{Pinto2017}.

\paragraph{RL with Hard Constraints}
CMDP theory guarantees optimal stationary policies and LP or primal–dual solutions \citep{Altman1999,Borkar2005}. Deep RL implementations such as CPO \citep{Achiam2017} and RCPO \citep{Tessler2019} enforce budget or safety limits during learning. 
Runtime \emph{shielding} synthesised from temporal-logic specifications now guarantees safety even under partial observability \citep{carr2023shielding}.
Quantile-Constrained RL enforces outage-probability bounds by constraining the cost distribution’s tail rather than its expectation \citep{jung2022quantile}.
Trust-Region Safe Distributional Actor–Critic simultaneously handles multiple constraints with distributional critics and provable convergence \citep{kim2023sdac}.
Model-predictive safety filters based on learned control-barrier functions enforce hard state constraints in unknown stochastic systems while optimizing performance \citep{wang2023barrier}.
Together these methods wrap (or jointly train) base policies to \emph{guarantee} constraint satisfaction without relying on soft penalties.

\paragraph{Safe RL}
Safe exploration confines learning to recoverable states \citep{Moldovan2012} or certified Lyapunov safe sets \citep{Berkenkamp2017}. Action-level intervention layers correct unsafe commands on the fly \citep{Dalal2018}, while risk-sensitive criteria such as CVaR directly penalize catastrophic tails \citep{Chow2018}. These threads, alongside recent hard-constraint and shielding techniques above, provide complementary, constraint-respecting alternatives that our revised Related-Work section now covers.

\paragraph{Robust Constrained Multi-Agent Systems}
An important extension of robust constrained RL is its application to multi-agent settings. The concept of robustness has been generalized to multi-agent reinforcement learning (MARL) \citep{zhang2020robust,ma2023decentralized,jiao2024minimax}, particularly in addressing state uncertainty \citep{he2023robust_tmlr,han2024what}.  Notably, \citet{he2023robust} were the first to jointly consider robustness and constraints in a real-world scenario, proposing the ROCOMA algorithm, which achieves efficient EV rebalancing under transition kernel uncertainty. Many topics in constrained robust MARL remains unexplored, making it a promising direction with significant potential for further research.

\section{Preliminaries and Problem Formulation}
In this section, we formalize our problem setting in the context of online robust constrained RL, where the transition probabilities are unknown. We re-use an existing robust policy evaluation algorithm to estimate the approximate value function.}

\subsection{Robust MDPs}
A \emph{Robust Markov Decision Process} (Robust MDP) is defined by the tuple $(\mathcal{S}, \mathcal{A}, \mathcal{P}, r, \gamma)$, where $\mathcal{S}$ is a finite state space, $\mathcal{A}$ is a finite action space, $\mathcal{P}$ represents the uncertainty set of transition probabilities with $\Delta(\mathcal{S})$ denoting the probability simplex over $\mathcal{S}$, $r: \mathcal{S} \times \mathcal{A} \rightarrow \mathbb{R}$ is the reward function, $\gamma \in [0,1)$ is the discount factor. We denote $\mu \in \Delta(\mathcal{S})$ as the initial state distribution.

In an robust MDP, the transition probabilities are not fixed but belong to an uncertainty set; usually, the uncertainty set $\mathcal{P}$ is defined as the $s$-rectangular set \citep{derman2021twice, wang2023policy, Wiesemann2013, kumar2023policy}  
$$\mathcal{P}:= \times_{s\in \calS} \mathcal{P}_s,$$ 
or $(s,a)$-rectangular set \citep{Wiesemann2013, kumar2023policy}  
$$\mathcal{P}:= \times_{(s,a)\in \calS \times \calA} \mathcal{P}_{(s,a)}.$$ 
Here, instead of assuming a specific type of uncertainty set as in many existing literature \citep{wang2021online, wang2022robust}, we work on  general uncertainty sets but simply assume that the robust value function over these uncertainty set is computationally available. Notably, for many well-known uncertainty sets, such as the $p$-norm \citep{kumar2023policy}, IPM \citep{zhou2024natural}, and R-contamination \citep{wang2021online} uncertainty set, the robust value function can be efficiently calculated without hurting the sample complexity. 

Let the policy $\pi: \mathcal{S} \rightarrow \Delta(\mathcal{A})$ map each state to a probability distribution over actions. In robust RL, the robust value function $V^\pi(s)$ under policy $\pi$ starting from state $s$ is defined as the worst-case expected discounted cumulative reward:
\begin{align*}
V^\pi(s) = \inf_{P \in \mathcal{P}} \mathbb{E}_{\pi, P}\Bigg[ \sum_{t=0}^\infty \gamma^t r(s_t, a_t) \,\bigg|\, s_0 = s \Bigg],
\end{align*}
where the expectation is taken over the trajectories generated by following policy $\pi$, with $a_t \sim \pi(\cdot \mid s_t)$ and $s_{t+1} \sim P(\cdot \mid s_t, a_t)$ for $P \in \mathcal{P}$. The objective is to find an optimal policy $\pi^*$ that maximizes the worst-case expected cumulative reward from the initial state distribution $\mu$:
\begin{align*}
V^{\pi^*}(\mu) = \max_{\pi} V^\pi(\mu),  
\end{align*}
where $V^\pi(\mu) := \mathbb{E}_{s \sim \mu}[V^\pi(s)]$.

\subsection{Robust Constrained MDPs}
In many applications, it is essential to optimize the reward while satisfying certain constraints, even under model uncertainty. \emph{Constrained robust MDPs} \citep{wang2022robust,zhang2024distributionally, sunconstrained, ghosh2024sample} extend the robust MDP framework by incorporating multiple constraints.

Let there be $I$ constraint reward functions $r_i: \mathcal{S} \times \mathcal{A} \rightarrow \mathbb{R}$ for $i = 1, 2, \dots, I$. The robust expected cumulative reward under policy $\pi$ for constraint $i$ is given by:
\[
V_i^\pi(s) = \inf_{P \in \mathcal{P}} \mathbb{E}_{\pi, P}\left[ \sum_{t=0}^\infty \gamma^t r_i(s_t, a_t) \,\bigg|\, s_0 = s \right],
\]
and $V_i^\pi(\mu) = \mathbb{E}_{s \sim \mu}[V_i^\pi(s)]$ is the robust expected cumulative cost from the initial distribution $\mu$.

The constrained robust MDP aims to find a policy that maximizes the worst-case reward while ensuring that each constraint is satisfied under the worst-case transition dynamics:
\begin{align}
\max_{\pi} &\quad V_0^\pi(\mu) \label{eq:original_problem} \\
\text{s.t.} &\quad V_i^\pi(\mu) \geq d_i, \quad \text{for } i = 1, 2, \dots, I, \nonumber
\end{align}
where $V_0^\pi(\mu)$ denotes the robust expected cumulative reward, and $d_i$ are the specified thresholds for the constraints. That is, a constrained robust MDP is defined by the tuple $(\calS, \calA, \calP, \{ r_i\}_{i=0}^I, \{ d_i\}_{i=1}^I, \gamma)$, where $\{ r_i\}_{i=0}^I$ and $\{ d_i\}_{i=1}^I$ extend the original robust MDP to include these constraint reward function $r_i$ and the threshold $d_i$. 

\subsection{Duality Gap of Robust Constrained MDPs}
In constrained optimization, the concept of duality plays a pivotal role in formulating and solving problems \citep{boyd2004convex,bertsekas2003convex}. The \emph{duality gap} is the difference between the optimal values of the primal problem and its dual. When this gap is zero, we say that \emph{strong duality} holds, allowing the primal and dual problems to have the same optimal value. This property is instrumental in many optimization algorithms, particularly in convex optimization, where it enables efficient computation of optimal solutions via dual methods. For the constrained robust MDP defined earlier, we incorporate the constraints into the optimization objective, formulating the \emph{Lagrangian} of the  constrained robust RMDP. The Lagrangian combines the objective function and the constraints using Lagrange multipliers $\lambda = (\lambda_1, \lambda_2, \dots, \lambda_I) \geq 0$:
\begin{equation}
\mathcal{L}(\pi, \lambda) = V_0^\pi(\mu) - \sum_{i=1}^I \lambda_i \left( d_i - V_i^\pi(\mu) \right).
\label{eq:lagrangian}
\end{equation}
In this formulation,  $\mathcal{L}(\pi, \lambda)$ is the Lagrangian function, and $\lambda_i \geq 0$ are the Lagrange multipliers associated with the constraints. The \emph{primal problem} is defined by maximizing over $\pi$, after  minimizing the Lagrangian over $\lambda \geq 0$. That is,
\begin{equation}
\max_{\pi}  \min_{\lambda \geq 0} \mathcal{L}(\pi, \lambda).
\label{eq:primal_problem}
\end{equation}
The \emph{dual problem} is then obtained by minimizing the Lagrangian over $\lambda \geq 0$, after maximizing over $\pi$. Specifically, the dual problem is:
\begin{equation}
\min_{\lambda \geq 0} \max_{\pi} \mathcal{L}(\pi, \lambda).
\label{eq:dual_problem}
\end{equation}
The \emph{duality gap} $\mathscr{D}$ is defined as the difference between the optimal value of the primal problem and the optimal value of the dual problem.
\begin{definition}[Duality gap of robust constrained MDPs]\label{def:duality_gap}
    Let $\mathcal{M}:=(\calS, \calA, \calP, \{ r_i\}_{i=0}^I, \{ d_i\}_{i=1}^I, \gamma)$ be a robust constrained MDP. The \emph{duality gap} $\mathscr{D}$ of $\mathcal{M}$ is defined as  
\begin{equation}
\mathscr{D} := \left[ \max_{\pi} \min_{\lambda \geq 0} \mathcal{L}(\pi, \lambda) \right] - \left[ \min_{\lambda \geq 0} \max_{\pi} \mathcal{L}(\pi, \lambda) \right],
\label{eq:duality_gap}
\end{equation}
where $\mathcal{L}$ is the Lagrangian function of $\mathcal{M}$ defined by \Cref{eq:lagrangian}.
\end{definition}
It has been widely known that, in standard constrained MDPs without robustness considerations, under certain regularity conditions, strong duality holds \citep{Altman1999}. This means that the duality gap $\mathscr{D}$ is zero, and the optimal value of the primal problem equals that of the dual problem. This property allows us to use primal-dual algorithms effectively to find optimal policies that satisfy the constraints. However, in the next section, we will show that the constrained robust MDPs may not have such nice property, which presents a significant challenge for solving  constrained robust RL problems. 

\section{Constrained Robust RL Has Non-Zero Duality Gap} \label{sec:counterexample}
In this section, we present a counterexample demonstrating that the duality gap in robust constrained MDPs (\Cref{def:duality_gap}) can be strictly positive. % , highlighting a fundamental challenge in applying traditional primal-dual algorithms to these problems.

\begin{theorem}\label{thm:counterexample}
There exists a constrained robust MDP such that its duality gap is strictly positive.
\end{theorem}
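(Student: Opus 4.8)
The plan is to produce an explicit robust constrained MDP, evaluate the primal value $P^\star := \max_{\pi}\min_{\lambda\ge0}\calL(\pi,\lambda)$ of \eqref{eq:primal_problem} and the dual value $D^\star := \min_{\lambda\ge0}\max_{\pi}\calL(\pi,\lambda)$ of \eqref{eq:dual_problem} in closed form, and check that $D^\star > P^\star$, establishing a strictly positive duality gap $\mathscr{D}$. Conceptually, the zero-duality-gap proofs for ordinary CMDPs \citep{Altman1999,paternain2019constrained} hinge on the convexity of the set of achievable value vectors $\bigl\{(V_0^\pi(\mu),\dots,V_I^\pi(\mu))\bigr\}$, i.e.\ on the linear-programming reformulation over the occupancy-measure polytope. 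Replacing each $V_i^\pi(\mu)$ by its robust version $\inf_{P\in\calP}V_i^{\pi,P}(\mu)$ breaks this: the objective and each constraint take an \emph{independent} worst case over $P$, and, since for fixed $P$ the quantity $V_i^{\pi,P}(\mu)$ is a rational rather than affine function of $\pi$, the map $\pi\mapsto(V_0^\pi(\mu),\dots,V_I^\pi(\mu))$ need no longer be concave in each coordinate. The counterexample is engineered so that this non-concavity bites: randomizing the policy strictly \emph{helps} the adversary.

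Concretely, I would build a small robust MDP with a single binary decision state, so that the stationary policy class is exactly the segment $\{\pi_t = t\pi_1+(1-t)\pi_2 : t\in[0,1]\}$, with one constraint ($I=1$) and a simple rectangular uncertainty set on a few transition probabilities, chosen so that the robust Bellman evaluation of every $\pi_t$ admits a closed form. I would place the endpoints so that $\pi_1$ is constraint-violating with high robust reward and $\pi_2$ is feasible with low robust reward: $V_1^{\pi_1}(\mu) < d_1 < V_1^{\pi_2}(\mu)$ and $V_0^{\pi_1}(\mu) > V_0^{\pi_2}(\mu)$. The crucial property to verify is that the image curve $t\mapsto\bigl(V_0^{\pi_t}(\mu),V_1^{\pi_t}(\mu)\bigr)$ lies strictly below the chord joining its endpoints, component-wise, for $t\in(0,1)$ --- intuitively the adversary exploits the randomized action by coupling its downstream choices, so the robust discounted occupancies become a rational (convex-leaning) rather than affine function of $t$.

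Granting this, both values follow. Since $\min_{\lambda\ge0}\calL(\pi,\lambda)$ equals $V_0^\pi(\mu)$ when $\pi$ satisfies all constraints and $-\infty$ otherwise (by \eqref{eq:lagrangian}), $P^\star = \max\{V_0^{\pi_t}(\mu) : V_1^{\pi_t}(\mu)\ge d_1\}$; because $V_1^{\pi_t}(\mu)$ stays below the (decreasing) chord, every feasible mixture has $t\le t^\star$, where $t^\star\in(0,1)$ is the parameter at which the chord meets the level $V_1 = d_1$, and since $V_0^{\pi_t}(\mu)$ stays strictly below the (increasing) chord on $[0,t^\star]$ and $V_0^{\pi_t}(\mu)$ is continuous, $P^\star$ is strictly less than the chord height at $t^\star$. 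For the dual, component-wise ``mixing hurts'' gives $\calL(\pi_t,\lambda) = V_0^{\pi_t}(\mu)+\lambda\bigl(V_1^{\pi_t}(\mu)-d_1\bigr) \le t\,\calL(\pi_1,\lambda)+(1-t)\,\calL(\pi_2,\lambda)$ for every $\lambda\ge0$, so $\max_{\pi}\calL(\pi,\lambda) = \max\{\calL(\pi_1,\lambda),\calL(\pi_2,\lambda)\}$, the upper envelope of an increasing and a decreasing affine function of $\lambda$. Its minimizer is the crossover multiplier $\lambda^\star = \bigl(V_0^{\pi_1}(\mu)-V_0^{\pi_2}(\mu)\bigr)/\bigl(V_1^{\pi_2}(\mu)-V_1^{\pi_1}(\mu)\bigr) > 0$, and a one-line computation identifies the dual value $D^\star = \calL(\pi_1,\lambda^\star) = \calL(\pi_2,\lambda^\star)$ with exactly the chord height at $t^\star$. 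Hence $D^\star > P^\star$, and plugging in the concrete numbers from the constructed MDP makes the gap an explicit positive constant.

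The main obstacle is the middle step. One has to pick the transition kernels and the uncertainty set so that (i) the worst-case $P$ for each $\pi_t$ and each reward function is attained and has a tractable form, so that every $V_i^{\pi_t}(\mu)$ is computable; (ii) the ``randomization strictly hurts the worst case'' inequality genuinely holds and is quantitatively strong enough that \emph{no} feasible policy --- not merely no deterministic one --- attains reward within $D^\star - P^\star$ of the chord; and (iii) the inner maximum in the dual is controlled over the \emph{entire} policy class, which is precisely why keeping a single binary decision state is convenient, since then that class is exactly $\{\pi_t\}$. These are the three places where the non-robust argument would get convexity for free, and where the construction must instead supply it by hand.
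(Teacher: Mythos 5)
Your overall architecture coincides with the paper's: a tiny MDP with a single binary decision probability $\pi_1\in[0,1]$, one constraint, an interval uncertainty set on one transition probability, the primal evaluated as a constrained maximization of the robust objective, and the dual evaluated as the minimum over $\lambda$ of the upper envelope of the two endpoint Lagrangians, with the gap read off as the difference between the envelope's minimum at the crossover multiplier and the best feasible robust value. You also correctly identify the mechanism --- the objective and the constraint take their worst cases at \emph{different} kernels ($\underline{p}$ versus $\overline{p}$ in the paper's construction), so $\pi\mapsto(V_0^\pi,V_1^\pi)$ loses the concavity that drives strong duality in the non-robust case --- and your orientation $D^\star>P^\star$ is the one consistent with weak duality.

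The gap sits exactly at the step you flag as the main obstacle, and it is not just unfinished arithmetic: your argument for \emph{both} the primal bound and the dual envelope formula requires that \emph{both} robust value functions lie strictly below their chords along the policy segment, and no MDP with this property is exhibited. In the paper's own construction this condition fails: $\widetilde V_1^{\pi}(s_0)=\gamma\pi_1(1-\overline p)/\bigl(1-\gamma+\pi_1(1-\overline p)(\gamma-\gamma^2)\bigr)$ is strictly \emph{concave} in $\pi_1$, hence strictly above its chord, so your inequality $\mathcal{L}(\pi_t,\lambda)\le t\,\mathcal{L}(\pi_1,\lambda)+(1-t)\,\mathcal{L}(\pi_2,\lambda)$ breaks for large $\lambda$ and your feasibility bound $t\le t^\star$ breaks as well. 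Worse, this is not an accident of that example: $\widetilde V_1=\min_P V_1^{\pi,P}$ is a pointwise minimum, which preserves concavity and only introduces concave kinks, so forcing it strictly below its chord would require every active $V_1^{\pi_t,P}$ to be convex along the segment while remaining increasing toward feasibility and in tension with $V_0$ --- you have not shown such a construction exists. The paper sidesteps this entirely: it controls the inner maximum of the dual not through component-wise convexity but through a direct sign analysis of $\partial\mathcal{L}/\partial\pi_1$, showing that for every $\lambda$ the Lagrangian is either monotone or decreasing-then-increasing in $\pi_1$, so its maximum over $[0,1]$ is always attained at an endpoint; and it determines the primal feasible set exactly from the closed-form expression for $\widetilde V_1^\pi$ rather than from a chord bound. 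Replacing your chord condition with that endpoint/quasi-convexity argument, applied to the explicit two-state example, turns your plan into the paper's proof.
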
 

Here, we describe the construction of this counterexample. Then we will briefly describe the analysis of the duality gap. The full proof can be found in \Cref{appendix:counterexample}.
\subsection{Construction of the Counterexample}

Consider a simple MDP with two states, $s_0$ and $s_1$, and two actions, $a_0$ and $a_1$, as depicted in Figure \ref{fig:counterexample_mdp}. The MDP is defined as follows: \textit{(i) \textbf{Transitions}:}  The initial state is $s_0$. From state $s_1$, any action deterministically transitions back to state $s_0$. From state $s_0$, action $a_0$ deterministically remains in $s_0$. From state $s_0$, action $a_1$ transitions to $s_0$ with probability $p$ and to $s_1$ with probability $1 - p$. \textit{(ii) \textbf{Robustness}:}  There is model uncertainty in the transition probability $p$, such that $p \in [\underline{p}, \overline{p}]$, representing the uncertainty set.  \textit{(iii) \textbf{Reward}:}  The reward function for the objective is $r_0(s_0) = 1$ and $r_0(s_1) = 0$. The reward function for the constraint is $r_1(s_0) = 0$ and $r_1(s_1) = 1$. \textit{(iv) \textbf{Constraints}:}   The goal is to maximize the expected cumulative reward of $r_0$ while ensuring that the expected cumulative reward of $r_1$ meets a specified threshold $\rho$ under the worst-case transition probabilities.

% \begin{itemize}
%     \item  \textbf{Transitions}:  The initial state is $s_0$. From state $s_1$, any action deterministically transitions back to state $s_0$. From state $s_0$, action $a_0$ deterministically remains in $s_0$. From state $s_0$, action $a_1$ transitions to $s_0$ with probability $p$ and to $s_1$ with probability $1 - p$. 

%     \item \textbf{Robustness}:  There is model uncertainty in the transition probability $p$, such that $p \in [\underline{p}, \overline{p}]$, representing the uncertainty set. 

%     \item \textbf{Reward}:  The reward function for the objective is $r_0(s_0) = 1$ and $r_0(s_1) = 0$. The reward function for the constraint is $r_1(s_0) = 0$ and $r_1(s_1) = 1$.

%     \item \textbf{Constraints}:   The goal is to maximize the expected cumulative reward of $r_0$ while ensuring that the expected cumulative reward of $r_1$ meets a specified threshold $\rho$ under the worst-case transition probabilities.

% \end{itemize} 

\begin{figure}[h]
    \centering
    \begin{subfigure}[t]{0.4\linewidth}
        \caption{Transitions under action $a_0$}
        \centering
        \includegraphics[width=\linewidth]{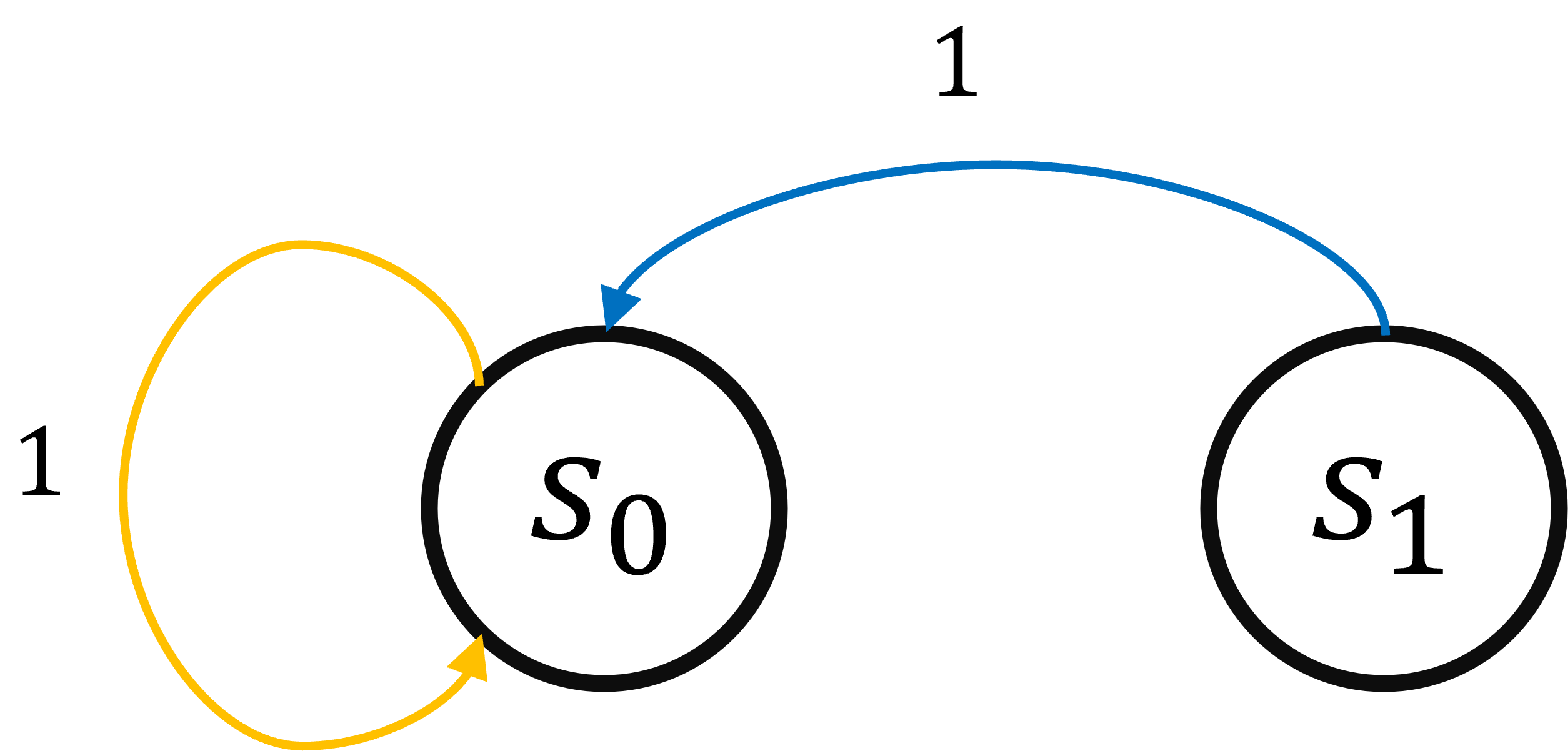}
        % \begin{tikzpicture}[->, >=stealth', auto, node distance=2cm, thick, main node/.style={circle, draw}]
        %     \node[main node] (0) {$s_0$};
        %     \node[main node] (1) [right of=0] {$s_1$};

        %     \path[every node/.style={font=\sffamily\small}]
        %         (0) edge [loop above] node {$a_0$} (0)
        %         (1) edge [bend left] node {$a_0$} (0);
        % \end{tikzpicture}
        % \label{fig:a0}
    \end{subfigure}\hfill
    \begin{subfigure}[t]{0.4\linewidth}
        \caption{Transitions under action $a_1$}
        \centering
        \includegraphics[width=\linewidth]{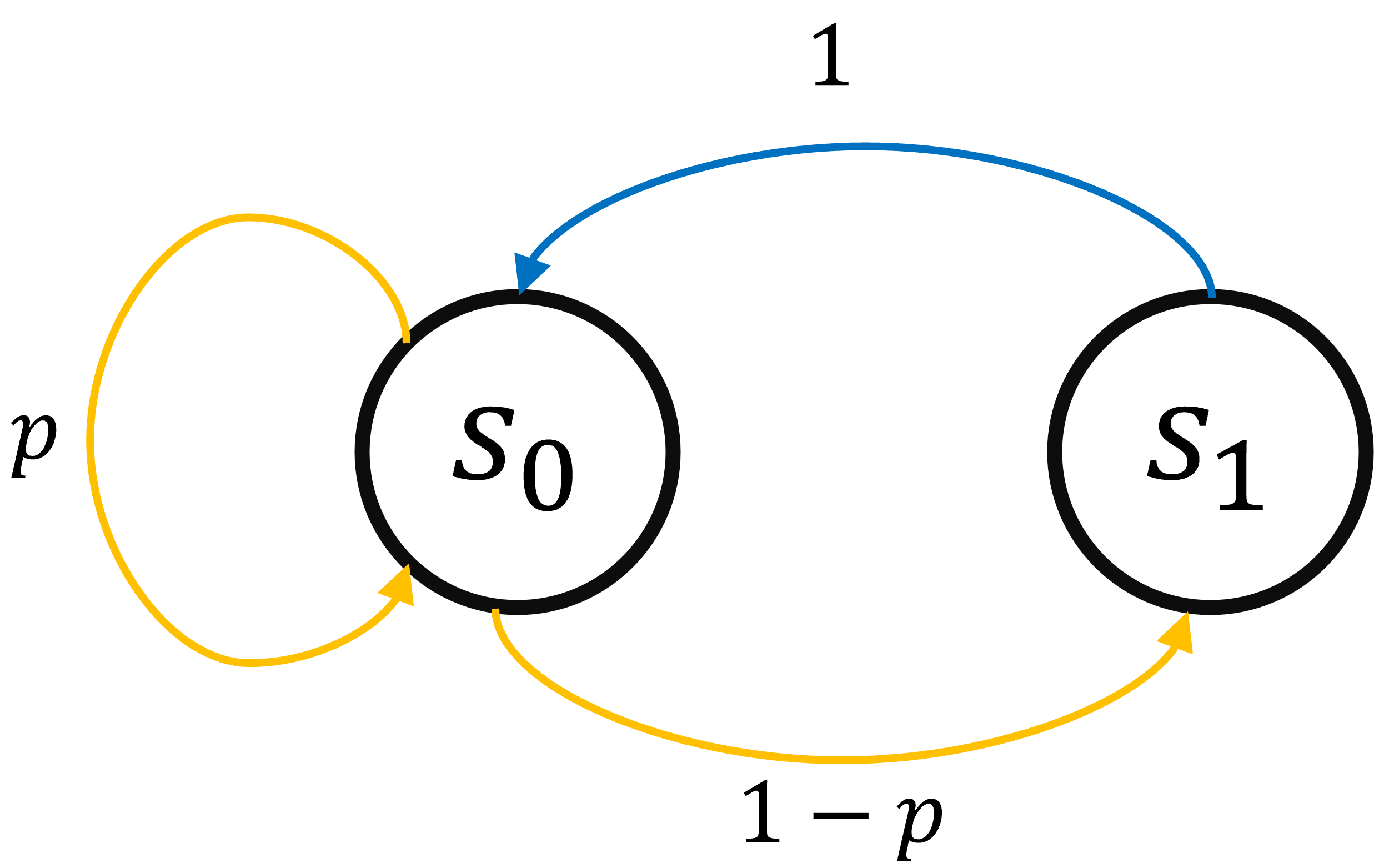}
        % \begin{tikzpicture}[->, >=stealth', auto, node distance=2cm, thick, main node/.style={circle, draw}]
        %     \node[main node] (0) {$s_0$};
        %     \node[main node] (1) [right of=0] {$s_1$};

        %     \path[every node/.style={font=\sffamily\small}]
        %         (0) edge [loop above] node {$p$} (0)
        %         (0) edge [bend left] node {$1 - p$} (1)
        %         (1) edge [bend left] node {$a_1$} (0);
        % \end{tikzpicture}
        % \label{fig:a1}
    \end{subfigure}
    \caption{The transition diagram of the MDP considered in Theorem \ref{thm:counterexample}. At state $s_1$, the agent always moves to state $s_0$ with probability 1, regardless of the action taken. At state $s_0$, the agent has a probability $p$ of staying in the current state when taking action $a_1$, and a probability of 1 of staying in state $s_0$ when taking action $a_0$. The uncertainty only occurs in the transition probability $p$; we let it vary from $[\underline{p}, \overline{p}]$.}
    \label{fig:counterexample_mdp}
\end{figure}

\subsection{Analysis of the Duality Gap}

The robust control problem can be formulated as:
\begin{align}
    \max_{\pi} & \quad \widetilde{V}_0^\pi(s_0) \\
    \text{s.t.} & \quad \widetilde{V}_1^\pi(s_0) \geq \rho, \label{eq:constraint}
\end{align}
where $\widetilde{V}_i^\pi(s_0)$ denotes the worst-case value function for reward $r_i$ starting from state $s_0$.

The associated Lagrangian is:
\begin{align*}
    \mathcal{L}(\pi, \lambda)&= \widetilde{V}^\pi_0 - \lambda (\rho - \widetilde{V}^\pi_1) = \frac{1}{1-\gamma + \pi_1 (1-\underline{p})(\gamma - \gamma^2)}   - \lambda \left( \rho -  \frac{\gamma \pi_1 (1-\overline{p})}{1-\gamma + \pi_1 (1-\overline{p})(\gamma - \gamma^2)} \right).
\end{align*}
with $\lambda \geq 0$,  {where $\pi_1:=\pi(a_1|s_0)$}. 

We proceed to analyze the Lagrangian function and compute the duality gap by evaluating both the primal and dual formulations:

\paragraph{Primal Problem} The primal optimization problem given by \Cref{eq:primal_problem} aims to find the policy $\pi$ that maximizes $\widetilde{V}_0^\pi(s_0)$ while satisfying the constraint \eqref{eq:constraint}. Here, we directly solve it and obtain
{ \begin{align*}
    &\quad \max_\pi \min_\lambda  \mathcal{L}(\pi, \lambda)  = \frac{1}{1-\gamma} - \frac{\rho \dfrac{1-\underline{p}}{1-\overline{p}}}{1- \rho(1-\gamma) + \rho(1-\gamma) \dfrac{1-\underline{p}}{1-\overline{p}}}.
\end{align*}
}

\paragraph{Dual Problem}
The dual problem given by \Cref{eq:dual_problem} involves minimizing the Lagrangian over $\lambda \geq 0$ for a fixed policy $\pi$, and then maximizing over $\pi$.  The lack of convexity in the robust setting leads to a discrepancy between the solutions obtained from the primal and dual problems.
$$\min_{\lambda }\max_\pi \mathcal{L}(\pi,\lambda) = \frac{1}{1-\gamma  } - \frac{1 - \underline{p}}{1-\overline{p}} \frac{1 + (1-\overline{p})\gamma}{1+ (1 - \underline{p}) \gamma} \rho .$$
It can be obviously observed that when the robustness is absent (i.e. $\underline{p}=\overline{p}$), the primal problem presents the same value as the dual problem. 

\paragraph{Demonstration of the Duality Gap} By selecting values for the parameters (e.g., $p = 0.5$, $\underline{p} = 0.25$, $\overline{p} = 0.75$, $\gamma = 0.5$, and $\rho=1$), we can compute the exact values of the duality gap:
\begin{align*}
    \mathscr{D} = \max_\pi \min_{\lambda \geq 0} \mathcal{L}(\pi, \lambda) - \min_{\lambda \geq 0} \max_\pi \mathcal{L}(\pi, \lambda) = \frac{21}{22}.
\end{align*}
As the result, the strong duality does not generally hold for robust constrained MDPs. 
% In face, except for several specific parameter choices, we find that the duality gap $\mathscr{D}$ is strictly positive in most of cases of this example:
% \begin{align*}
%     \max_{\pi} \min_{\lambda \geq 0} \mathcal{L}(\pi, \lambda) > \min_{\lambda \geq 0} \max_{\pi} \mathcal{L}(\pi, \lambda).
% \end{align*} 

\paragraph{Implications of a Non-Zero Duality Gap} We have just presented a counterexample showing that strong duality does not generally hold in robust constrained MDPs, which resolves an open problem regarding the strong duality of robust constrained RL problems, highlighting the importance of designing solution methods that do not rely solely on duality. In the subsequent sections, we address these challenges by proposing the primal-only approach, RRPO.

\section{Solving Robust Constrained RL with Unknown Transition Kernel}\label{sec:convergence}
The lack of strong duality in robust constrained RL presents significant challenges for traditional primal-dual optimization methods. The presence of a non-zero duality gap means that these methods may fail to find feasible and optimal policies in robust constrained settings. To overcome this obstacle, we develop a primal-only algorithm specifically designed for solving robust constrained RL with unknown transition kernel, which we call RRPO.

\begin{algorithm2e}[t]
\caption{Rectified Robust Policy Optimization}\label{alg:1}
\SetKwInOut{Input}{Input}
\SetKwInOut{Output}{return}
\Input{initial policy parameters $\theta_0$, empty set $\mathcal{N}_0$}
\BlankLine
\For{$t = 0, \cdots, T - 1$}{
    \tcp{Robust Policy Evaluation (e.g. \Cref{alg:rltd})}
    Evaluate value functions under $\pi_t:=\pi_{\theta_t}$: $\hat{Q}^{\pi_t}_i(s,a) \approx Q^{\pi_t}_i(s,a)$ for $i = 0,1,\dots,I$ \;
    Sample state-action pairs $(s_j, a_j)$ from the nominal distribution \;
    Compute value estimates $V_i^{\pi_t}$ for $i = 0, \dots, I$ \;
    \If{$V_i^{\pi_t} \geq d_i - \delta$ for all $i = 0, 1, \dots, I$}{
    \tcp{Threshold Updates}
        Add $\theta_t$ to set $\mathcal{N}_0$ and track the feasible policy achieving the largest value $\pi_{\text{out}}=\pi_t$\;  
        Update $d_0$: $d_0^{t+1} \leftarrow V_0^{\pi_t}$\; 
    }
    \ElseIf{$V_i^{\pi_t} < d_i - \delta$ for some $i = 1, \dots, I$}{
    \tcp{Constraint Rectification}
        Maximize $V_i^{\pi_t}$ using \Cref{eq:robust_npg}\; 
    }
    \ElseIf{$V_0^{\pi_t} < d_0 - \delta$}{   
    \tcp{Objective Rectification}
        Maximize $V_0^{\pi_t}$ using \Cref{eq:robust_npg}\;  
    }
}
\Output{$\pi_{\text{out}}$}
\end{algorithm2e}
\begin{algorithm2e}[t]
{ 
\caption{Robust Linear Temporal Difference \citep{zhou2024natural}}\label{alg:rltd}
\SetKwInOut{Input}{Input}\SetKwInOut{Output}{Output}
\Input{policy $\pi$, number of steps $K$, value function approximation $V_w = \psi^\top w$} 
\BlankLine
\textbf{Initialize:} $w_0$, $s_0$\;
\For{$k = 0,\dots,K-1$}{
  Sample action $a_k \sim \pi(\cdot \mid s_k)$\;
  Sample transition batch $y_{k+1}$ according to the nominal kernel $P_0(\cdot|s_k,a_k)$\; 
  \tcp{For IPM estimator: \citep{zhou2024natural}}
  Update $w_{k+1}\;\leftarrow\;
      w_k + \alpha_k\,\psi(s_k)\!
      \Bigl[
        \bigl(r(s_k,a_k) + \gamma V_{w_k}(y_{k+1}) - \gamma \delta \| w_{k,2:d}\|
        - \psi(s_k)^{\!\top}w_k
      \Bigr]$\; 
}
\Return{$w_K$}
}
\end{algorithm2e}

\subsection{Algorithm Design}
Given the primal optimization problem:
\begin{align*}
    \max_{ \pi} &\quad V_0^\pi(\mu) \\ 
    \text{s.t.} &\quad  V_i^\pi(\mu) \geq d_i, \text{ for }i=1,2,\dots I. 
\end{align*}
Here, we note that all $V^\pi_i$ ($i=0,1,\dots,I$) represent the robust value functions; when $i=0$, we call $V^\pi_0$ the objective value function, while when $i\neq 0$, we call $V^\pi_i$ the constraint value function. The core concept of the CRPO algorithm \citep{xu2021crpo} is to iteratively update the policy by taking gradients with respect to either the objective function or the constraints, depending on whether the current policy violates any constraints: 
\begin{itemize}
    \item If the constraint $V_i^\pi$ ($i=1,2,\dots,I)$ is violated, then the CRPO algorithm updates the violated constraint value function $V_i^\pi$.
    \item If all constraints are not violated, then the CRPO algorithm updates the objective value function $V_0^\pi$.
\end{itemize}
However, when constraints are near their boundaries, this method can lead to oscillations, making it difficult to track the performance of feasible policies and potentially resulting in unsafe policy outcomes when the model uncertainty presents. As the result, the algorithm cannot ``remember" the highest objective value achieved by the feasible policy. To mitigate these limitations, our RRPO algorithm adopts a reformulated approach. Rather than following the standard CRPO routine, we leverage the constrained form of the original optimization problem to employ the CRPO algorithm as follows.  We reformulate it into the following constrained maximization problem by introducing an auxiliary variable \(d_0\):
\begin{align*}
    \max_{d_0, \pi} &\quad d_0 \\ 
    \text{s.t.} &\quad V_0^\pi(\mu) \geq d_0, \\ 
    &\quad V_i^\pi(\mu) \geq d_i, \text{ for }i=1,2,\dots I.
\end{align*}

 At each iteration,  the algorithm evaluates the robust value functions \(V_i^{\pi_t}\) for all \(i = 0,1,\dots,I\). Based on these evaluations, the algorithm proceeds in one of three categories: 
 \begin{enumerate}
     \item \textbf{Threshold Updates:} If the current policy satisfies all constraints within a specified tolerance $\delta$ (that is, $V^{\pi_t}_i \geq d_i - \delta$ for $i=0,1,\dots,I$), the algorithm updates the boundary threshold by setting $d_0 \leftarrow V_0^{\pi_t}(\mu)$.  
     \item \textbf{Constraint Rectification:} If any constraint is violated beyond the tolerance $\delta$ (that is, there exists $i=1,2,\dots,I$, $V^{\pi_t}_i(\mu) > d_i - \delta$), the algorithm performs policy improvement steps to maximize the violated constraint, aiming to reduce constraint violation. 
     \item \textbf{Objective Rectification:} If the objective value $V_0^{\pi_t}(\mu)$ is less than the current best boundary threshold $d_0 - \delta$, the algorithm performs policy improvement steps to recover the objective value. 
 \end{enumerate} 

This procedure ensures that the policy maintains pursuing the feasibility while making progress towards optimizing the objective function. This procedure is summarized in \Cref{alg:1}.

{ 

\paragraph{Robust Policy Evaluation Subroutine} We consider the robust policy evaluation subroutine as a modular component decoupled from policy optimization. This procedure estimates the Q-function used in the natural policy gradient update. At each iteration, the agent interacts with the environment under the current policy $\pi_t$ to collect trajectories consisting of $(s,a,r,s')$ tuples. These samples are used to estimate the robust Q-function $Q^{\pi_t}_i(s,a)$ for both the objective ($i = 0$) and each constraint ($i = 1, \ldots, I$). For value approximation, we adopt robust temporal-difference (TD) learning methods tailored to the chosen uncertainty model (e.g., $p$-norm or IPM uncertainty). Specifically, given a nominal transition model and an uncertainty set, we compute the worst-case expected value using closed-form solutions or dual formulations from prior work (e.g., \citet{kumar2023policy,zhou2024natural,wang2021online}). These robust Q-estimates are then used to guide policy updates via a natural policy gradient step. The data collection procedure incorporated with this subroutine is further described in the \Cref{alg:1}. 
}

\subsection{Handling Uncertainty}
In our proposed algorithm design, we apply the robust natural policy gradient (Lemma 2, \citet{zhou2024natural}) to maximize the value function. The update rule of maximizing $V^{\pi_t}_i(\mu)$ is given by 
\begin{align}\label{eq:robust_npg}
  \pi_{t+1}(a|s) = \pi_t(a|s) \dfrac{\exp\big(\eta  {Q}_i^{\pi_t}(s,a)/(1-\gamma)\big)}{Z_t},
\end{align}
where the normalization factor $Z_t$ is defined as $Z_t:=\sum_{a\in\calA} \pi_t(a|s) \exp  \big(\eta {Q}_i^{\pi_t}(s,a)/(1-\gamma) \big)$. When considering the softmax parametrization $\pi_\theta(a|s):=\frac{\exp( \theta(s,a) )}{\sum_{a'}\exp( \theta(s,a') }$, it is shown by \citet{zhou2024natural} that this update rule is equivalent to 
\begin{align}
    \theta_{t+1}(s,a) = \theta_t(s,a) + \eta  {Q}_i^{\pi_t}(s,a),
\end{align}
where $\theta$ is taken over $\RR^{\calS \times \calA}$. Throughout this paper, we will use the parametric and policy representations interchangeably. In updating the policy, accurate evaluation of $Q^\pi_i(s,a)$ is critical. To achieve this, we decouple the robust value function evaluation from the policy optimization step. This modular design allows us to integrate existing robust RL methods for value function approximation effectively.

Below, we highlight several promising approaches for approximating the robust value function:  
\begin{itemize}
    \item \textbf{\(p\)-norm uncertainty set} \citep{kumar2023policy}:   For each state-action pair \((s,a)\), define
    \[
      \mathcal{U}_{(s,a)} 
      \;=\; \{\,u \in \mathbb{R}^{|\calS|} \,\mid\, \langle u,\mathbf{1}\rangle = 0,\ \|u\|_{p} \le \beta \}.
    \]
    Let \(P_0\) be the nominal transition distribution.  Then the corresponding uncertainty sets for transition probabilities are given by
    \begin{align}
        &\mathcal{P}_{(s,a)}
      \;:=\; \bigl\{\,P_0(\cdot \mid s,a) + u \;\mid\; u \in \mathcal{U}_{(s,a)}\bigr\}, \quad \text{and}\quad \mathcal{P}
      \;:=\; \times_{(s,a)\in \calS \times \calA} \mathcal{P}_{(s,a)}. \label{eq:p-norm}
    \end{align}
    As shown in Proposition 2.3 of \citet{kumar2023policy}, the standard TD-learning algorithm can be applied, with adding a correction term, to compute the robust value function under this \(p\)-norm uncertainty model.

    \item \textbf{The integral probability metric (IPM) uncertainty set} \citep{zhou2024natural}: 
    Let $\calF \subset \RR^{|\calS|}$ be a function class including the zero function. The IPM is defined as $d_\calF(p,q)=\sup_{f\in\calF} \{ p^\top f - q^\top f\}$. The IPM uncertainty set is defined as 
    \begin{align*}
        &\mathcal{P}_{(s,a)}
      \;:=\; \bigl\{\,P_{s,a}   \;\mid\;  d_\calF(P_{s,a}, P_0(\cdot|s,a) )  \leq \beta \bigr\},\quad \text{and}\quad  \mathcal{P}
      \;:=\; \times_{(s,a)\in \calS \times \calA} \mathcal{P}_{(s,a)}.
    \end{align*}
    Using the robust TD-learning algorithm (Algorithm 2, \citet{zhou2024natural}), we can compute an approximate robust value function $\hat{V}^\pi_i(\mu)$. By leveraging the relationship between the robust value function and the robust Q-function (Proposition 2.2, \citet{li2022first}), along with the analytical worst-case formulation (Proposition 1, \citet{zhou2024natural}), we can derive an approximate robust Q-function.

    % \item \textbf{The general convex rectangular uncertainty set} \citep{wang2023policy}: In many cases, the uncertainty set can be defined with a more general or complicated metric. In this case, \citet{wang2023policy}  proposes using the stochastic gradient descent to solve the worst-case transition probability and builds the global optimality of this step. By doing so, the robust Q-value function can be obtained using the contraction of robust Bellman operator \citep{li2022first}.  
\end{itemize} 
We acknowledge that other approaches also exist for approximating the robust Q-value function, such as \citet{wang2023policy,sunconstrained}; we omit these results due to the limited pages.
%, offering additional flexibility and potential improvements depending on the specific application or domain. 

\subsection{Global Convergence Guarantees}
In this subsection, we establish the global convergence guarantee for RRPO under certain assumptions. Specifically, we assume: (1) The robust policy evaluation provides sufficiently accurate estimates. (2) Under the worst-case scenario, the policy still maintains sufficient exploration.

\begin{assumption}[Policy Evaluation Accuracy]\label{assum:policy_evaluation_accuracy}
The approximate robust value functions \(\hat{Q}^{\pi}_i(s,a)\) satisfy \(|\hat{Q}^{\pi}_i(s,a) - Q^{\pi}_i(s,a)| \leq \epsilon_{\text{approx}}\) for all \(s \in \mathcal{S}\), \(a \in \mathcal{A}\), and \(i = 0, \dots, I\).
\end{assumption}

This assumption of sufficient accuracy in policy evaluation is mild and is widely adopted in the existing reinforcement learning literature \citep{wang2019neural, cayci2022finite, xu2021crpo, hong2023two}. As previously noted, this condition can be readily satisfied for specific uncertainty sets. We will discuss the value of $\epsilon_{\text{approx}}$ in the appendix. 

\begin{assumption}[Worst-Case Exploration]\label{assum:worst_case_exploration}
For any policy \(\pi\) and its worst-case transition \(P\), there exists  \(p_{\min} > 0\) such that its state visitation probability satisfies \(d_\mu^{\pi,P}(s) \geq p_{\min}\) for all \(s \in \mathcal{S}\), where \(d_\mu^{\pi,P}\) is the state visitation distribution starting from initial distribution \(\mu\) under policy \(\pi\) and transition \(P\).
\end{assumption} 
\begin{remark}
     This assumption imposes a uniform lower bound for all policies $\pi$; another widely accepted assumption is made on the finite-horizon scenario \citep{he2025sample}, which adopt a complementary requirement that caps the relative re-weighting between nominal and adversarial dynamics.  Exploring algorithms that retain our absolute-coverage guarantee while accommodating the ratio‑based perspective of \citet{he2025sample} is an interesting avenue for future work.
\end{remark} 
The exploration assumption can hold under appropriate exploration mechanisms, especially with classical exploration techniques e.g. the initial state randomization; instead of a fixed initial state, we may use a uniform distribution over the state space, ensuring the state visitation  probability is always lower bounded. 

Moreover, this assumption  \textit{de facto} weakens some existing assumptions; for example,  Theorem 2 \citep{chen2024accelerated} assumes there exists $p_{\min}>0$ such that $\inf_{s,a,s'} P(s'|s,a) \geq p_{\min}$ for all uncertainty transition $P$.  Assumption 2 \citep{zhou2024natural}   assumes that the nominal transition is supported by all transitions in the given uncertainty set. If we are given the access to the robust policy evaluation oracle (\Cref{assum:policy_evaluation_accuracy}), these widely used assumptions could be replaced with requiring the exploration on the worst-case transition instead of each transition in the uncertainty set.

{\color{black}
\begin{assumption}\label{ass:uncertaint-2}
    The diameter of the uncertainty set $c$ is given as 
    \[   \sup_{P, P' \in C} \sup_{s,a} d_{\text{TV}}( P(\cdot|s,a), P'(\cdot|s,a) ) \leq c .\] 
\end{assumption}

This bounded-diameter condition restricts the variability of transition kernels within the uncertainty set.  Intuitively, it ensures that all candidate models in $C$ are uniformly close to each other in terms of total variation distance,  so the adversarial environment cannot deviate arbitrarily from the nominal dynamics.  Such boundedness assumptions are standard in robust RL and are used by \citet{ma2023decentralized}, 
serving as a mild regularity condition to ensure well-posedness of the optimization problem.
}

% Our first result is about our boundary condition $d_0$. By its construction, we have the following convergent result:
% \begin{proposition}
%     Let $\{ d_0^t \}_{t=1}^\infty$ be the sequence generated by \Cref{alg:1}. Then as $t\to \infty$, there exists $\overline{d}_0$ such that  
%     $$\lim_{t\to \infty} d^t_0 = \overline{d}_0.$$
% \end{proposition}
% \begin{proof}
%     The proof is simple. As the sequence $\{ d_0^t \}_{t=1}^\infty$ is an non-decreasing upper-bounded sequence, it must be convergent by \citet{rudin1976principles}. 
% \end{proof}

Our main theoretical result is as follows. Here, we present a simplified version to highlight the most critical components, including the convergence rate and sample complexity. The detailed upper bound is provided in \Cref{appendix:proof}. 
\begin{theorem}\label{thm:main} 
Consider the NPG update rule \Cref{eq:robust_npg} with the learning rate $\eta=\Theta(\frac{1}{\sqrt{T}})$. Let the constraint violation tolerance  $\delta = \Theta(\frac{1}{\sqrt{T}}) {\color{black}+  \mathcal{O}(c) }$, 
and the approximation error ${\epsilon}_{\text{approx}} = \Theta(\frac{1}{\sqrt{T}})$. Under these conditions, there exists an iteration \( T \) such that the output policy \(\pi_{\text{out}}\) satisfies 
\[
  \EE[ V^*(\mu) - V^{\pi_{\text{out}}}(\mu) ]= \calO(\frac{1}{\sqrt{T}}){\color{black}+  \mathcal{O}(c)},
\]
where $V^* := V^{\pi^*}$ for the optimal feasible policy $\pi^*$ and
\[
    \max_i \bigl\{ d_i - V_i^{\pi_{\text{out}}}(\mu) \bigr\} \;\leq\; \delta.
\]  
\end{theorem}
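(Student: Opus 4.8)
The plan is to adapt the CRPO analysis of \citet{xu2021crpo} to the robust setting, with three modifications to account for (i) the robust natural policy gradient update, (ii) the auxiliary-variable reformulation that tracks the running threshold $d_0$, and (iii) the bounded diameter $c$ of the uncertainty set, which introduces an irreducible $\mathcal{O}(c)$ term. The core engine is a per-iteration policy-improvement lemma for the robust NPG update \eqref{eq:robust_npg}: for whichever index $i$ is being rectified at step $t$, one shows a performance-difference-type inequality of the form $V_i^{\pi^*}(\mu) - V_i^{\pi_t}(\mu) \le \frac{1}{\eta(1-\gamma)}\EE_{s\sim d_\mu^{\pi^*,P_t}}\big[\mathrm{KL}(\pi^*(\cdot|s)\,\|\,\pi_t(\cdot|s)) - \mathrm{KL}(\pi^*(\cdot|s)\,\|\,\pi_{t+1}(\cdot|s))\big] + \frac{\eta}{(1-\gamma)^3} + \text{(errors)}$, where $P_t$ is the worst-case kernel for $V_i$ at $\pi_t$. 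The subtlety relative to non-robust CRPO is that the worst-case kernel changes with the policy; here the robust NPG lemma (Lemma 2 of \citet{zhou2024natural}) supplies the needed robust analogue of the performance difference lemma, and \Cref{assum:worst_case_exploration} guarantees $d_\mu^{\pi^*,P_t}(s)\ge p_{\min}$ so the distribution-mismatch coefficient is controlled.

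Next I would run the standard CRPO telescoping/averaging argument. Summing the improvement inequality over the iterations assigned to each of the $I+1$ indices (objective-rectification and the $I$ constraint-rectification cases) and using nonnegativity of KL gives, after dividing by $T$, that the average over "bad" iterations of the relevant suboptimality/violation is $\mathcal{O}(\frac{1}{\eta T} + \eta + \epsilon_{\text{approx}})$. With $\eta=\Theta(1/\sqrt T)$ and $\epsilon_{\text{approx}}=\Theta(1/\sqrt T)$ this is $\mathcal{O}(1/\sqrt T)$. A pigeonhole step then produces a "good" iteration $t$ at which simultaneously every constraint satisfies $V_i^{\pi_t}(\mu)\ge d_i-\delta$ and the objective is within $\mathcal{O}(1/\sqrt T)$ of the running threshold $d_0^t$. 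Because the algorithm only updates $d_0$ upward on feasible iterates and $\pi_{\text{out}}$ tracks the feasible policy with the largest recorded objective value, one concludes $d_0^t$ (hence $V_0^{\pi_{\text{out}}}(\mu)$ up to the tolerance) is at least $V^*(\mu) - \mathcal{O}(1/\sqrt T)$; this monotone-threshold bookkeeping is exactly what the reformulation buys over vanilla CRPO and is where oscillation is ruled out. The feasibility claim $\max_i\{d_i - V_i^{\pi_{\text{out}}}(\mu)\}\le\delta$ is immediate from the acceptance test in the Threshold-Updates branch.

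Finally I would track the $c$-dependence. The bounded-diameter assumption enters in two places: the robust policy evaluation error picks up an additive $\mathcal{O}(c)$ bias (the IPM/$p$-norm correction term in \Cref{alg:rltd} scales with the uncertainty radius, and the worst-case kernel can differ from the nominal sampling kernel by at most $c$ in TV), and comparing $V_i$ along the policy's own worst-case kernel versus the comparator's worst-case kernel costs $\mathcal{O}(c/(1-\gamma)^2)$ via a simulation-lemma estimate $|V_i^{\pi,P} - V_i^{\pi,P'}|\le \frac{\gamma c}{(1-\gamma)^2}$ whenever $\sup_{s,a}d_{\text{TV}}(P(\cdot|s,a),P'(\cdot|s,a))\le c$. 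Folding these into $\delta$ forces the stated $\delta=\Theta(1/\sqrt T)+\mathcal{O}(c)$, and folding them into the optimality bound yields $\EE[V^*(\mu)-V^{\pi_{\text{out}}}(\mu)]=\mathcal{O}(1/\sqrt T)+\mathcal{O}(c)$. I expect the main obstacle to be making the robust performance-difference step fully rigorous when the worst-case kernel is policy-dependent: one must be careful that the KL telescoping is performed against a \emph{single} comparator distribution even though $P_t$ varies with $t$, which requires either invoking the robust NPG regret bound of \citet{zhou2024natural} as a black box or re-deriving it with the mismatch coefficient uniformly bounded via \Cref{assum:worst_case_exploration}; the $c$-bookkeeping is then routine but tedious.
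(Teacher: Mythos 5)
Your proposal matches the paper's proof in all essentials: a robust performance-difference lemma combined with the NPG closed form to get a per-iteration bound with KL telescoping against the single comparator distribution $\nu^* = d_\mu^{\pi^*,P^*}$, a counting/contradiction argument over the rectification index sets showing $|\mathcal{N}_0|\ge T/2$, feasibility read off from the acceptance test, and the $\mathcal{O}(c)$ term arising precisely from the discrepancy you flag — the paper isolates it as an explicit error term $\psi_t$ (the gap between evaluating the comparator's advantage under $d_\mu^{\pi^*,P_t}$ versus $d_\mu^{\pi^*,P^*}$) and bounds it by $\frac{2\ell}{(1-\gamma)^2}c$ via Lipschitzness of the visitation distribution in the kernel plus the bounded-diameter assumption, which is exactly your simulation-lemma estimate. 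The one minor bookkeeping difference is that the paper keeps $\epsilon_{\text{approx}}$ separate from the $\mathcal{O}(c)$ contribution rather than attributing part of the $c$-dependence to evaluation bias, but this does not change the argument.
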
 
\begin{remark}
     {\color{black}Compared to \citet{kitamura2025nearoptimal}, which achieves a complexity of $\mathcal{O}(\epsilon^{-4})$ for exact convergence, our results establish a faster rate of $\mathcal{O}(\epsilon^{-2})$, up to an additional constant error determined by the uncertainty set diameter $c$. Specifically, our method converges to a neighborhood of the optimal policy, with the residual value error bounded by $c$. When $c$ is sufficiently small (e.g., when the uncertainty set is chosen to be small), our approach yields a strictly faster convergence rate up to an additive $\mathcal{O}(c)$ term.}
     The full version of \Cref{thm:main} and the detailed proof are provided in   \Cref{appendix:proof}. % This result indicates the \Cref{alg:1} presents the $\calO(\epsilon^{-2})$ iteration complexity to achieve the $\epsilon$-accuracy; with specific settings on the uncertainty set,  \Cref{alg:1} presents the $\calO(\epsilon^{-4})$ sample complexity, which we will discuss later.   
\end{remark}
% \begin{proof}[Sketch of Proof]
%     The proof of \Cref{thm:main} involves several steps: We decompose the the gap between the optimal robust value function $V_i^{\pi^*}(\mu)$ and the current robust value function $V_i^{\pi_t}(\mu)$:
%     \begin{align*}
%         &V_{i_t}^{\pi^*}(\mu) - V_{i_t}^{\pi_t}(\mu) \\ 
%         \leq &  [a_t - a_{t+1}] + \calO({\epsilon}_{\text{approx}}) + \calO(\eta).
%     \end{align*}
%     Here, the first term can be canceled out by summing over $t$, the second term can be controlled by assuming the sufficiently accurate policy evaluation, the last term can be controlled using a small learning rate $\eta$.  By carefully choosing the learning rate \(\eta\) and approximation error \(\epsilon_{\text{approx}}\), we establish final convergence rate. The detailed proof is provided in Appendix \ref{appendix:proof}.
% \end{proof}

\paragraph{Discussion on the correction from the previous version of this work} In our initial submission\footnote{\url{https://openreview.net/notes/edits/attachment?id=sTVCrqgWu6&name=pdf}}, we did not include the dependence on the uncertainty diameter $c$ in the upper bound. This oversight was identified through the OpenReview discussion (\url{https://openreview.net/forum?id=7l63xwAgAW}). Specifically, the problem arose because \Cref{lemma:robust-difference} incorrectly upper bounded the advantage function when it takes negative values. In the current version, we have addressed this issue by introducing \Cref{ass:uncertaint-2}. Using the proposed RRPO (\Cref{alg:1}), there is a constant gap linearly depending on the uncertainty diameter $c$, and cannot be closed for any $\epsilon$ less than this tolerance. Also, as indicated by Theorem 4 \citep{panaganti2021sample}, if only learnt using the nominal model, one can show that the policy will incur another constant cost. Hence, the algorithm that solves CMDP (and thus, the CRPO) should have the same bound for this given constant.

\subsection{Discussion}

As shown in \Cref{thm:main}, to achieve $\epsilon$-accuracy to the optimal feasible policy $\pi^*$ {\color{black}up to a constant error term depending on the uncertainty diameter $c$}, it takes at most $\calO(\epsilon^{-2})$ iterations.  
Here, we use a specific uncertainty set to illustrate how the $\calO(\epsilon^{-4})$ sample complexity is obtained. Assume we are considering the $(s,a)$-rectangular uncertainty set defined by the $p$-norm:
\begin{align*}
    \mathcal{U}_{(s,a)} = \{ u\in \RR^{|\calS|} \mid \langle u, \mathbf{1} \rangle = 0 ,  \| u \|_p \leq \beta \}.
\end{align*}
Let $P_0$ be the nominal distribution. Then 
\begin{align*}
    &\mathcal{P}_{(s,a)}:= \{  P_0(\cdot|s,a) + u  \mid u \in  \mathcal{U}_{(s,a)}  \}, \quad \text{and}\quad \mathcal{P}:= \times_{(s,a)\in \calS \times \calA}  \mathcal{P}_{(s,a)},
\end{align*}
are the uncertainty set we consider. At each step $t$, we learn an $\epsilon$-accurate robust Q-function, which takes $\calO(\epsilon^{-2})$ samples; this complexity is guaranteed by applying its Proposition 4.7, \citet{kumar2023policy}, to the standard TD-learning algorithm. Since \Cref{alg:1} requires $T=\calO(\epsilon^{-2})$ iterations {\color{black}(to obtain an approximated optimal policy up to a $\mathcal{O}(c)$-level error)} and the $\epsilon$-accurate policy evaluation requires $K=\calO(\epsilon^{-2})$ samples, the total sample complexity is given by  $T\cdot K = \calO(\epsilon^{-4})$.

{\color{black}
\paragraph{Comparison with \citet{kitamura2025nearoptimal}} While \citet{kitamura2025nearoptimal} also employs a constrained-form optimization framework (called the \textit{epigraph} formulation), our approach is fundamentally different. (1) Their method updates the maximum constraint violation, while we adopt the original CRPO update rule with tracking the auxiliary parameter. (2) Their algorithm requires an additional binary search step, while our method preserves the simplicity of the CRPO framework without such overhead.
}

\section{Numerical Examples}\label{sec:experiments}
To better illustrate the impact of model uncertainty on the algorithm performance, especially on the worst-case feasibility, we conducted experiments comparing the proposed RRPO and the CRPO \citep{xu2021crpo}. For all experiments, we used a discount factor  $\gamma=0.99$ and the $2$-norm uncertainty set defined by \Cref{eq:p-norm}.
\subsection{The FrozenLake-Like Gridworld}
First, we consider a specific $4\times 6$ FrozenLake-like gridworld environment: The agent starts from the left-top corner $\textsf{pos}_{\text{start}}=[1,1]$ and can make four actions, 
$$\calA=\{ \text{UP},\text{DOWN},\text{LEFT},\text{RIGHT} \},$$ 
to  move to the target point $\textsf{pos}_{\text{target}}=[2,5]$. 

\begin{figure}[ht]
    \centering
    \includegraphics[width=0.5\linewidth]{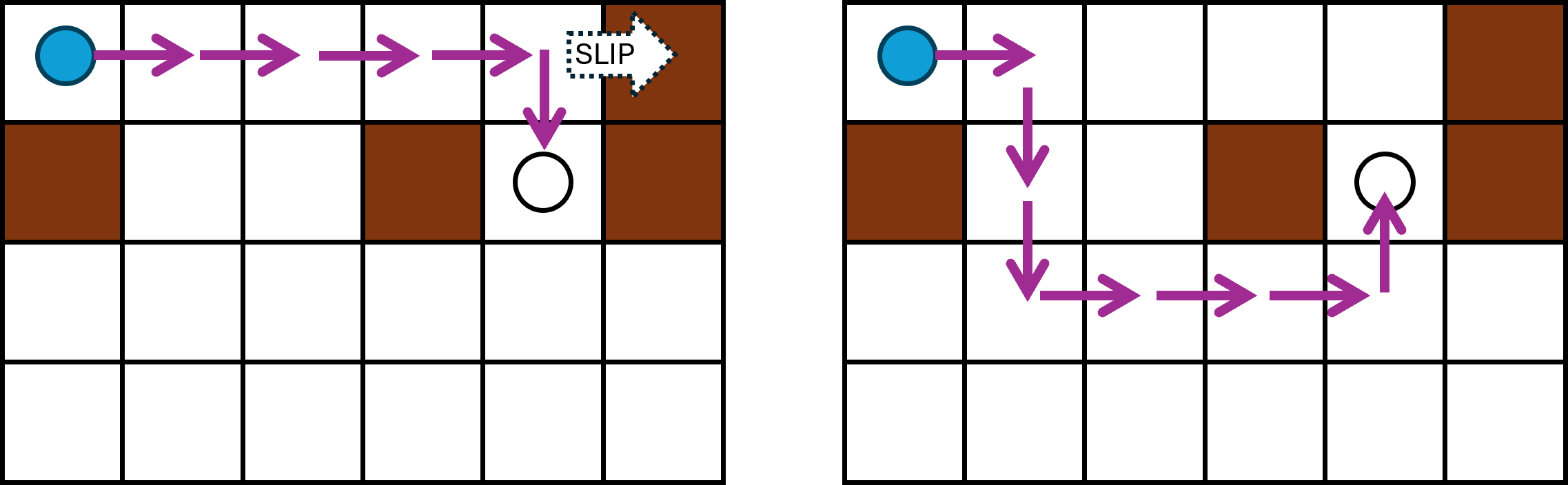} 
    \caption{Illustration of two paths to the target in the grid-world environment. The shortest path (Left) prioritizes efficiency but risks violating constraints in slippery conditions, whereas the longer path (Right) always ensures safety in the worst-case scenario. }
    \label{fig:two-paths}
\end{figure}

We define two reward functions. The main reward function $r_0$ is defined as 
\begin{align*}
    r_0(s,a,s')= \begin{cases}
        +1 & \text{if }s' \text{ is the target}\\ 
        -1 & \text{if }s' \text{ is a brown block}\\ 
        -0.1 & \text{otherwise}\\
    \end{cases}.
\end{align*}
It gives $+1$ for reaching the target, $-1$ for landing on a brown block, and $-0.1$ otherwise.   The constraint reward function $r_1$ is defined as: 
\begin{align*}
    r_1(s,a,s')= \begin{cases}
        -1 & \text{if }s' \text{ is out of the boundary}\\ 
        -1 & \text{if }s' \text{ is a brown block} \\
        0 & \text{otherwise} \\
    \end{cases}.
\end{align*}
It assigns $-1$ for stepping out of the boundary or onto a brown block, and $0$ otherwise, leading to a cost function \(c(s,a) := -\mathbb{E}[r_1(s,a,s')]\). We require \(-V^{\pi}_1(\mu) < 0.2\), ensuring the agent avoids hitting brown blocks or moves out of the boundary.  
The training environment is deterministic, where each action leads to the intended movement with probability one unless the agent hits a boundary or a brown block. When this happens, the agent's position is not changed (if it hits the boundary) or is reset to the starting position (if it hits the brown block). The test environment introduces a ``slippery'' dynamic, where every move has a probability $p$ of resulting in an unintended slip. This slippery setting mimics conditions that may not have been foreseen during training, effectively representing a worst-case scenario. Under this setting, the obstacles construct two distinct paths routing to the target, which is illustrated in  \Cref{fig:two-paths}.

We apply our proposed RRPO to solve this constrained robust RL problem, comparing it with the baseline CRPO method. As shown in \Cref{fig:curves}, our method successfully learns the safer path, while the non-robust algorithm converges to the shortest path.  

% \begin{figure}[t]
%   \centering
%   % 第一排
%   \includegraphics[width=0.45\textwidth]{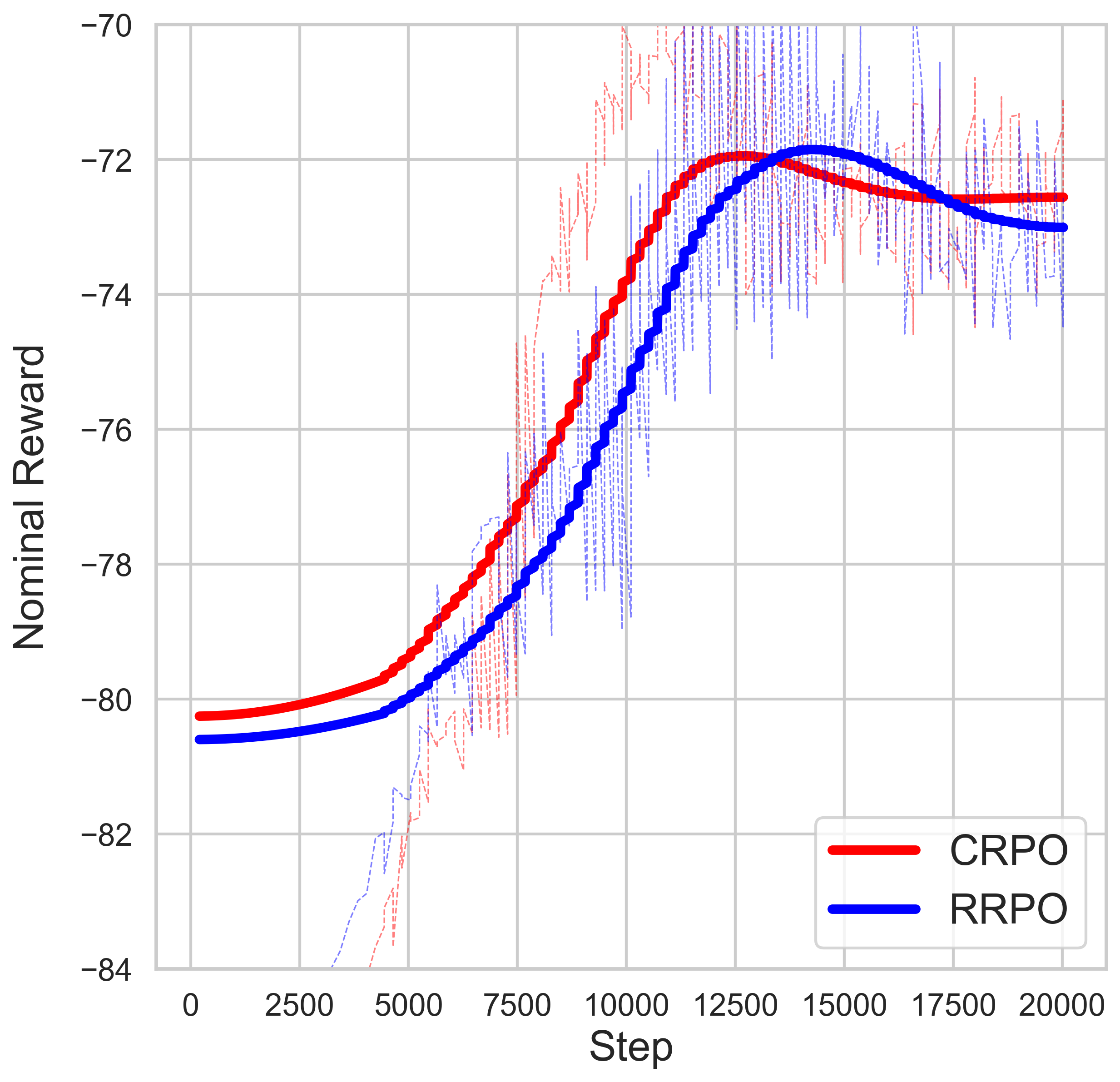}%
%   \hfill
%   \includegraphics[width=0.45\textwidth]{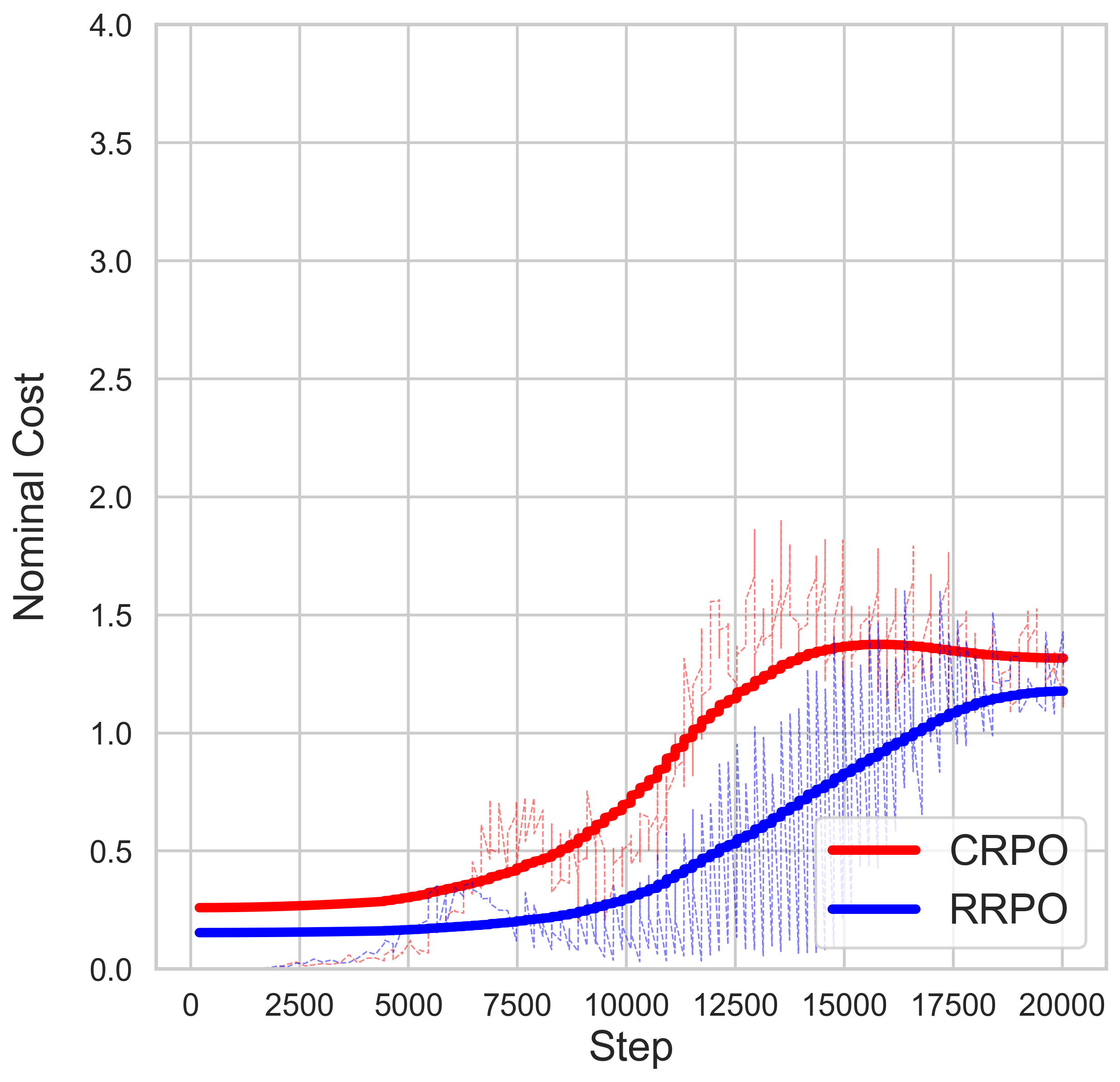}\\[1ex]
%   % 第二排
%   \includegraphics[width=0.45\textwidth]{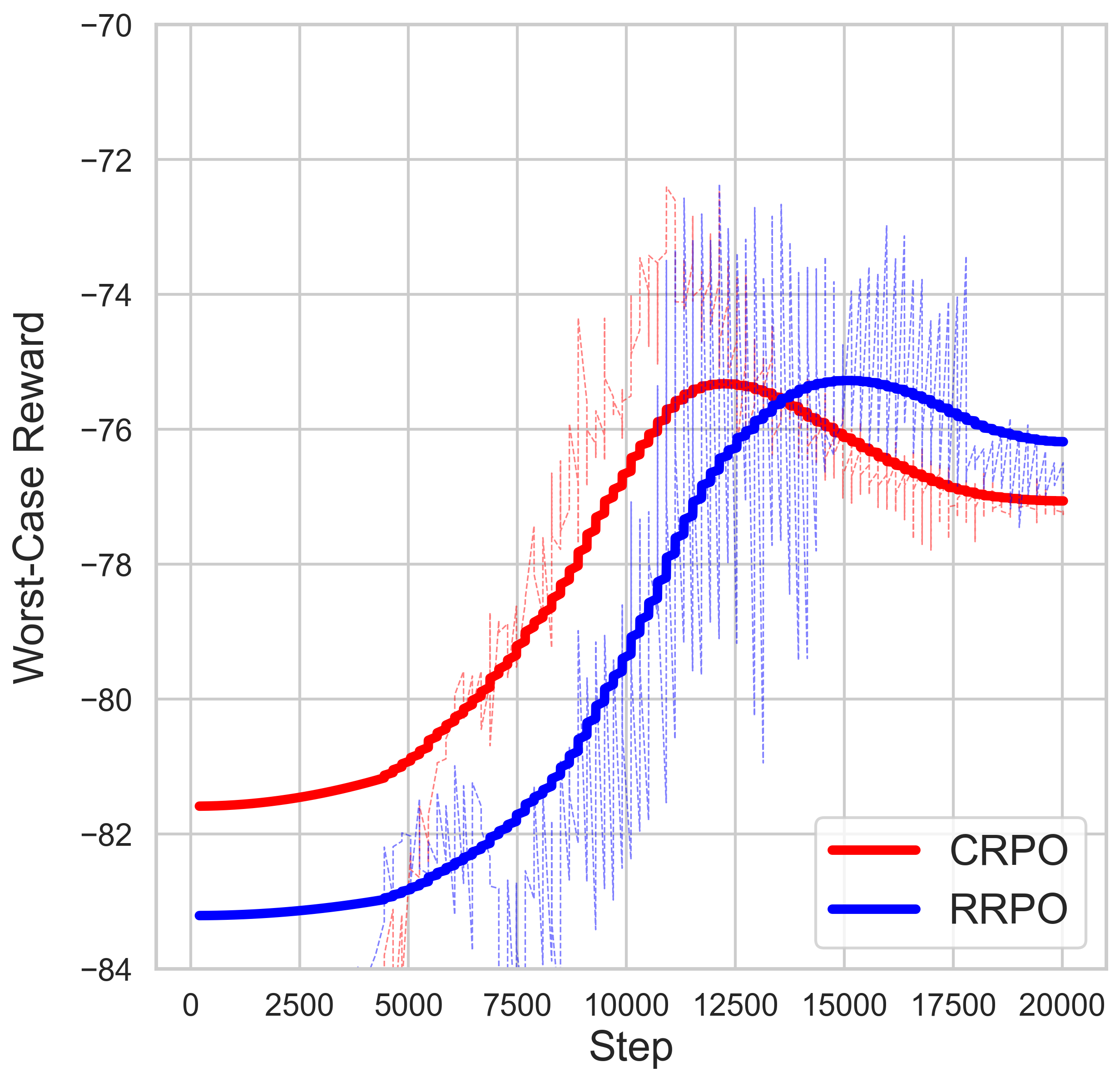}%
%   \hfill
%   \includegraphics[width=0.45\textwidth]{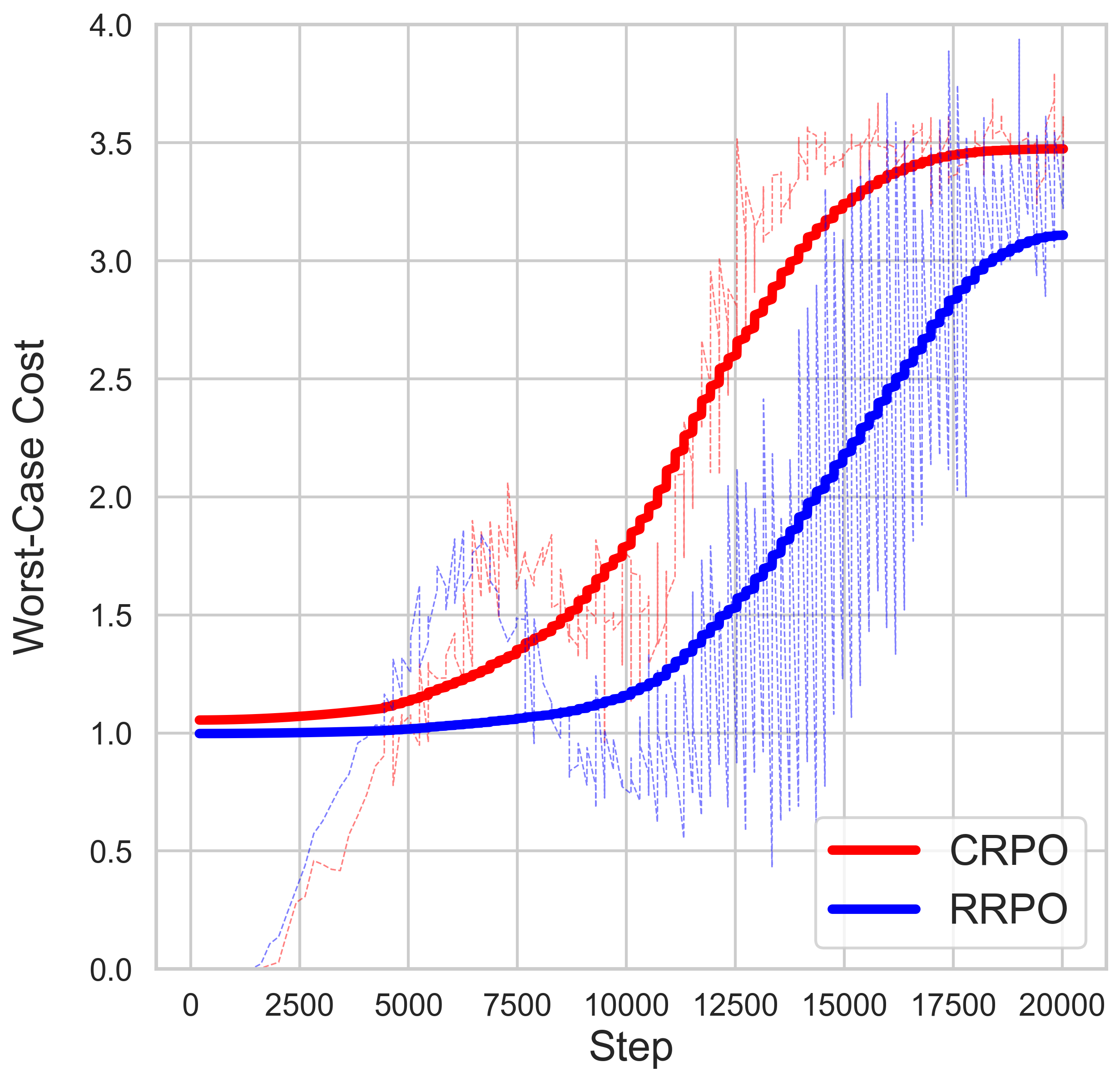}
%   \caption{Reward and cost comparison across nominal and worst-case transitions for the mountain car environment. In the nominal environment, both algorithms learn the desired strategies to reach the goal; however, in the worst-case scenario, the RRPO algorithm can learn more robust strategy to avoid exceeding the speed constraint.}
%   \label{fig:curves-car}
% \end{figure}

% \begin{figure}[t]
% \centering 
% \begin{places}{1.8\textwidth}{5}
% \place{\footnotesize Nominal}{
%   figs/nominal_return_mc.png,
%   figs/nominal_cost_mc.png 
% }
% \place{\footnotesize Worst-Case}{
%   figs/worst_case_return_mc.png,
%   figs/worst_case_cost_mc.png 
% }
% \end{places}
% \caption{Reward and cost comparison across nominal and worst-case transitions for the mountain car environment. In the nominal environment, both algorithms learn the desired strategies to reach the goal; however, in the worst-case scenario, the RRPO algorithm can learn more robust strategy to avoid exceeding the speed constraint.}
% \label{fig:curves-car}
% \end{figure} 

\begin{figure}[t]
  \centering
  % 子图 (a)：Gridworld 环境
  \begin{subfigure}[t]{0.49\textwidth}
    \centering
    % 第一排
    \includegraphics[width=0.45\textwidth]{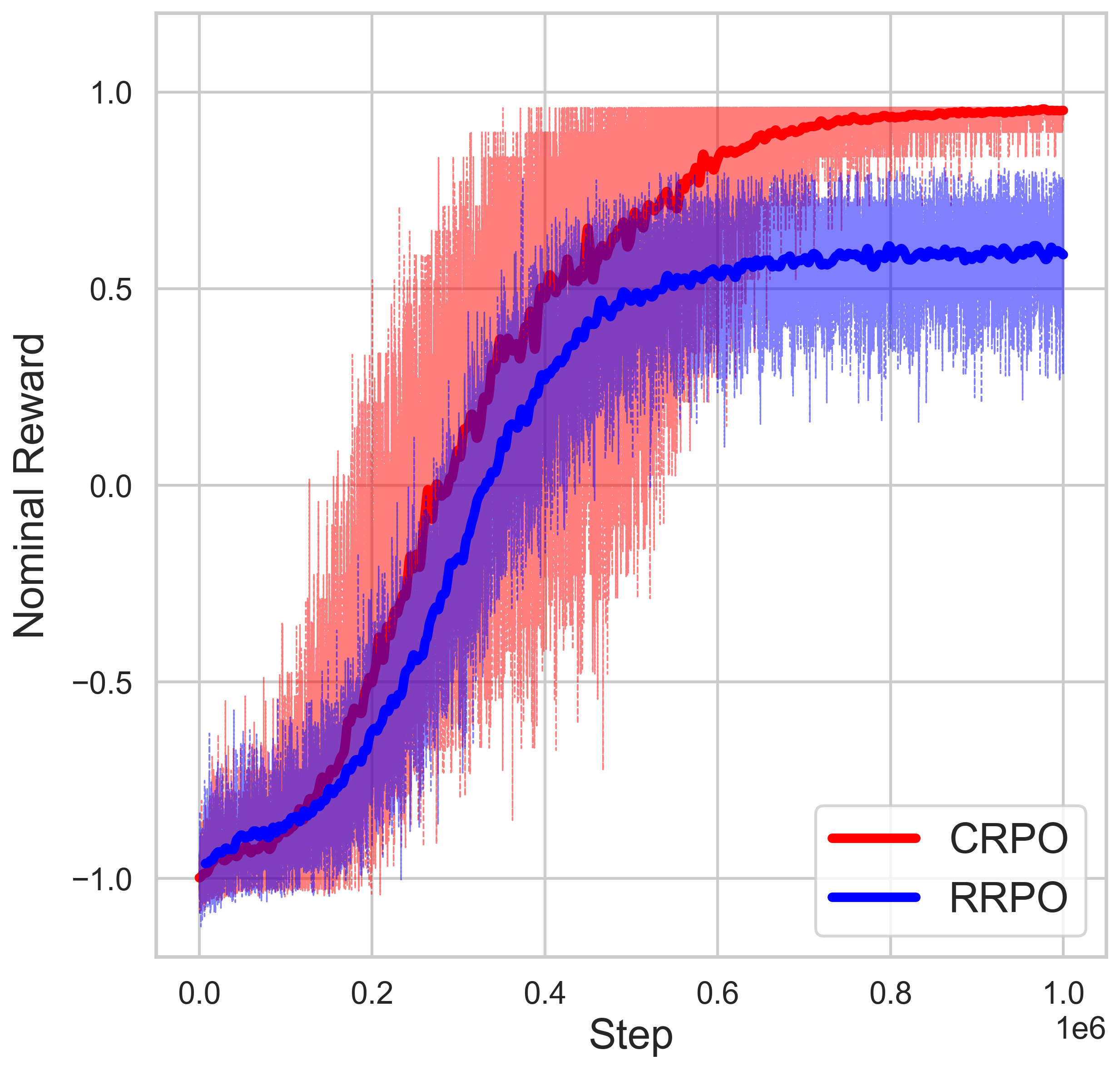}%
    \hfill
    \includegraphics[width=0.45\textwidth]{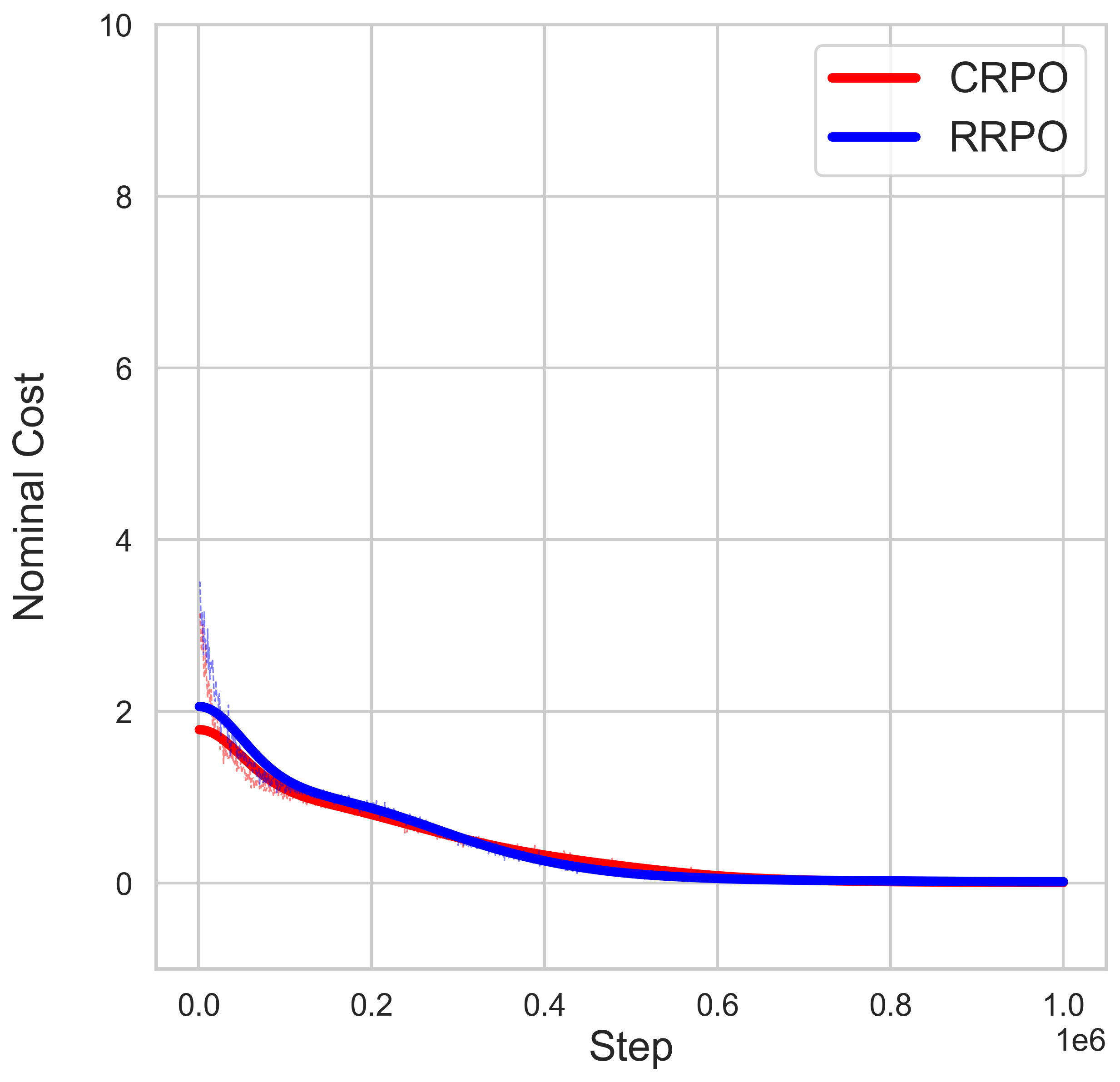}\\[1ex]
    % 第二排
    \includegraphics[width=0.45\textwidth]{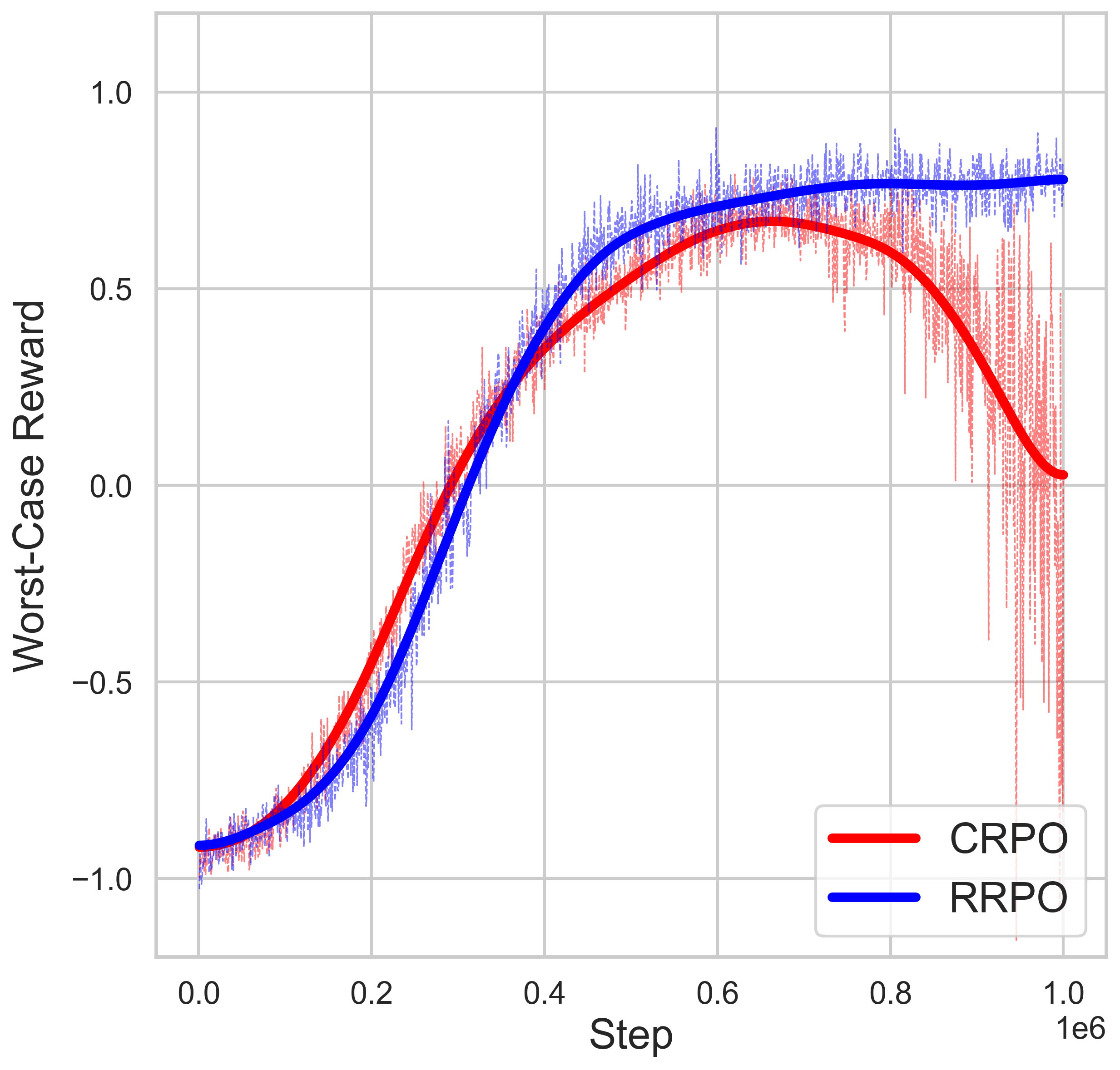}%
    \hfill
    \includegraphics[width=0.45\textwidth]{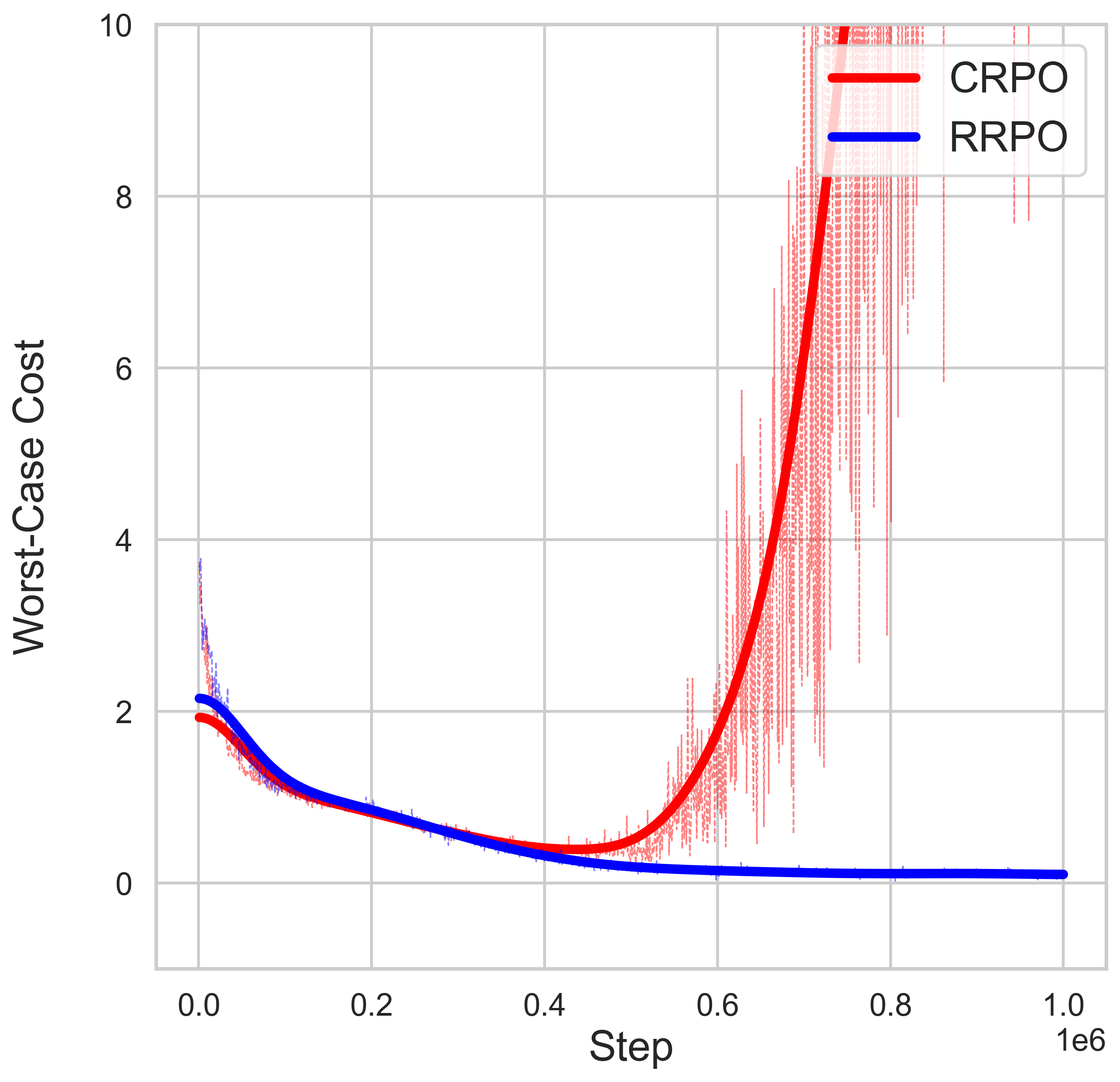}
    \caption{Illustration of two paths to the target in the grid-world environment. The shortest path (Left) prioritizes efficiency but risks violating constraints in slippery conditions, whereas the longer path (Right) always ensures safety in the worst-case scenario.}
    \label{fig:curves}
  \end{subfigure}\hfill
  \begin{subfigure}[t]{0.49\textwidth}
    \centering
    % 第一排
    \includegraphics[width=0.45\textwidth]{figs/nominal_return_mc.png}%
    \hfill
    \includegraphics[width=0.45\textwidth]{figs/nominal_cost_mc.png}\\[1ex]
    % 第二排
    \includegraphics[width=0.45\textwidth]{figs/worst_case_return_mc.png}%
    \hfill
    \includegraphics[width=0.45\textwidth]{figs/worst_case_cost_mc.png}
    \caption{Reward and cost comparison across nominal and worst-case transitions for the mountain car environment. In the nominal environment, both algorithms learn the desired strategies to reach the goal; however, in the worst-case scenario, the RRPO algorithm can learn more robust strategy to avoid exceeding the speed constraint.}
    \label{fig:curves-car}
  \end{subfigure}
\caption{Reward and cost trajectories for CRPO and RRPO under nominal and worst-case transitions in Grid World (\Cref{fig:curves}) and Mountain Car (\Cref{fig:curves-car}) environments.} 
  \label{fig:combined}
\end{figure}

\subsection{Mountain Car}
We also consider the classical Mountain Car environment from Gymnasium \citep{towers2024gymnasium} to test the performance of the proposed RRPO in the classical control problem. We use its default reward function \(r_0\), which penalizes $-1.0$ each step  and rewards $0$ if the agent reaches the goal; that is,
\begin{align*}
    r_0(s,a,s')=
    \begin{cases}
       0 & \text{if the agent reaches the goal},\\
       -1 & \text{otherwise}.
    \end{cases}
\end{align*}
To emphasize safety, we add a constraint reward function \(r_1(s,a,s')\) defined as 
\begin{align*}
    r_1(s,a,s')=
    \begin{cases}
       -1 & \text{if the car's speed exceeds 0.06},\\
        0 & \text{otherwise}.
    \end{cases}
\end{align*}
It returns $-1.0$ whenever the car's speed exceeds $0.06$ and returns $0$ otherwise, which encourages the agent to maintain a safe speed throughout its run. We also consider its cost description as $c(s,a):=- \EE_{s'}[\,r_1(s,a,s')].$  For the environment uncertainty, we perturb the ``gravity'' parameter of the Mountain Car environment. In the worst-case scenario, the gravity is increased from the nominal value $0.0025$ to $0.003$. The experiment results are shown in \Cref{fig:curves-car}; the proposed RRPO method receives much less cost in the worst-case environment.   

{

\begin{figure}[t]
    \centering
    \includegraphics[width=0.265\linewidth]{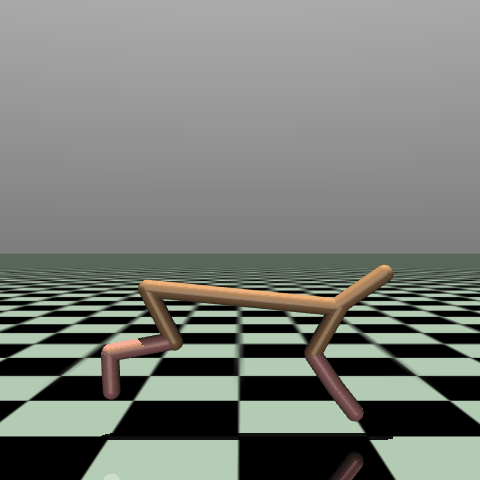}
    \includegraphics[width=0.36\linewidth]{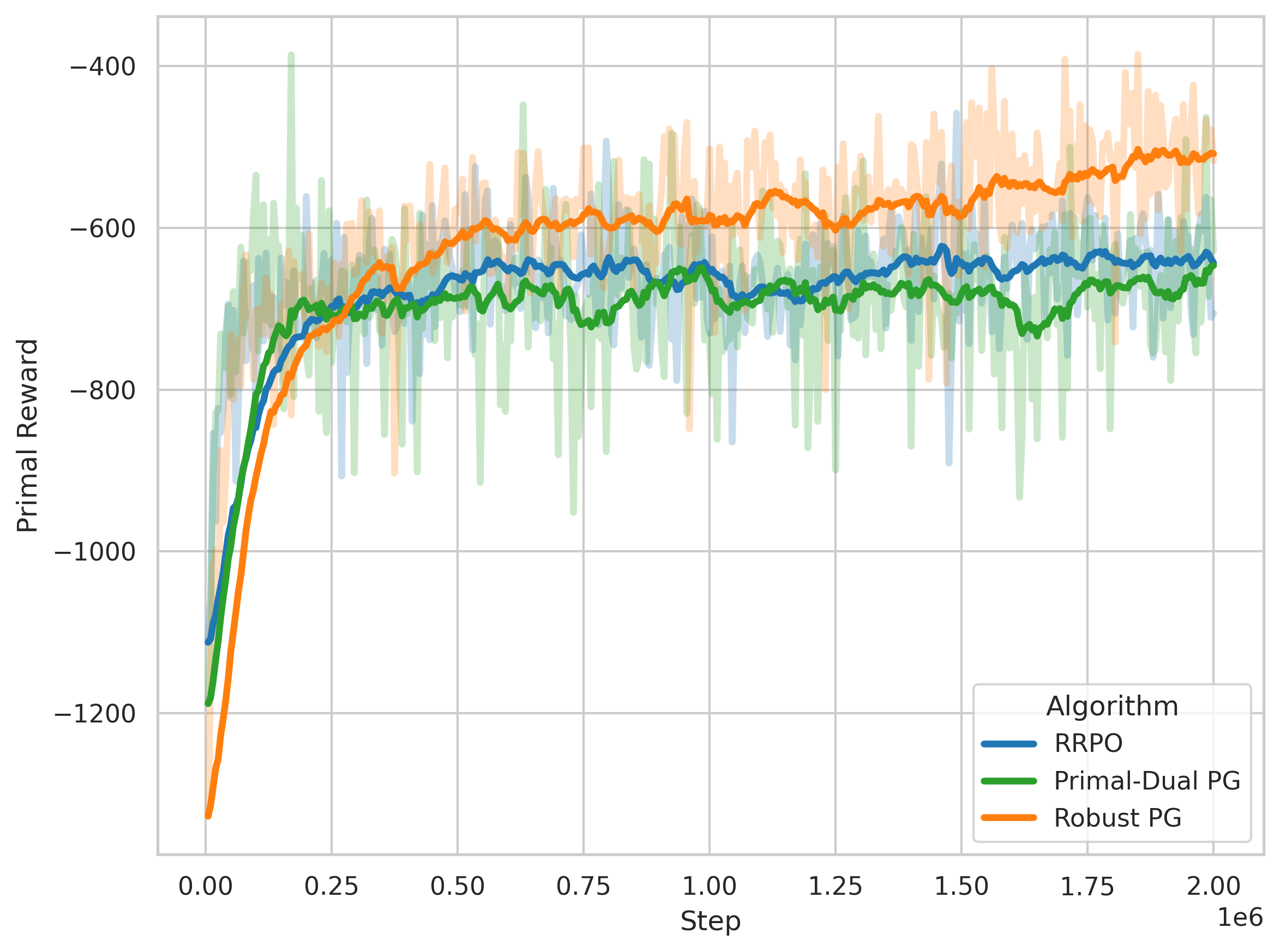}
    \includegraphics[width=0.36\linewidth]{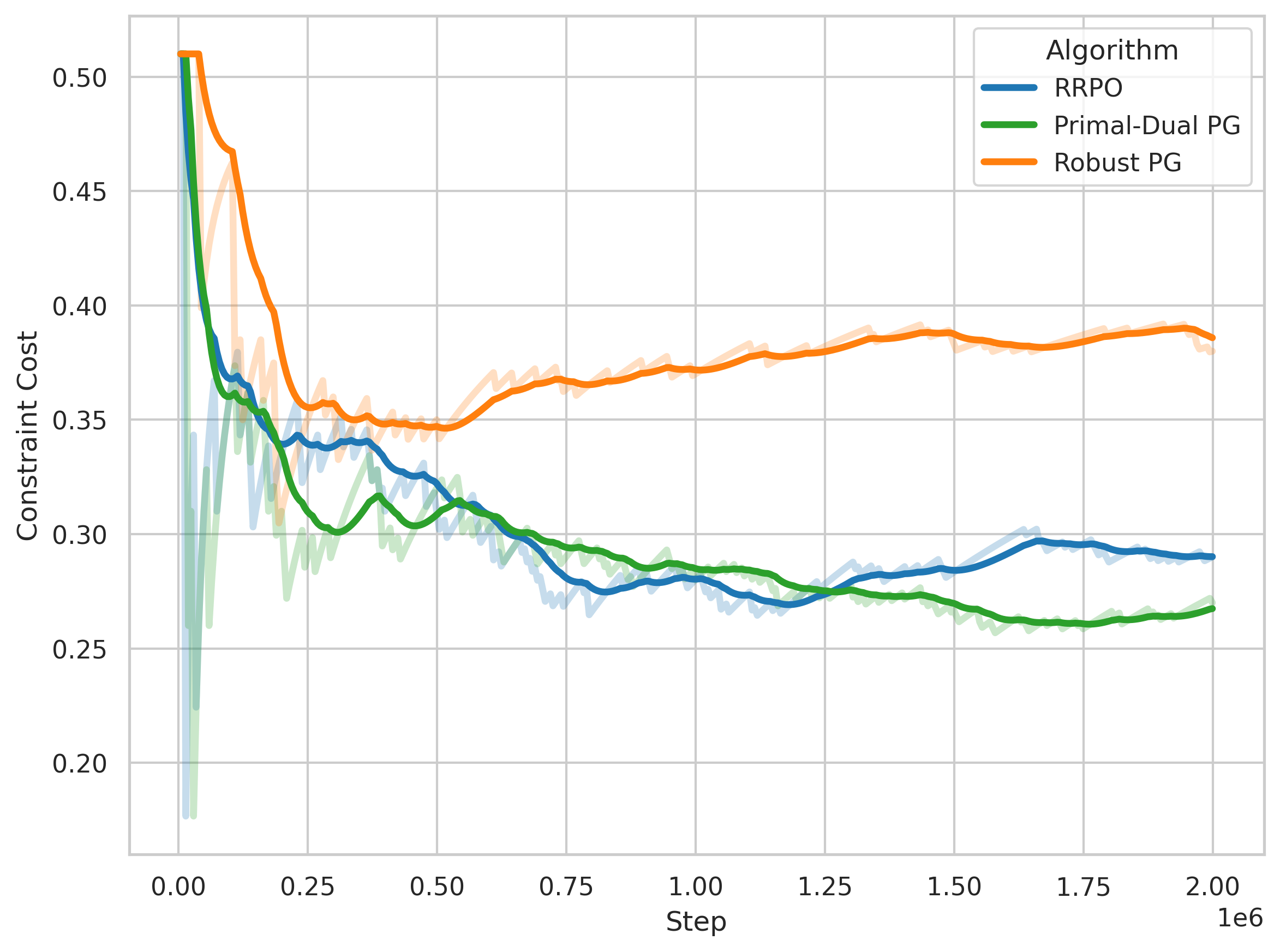}
    \caption{Comparison among our proposed RRPO algorithm and other robust RL baselines in the robust HalfCheetah environment (shown in the left panel). The standard robust policy gradient algorithm achieves the highest score but violates the energy constraint as shown in the right panel. Within the given constraint, RRPO outperforms the primal-dual approach as shown in the middle panel.}
    \label{fig:half-cheetah}
\end{figure}

\section{Extended Experiments: Robust Control}
Additionally, we compare the performance of the RRPO algorithm with other robust RL baselines, including the robust policy gradient and the robust primal-dual policy gradient methods, on the HalfCheetah-v4 environment from Robust Gymnasium \citep{gu2025robust}. We also conduct an ablation study to evaluate the impact of our modification to the original robust CRPO algorithm, demonstrating that it effectively reduces oscillations near the constraint boundary. Although the Safety MuJoCo environment \citep{gu2024balance}, also available in Robust Gymnasium, offers benchmarks with hard safety constraints, its formulation requires additional adaptation to fit our setting. Therefore, we employ a custom energy constraint that penalizes energy consumption at each step. We include the experiment details and the hyper-parameter setting in \Cref{appendix:robust-control}. 
\subsection{Comparison with Other Robust RL Algorithms}
We conducted a comparative evaluation of three robust RL algorithms on the HalfCheetah-v4 environment from Robust Gymnasium \citep{gu2025robust}: (i) Robust Policy Gradient. (ii) Robust Primal-Dual Policy Gradient. (iii) Our proposed RRPO algorithm. 

The agent learns to maximize forward locomotion speed while respecting energy consumption constraints. The reward function follows the standard HalfCheetah formulation:
\[r_0(s,a) = \text{forward\_velocity} - 0.1 \|a\|^2,\] 
encouraging rapid forward movement while penalizing excessive control effort. Crucially, we implement a separate energy constraint defined as the squared $L_2$ norm of the action vector 
\[r_1(s,a) = -\|a\|^2\]
with a threshold of $-0.25$ (note that this threshold is applied to the value function instead of the current step cost; it does not apply to the constraint cost shown in \Cref{fig:combined}). We include the detailed hyper-parameter setting in \Cref{appendix:robust-control}.

\subsection{Ablation: Robust CRPO Algorithm}
In the ablation experiment, we compare our RRPO algorithm with the \textbf{Robust version of CRPO algorithm} (that is, we replace the value function used in the CRPO algorithm with the robust value function). The only difference presented here is the rectification mechanism used in \Cref{alg:1} where the RRPO algorithm tracks the best feasible policy as the $\pi_{\text{out}}$ in the threshold update step.  We follow the same setting as described in \Cref{appendix:robust-control}. To compare both methods, we evaluate the variance of the primal reward of the output policy. In the RRPO algorithm, this variance primarily stems from inherent randomness in the policy and environment. In CRPO, however, an additional source of variance arises from the random sampling of the output policy. As a result, CRPO exhibits substantially higher variance than RRPO.

\begin{figure}[h]
    \centering
    \includegraphics[width=0.5\linewidth]{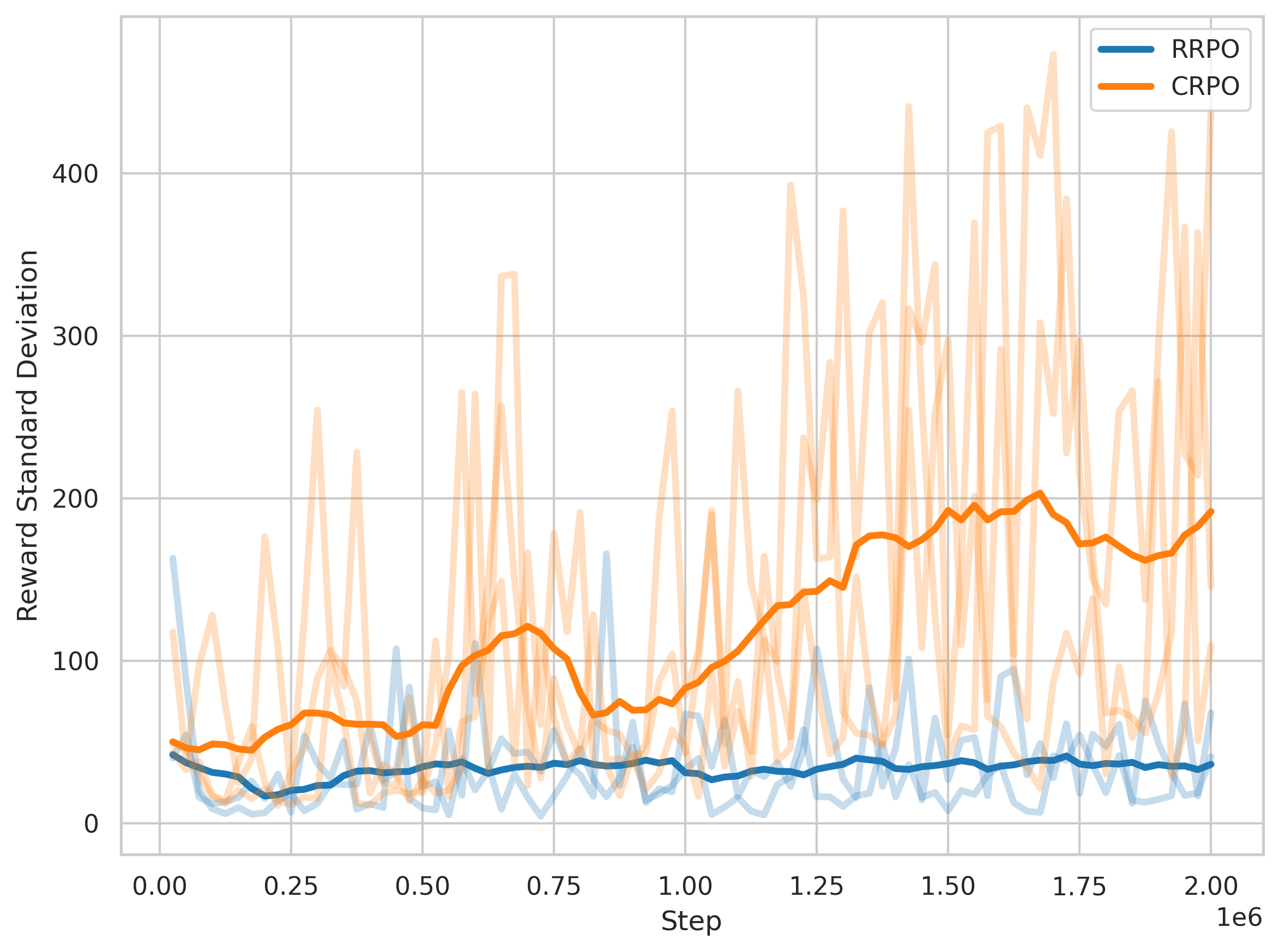}
    \caption{Comparison of the variance of the primal reward $r_0$ of the output policy from CRPO \citep{xu2021crpo} and the RRPO (\Cref{alg:1}). The CRPO method exhibits substantially higher variance than RRPO. }
    \label{fig:variance}
\end{figure}

}

\section{Conclusion}
In this paper, we investigated robust constrained RL problems and demonstrated that strong duality generally \textbf{DOES NOT}  hold in this setting, thereby limiting the effectiveness of {some methods which rely on the strong duality}. To address this challenge, we introduced  RRPO, a primal-only algorithm that directly optimizes the policy while rectifying constraint violations without relying on dual formulations. Our theoretical analysis{\color{black}, under mild boundedness and exploration assumptions,} provided convergence guarantees for RRPO, ensuring that it converges to an approximately optimal policy {\color{black}up to a $\mathcal{O}(c)$-level value function error} that satisfies the constraints within a specified tolerance under worst-case scenarios. Empirical results  validate the effectiveness of our approach.  We believe our work opens new avenues for exploring and designing non-primal-dual approaches to solve robust constrained RL problems, and potentially leads to an interesting  direction to identify when the strong duality of robust constrained RL holds.

% \subsubsection*{Broader Impact Statement}
% This work addresses the fundamental limitations of existing constrained reinforcement learning algorithms by demonstrating that strong duality does not generally hold in robust settings and proposing a novel primal-only method to overcome this challenge. The proposed RRPO algorithm offers a theoretically grounded and practically effective solution for safe decision-making under uncertainty, with potential applications in safety-critical domains such as autonomous driving, healthcare, and robotics. By explicitly accounting for model misspecification, our approach enhances the reliability and robustness of RL systems. However, care must be taken when deploying such algorithms to ensure that uncertainty sets are well-calibrated, as overly conservative models may degrade real-world performance.

% \subsubsection*{Author Contributions}
% If you'd like to, you may include a section for author contributions as is done
% in many journals. This is optional and at the discretion of the authors. Only add
% this information once your submission is accepted and deanonymized. 

\subsubsection*{Acknowledgments}
We are especially grateful to Arnob Ghosh and Toshinori Kitamura for their valuable feedback on our manuscript posted on OpenReview (\url{https://openreview.net/forum?id=7l63xwAgAW}). Their comments helped us improve the presentation and correct a technical error in the earlier version. The work of Yi Zhou was supported by the National Science Foundation under grants DMS-2134223, ECCS-2237830. Shaocong Ma, Ziyi Chen, Heng Huang are partially supported by NSF IIS 2347592, 2348169, DBI 2405416, CCF 2348306, CNS 2347617, RISE 2536663.

% Use unnumbered third level headings for the acknowledgments. All
% acknowledgments, including those to funding agencies, go at the end of the paper.
% Only add this information once your submission is accepted and deanonymized. 

\bibliography{main}

\begin{thebibliography}{98}
\providecommand{\natexlab}[1]{#1}
\providecommand{\url}[1]{\texttt{#1}}
\expandafter\ifx\csname urlstyle\endcsname\relax
  \providecommand{\doi}[1]{doi: #1}\else
  \providecommand{\doi}{doi: \begingroup \urlstyle{rm}\Url}\fi

\bibitem[Achiam et~al.(2017)Achiam, Held, Tamar, and Abbeel]{Achiam2017}
Joshua Achiam, David Held, Aviv Tamar, and Pieter Abbeel.
\newblock Constrained policy optimization.
\newblock In \emph{Proceedings of the 34th International Conference on Machine Learning (ICML)}, pp.\  22--31, 2017.

\bibitem[Altman(1999)]{Altman1999}
Eitan Altman.
\newblock \emph{Constrained {M}arkov Decision Processes}.
\newblock CRC Press, 1999.

\bibitem[Asadi et~al.(2024)Asadi, Sabach, Liu, Gottesman, and Fakoor]{asadi2024td}
Kavosh Asadi, Shoham Sabach, Yao Liu, Omer Gottesman, and Rasool Fakoor.
\newblock Td convergence: An optimization perspective.
\newblock \emph{Advances in Neural Information Processing Systems}, 36, 2024.

\bibitem[Bagnell et~al.(2001)Bagnell, Ng, and Schneider]{bagnell2001solving}
J.~Andrew Bagnell, Andrew~Y. Ng, and Jeff~G. Schneider.
\newblock Solving uncertain markov decision processes.
\newblock Technical Report CMU-RI-TR-01-25, Carnegie Mellon University, 2001.

\bibitem[Bai et~al.(2022)Bai, Bedi, Agarwal, Koppel, and Aggarwal]{bai2022achieving}
Qinbo Bai, Amrit~Singh Bedi, Mridul Agarwal, Alec Koppel, and Vaneet Aggarwal.
\newblock Achieving zero constraint violation for constrained reinforcement learning via primal-dual approach.
\newblock In \emph{Proceedings of the AAAI Conference on Artificial Intelligence}, volume~36, pp.\  3682--3689, 2022.

\bibitem[Berkenkamp et~al.(2017)Berkenkamp, Turchetta, Schoellig, and Krause]{Berkenkamp2017}
Felix Berkenkamp, Matteo Turchetta, Angela~P. Schoellig, and Andreas Krause.
\newblock Safe model-based reinforcement learning with stability guarantees.
\newblock In \emph{Advances in Neural Information Processing Systems 30 (NIPS)}, pp.\  908--919, 2017.

\bibitem[Bertsekas et~al.(2003)Bertsekas, Nedic, and Ozdaglar]{bertsekas2003convex}
Dimitri~P. Bertsekas, Angelia Nedic, and Asuman~E. Ozdaglar.
\newblock \emph{Convex Analysis and Optimization}.
\newblock Athena Scientific, Belmont, MA, 2003.
\newblock ISBN 1886529450.

\bibitem[Blanchet et~al.(2023)Blanchet, Lu, Zhang, and Zhong]{blanchet2023double}
Jose Blanchet, Miao Lu, Tong Zhang, and Han Zhong.
\newblock Double pessimism is provably efficient for distributionally robust offline reinforcement learning: Generic algorithm and robust partial coverage.
\newblock \emph{Advances in Neural Information Processing Systems}, 36:\penalty0 66845--66859, 2023.

\bibitem[Borkar(2005)]{Borkar2005}
Vivek~S. Borkar.
\newblock An actor-critic algorithm for constrained {M}arkov decision processes.
\newblock \emph{Systems \& Control Letters}, 54\penalty0 (3):\penalty0 207--213, 2005.

\bibitem[Boyd \& Vandenberghe(2004)Boyd and Vandenberghe]{boyd2004convex}
Stephen Boyd and Lieven Vandenberghe.
\newblock \emph{Convex Optimization}.
\newblock Cambridge University Press, Cambridge, 2004.
\newblock ISBN 9781316179512.

\bibitem[Brandfonbrener \& Bruna(2019)Brandfonbrener and Bruna]{brandfonbrener2019geometric}
David Brandfonbrener and Joan Bruna.
\newblock Geometric insights into the convergence of nonlinear td learning.
\newblock \emph{arXiv preprint arXiv:1905.12185}, 2019.

\bibitem[Brown et~al.(2020)Brown, Niekum, and Petrik]{Brown2020}
Daniel~S. Brown, Scott Niekum, and Marek Petrik.
\newblock Bayesian robust optimization for imitation learning.
\newblock In \emph{Advances in Neural Information Processing Systems 33 (NeurIPS)}, 2020.

\bibitem[Carr et~al.(2023)Carr, Jansen, Junges, and Topcu]{carr2023shielding}
Steven Carr, Nils Jansen, Sebastian Junges, and Ufuk Topcu.
\newblock Safe reinforcement learning via shielding under partial observability.
\newblock In \emph{Proceedings of the 37th AAAI Conference on Artificial Intelligence (AAAI)}, 2023.

\bibitem[Cayci et~al.(2022)Cayci, He, and Srikant]{cayci2022finite}
Semih Cayci, Niao He, and R~Srikant.
\newblock Finite-time analysis of entropy-regularized neural natural actor-critic algorithm.
\newblock \emph{arXiv preprint arXiv:2206.00833}, 2022.

\bibitem[Chen \& Wang(2016)Chen and Wang]{chen2016stochastic}
Yichen Chen and Mengdi Wang.
\newblock Stochastic primal-dual methods and sample complexity of reinforcement learning.
\newblock \emph{arXiv preprint arXiv:1612.02516}, 2016.

\bibitem[Chen \& Huang(2024)Chen and Huang]{chen2024accelerated}
Ziyi Chen and Heng Huang.
\newblock Accelerated policy gradient for s-rectangular robust mdps with large state spaces.
\newblock In \emph{Forty-first International Conference on Machine Learning}, 2024.

\bibitem[Chen et~al.(2022)Chen, Ma, and Zhou]{chen2022finding}
Ziyi Chen, Shaocong Ma, and Yi~Zhou.
\newblock Finding correlated equilibrium of constrained markov game: A primal-dual approach.
\newblock \emph{Advances in Neural Information Processing Systems}, 35:\penalty0 25560--25572, 2022.

\bibitem[Chow et~al.(2018{\natexlab{a}})Chow, Ghavamzadeh, Janson, and Pavone]{Chow2018}
Yinlam Chow, Mohammad Ghavamzadeh, Lucy Janson, and Marco Pavone.
\newblock Risk-constrained reinforcement learning with percentile risk criteria.
\newblock \emph{Journal of Machine Learning Research}, 18:\penalty0 1--51, 2018{\natexlab{a}}.

\bibitem[Chow et~al.(2018{\natexlab{b}})Chow, Nachum, Duenez-Guzman, and Ghavamzadeh]{chow2018lyapunov}
Yinlam Chow, Ofir Nachum, Edgar Duenez-Guzman, and Mohammad Ghavamzadeh.
\newblock A lyapunov-based approach to safe reinforcement learning.
\newblock In \emph{Advances in Neural Information Processing Systems}, pp.\  8092--8101, 2018{\natexlab{b}}.

\bibitem[Dalal et~al.(2018{\natexlab{a}})Dalal, Dvijotham, Vecerik, Hester, Paduraru, and Tassa]{Dalal2018}
Gal Dalal, Krishnamurthy Dvijotham, Matej Vecerik, Todd Hester, Cosmin Paduraru, and Yuval Tassa.
\newblock Safe exploration in continuous action spaces.
\newblock In \emph{Proceedings of the AAAI Conference on Artificial Intelligence}, 2018{\natexlab{a}}.

\bibitem[Dalal et~al.(2018{\natexlab{b}})Dalal, Gilboa, Mannor, and Shashua]{dalal2018safe}
Gal Dalal, Elad Gilboa, Shie Mannor, and Amnon Shashua.
\newblock Safe exploration in continuous action spaces.
\newblock In \emph{Proceedings of the 32nd International Conference on Neural Information Processing Systems}, pp.\  2752--2761, 2018{\natexlab{b}}.

\bibitem[Delage \& Mannor(2010)Delage and Mannor]{Delage2010}
Erick Delage and Shie Mannor.
\newblock Percentile optimization for {M}arkov decision processes with parameter uncertainty.
\newblock \emph{Operations Research}, 58\penalty0 (1):\penalty0 203--213, 2010.

\bibitem[Derman et~al.(2021)Derman, Geist, and Mannor]{derman2021twice}
Esther Derman, Matthieu Geist, and Shie Mannor.
\newblock Twice regularized mdps and the equivalence between robustness and regularization.
\newblock \emph{Advances in Neural Information Processing Systems}, 34:\penalty0 22274--22287, 2021.

\bibitem[Ding et~al.(2020)Ding, Zhang, Zhang, and Wang]{ding2020natural}
Dongsheng Ding, Shaofeng Zhang, Tianxing Zhang, and Liwei Wang.
\newblock Natural policy gradient primal-dual method for constrained markov decision processes.
\newblock In \emph{Advances in Neural Information Processing Systems}, pp.\  19138--19148, 2020.

\bibitem[Ding et~al.(2023)Ding, Shi, Chi, and Zhao]{ding2023spurious}
Wenhao Ding, Laixi Shi, Yuejie Chi, and Ding Zhao.
\newblock Seeing is not believing: Robust reinforcement learning against spurious correlation.
\newblock In \emph{Advances in Neural Information Processing Systems (NeurIPS)}, 2023.

\bibitem[Ghosh(2024)]{ghosh2024sample}
Arnob Ghosh.
\newblock Sample complexity for obtaining sub-optimality and violation bound for distributionally robust constrained mdp.
\newblock In \emph{First Reinforcement Learning Safety Workshop}, 2024.

\bibitem[Goyal \& Grand-Clement(2023)Goyal and Grand-Clement]{Goyal2023}
Vineet Goyal and Julien Grand-Clement.
\newblock Robust {M}arkov decision processes: Beyond rectangularity.
\newblock \emph{Mathematics of Operations Research}, 48\penalty0 (1):\penalty0 203--226, 2023.

\bibitem[Grand-Cl{\'e}ment \& Kroer(2021)Grand-Cl{\'e}ment and Kroer]{grand2021scalable}
Julien Grand-Cl{\'e}ment and Christian Kroer.
\newblock Scalable first-order methods for robust mdps.
\newblock In \emph{Proceedings of the AAAI Conference on Artificial Intelligence}, volume~35, pp.\  12086--12094, 2021.

\bibitem[Gu et~al.(2024{\natexlab{a}})Gu, Sel, Ding, Wang, Lin, Jin, and Knoll]{gu2024balance}
Shangding Gu, Bilgehan Sel, Yuhao Ding, Lu~Wang, Qingwei Lin, Ming Jin, and Alois Knoll.
\newblock Balance reward and safety optimization for safe reinforcement learning: A perspective of gradient manipulation.
\newblock In \emph{Proceedings of the AAAI Conference on Artificial Intelligence}, volume~38, pp.\  21099--21106, 2024{\natexlab{a}}.

\bibitem[Gu et~al.(2024{\natexlab{b}})Gu, Yang, Du, Chen, Walter, Wang, and Knoll]{gu2024review}
Shangding Gu, Long Yang, Yali Du, Guang Chen, Florian Walter, Jun Wang, and Alois Knoll.
\newblock A review of safe reinforcement learning: Methods, theories and applications.
\newblock \emph{IEEE Transactions on Pattern Analysis and Machine Intelligence}, 2024{\natexlab{b}}.

\bibitem[Gu et~al.(2025)Gu, Shi, Wen, Jin, Mazumdar, Chi, Wierman, and Spanos]{gu2025robust}
Shangding Gu, Laixi Shi, Muning Wen, Ming Jin, Eric Mazumdar, Yuejie Chi, Adam Wierman, and Costas Spanos.
\newblock Robust gymnasium: A unified modular benchmark for robust reinforcement learning.
\newblock In \emph{The Thirteenth International Conference on Learning Representations}, 2025.
\newblock URL \url{https://openreview.net/forum?id=2uQBSa2X4R}.

\bibitem[Han et~al.(2022)Han, Kim, Lee, Ryu, and Zhang]{han2022robust}
Dong-Sig Han, Hyunseo Kim, Hyundo Lee, JeHwan Ryu, and Byoung-Tak Zhang.
\newblock Robust imitation via mirror descent inverse reinforcement learning.
\newblock In \emph{Advances in Neural Information Processing Systems (NeurIPS)}, 2022.

\bibitem[Han et~al.(2024)Han, Su, He, Han, Yang, Zou, and Miao]{han2024what}
Songyang Han, Sanbao Su, Sihong He, Shuo Han, Haizhao Yang, Shaofeng Zou, and Fei Miao.
\newblock What is the solution for state-adversarial multi-agent reinforcement learning?
\newblock \emph{Transactions on Machine Learning Research}, 2024.
\newblock ISSN 2835-8856.
\newblock URL \url{https://openreview.net/forum?id=HyqSwNhM3x}.

\bibitem[He et~al.(2023{\natexlab{a}})He, Han, Su, Han, Zou, and Miao]{he2023robust_tmlr}
Sihong He, Songyang Han, Sanbao Su, Shuo Han, Shaofeng Zou, and Fei Miao.
\newblock Robust multi-agent reinforcement learning with state uncertainty.
\newblock \emph{Transactions on Machine Learning Research (TMLR)}, 2023{\natexlab{a}}.
\newblock ISSN 2835-8856.
\newblock URL \url{https://openreview.net/forum?id=CqTkapZ6H9}.

\bibitem[He et~al.(2023{\natexlab{b}})He, Wang, Han, Zou, and Miao]{he2023robust}
Sihong He, Yue Wang, Shuo Han, Shaofeng Zou, and Fei Miao.
\newblock A robust and constrained multi-agent reinforcement learning electric vehicle rebalancing method in amod systems.
\newblock In \emph{2023 IEEE/RSJ International Conference on Intelligent Robots and Systems (IROS)}, pp.\  5637--5644. IEEE, 2023{\natexlab{b}}.

\bibitem[He et~al.(2025)He, Liu, Wang, and Xu]{he2025sample}
Yiting He, Zhishuai Liu, Weixin Wang, and Pan Xu.
\newblock Sample complexity of distributionally robust off-dynamics reinforcement learning with online interaction.
\newblock In \emph{Forty-second International Conference on Machine Learning}, 2025.
\newblock URL \url{https://openreview.net/forum?id=pJdMOKqdSV}.

\bibitem[Ho et~al.(2018)Ho, Petrik, and Wiesemann]{ho2018fast}
Chin~Pang Ho, Marek Petrik, and Wolfram Wiesemann.
\newblock Fast bellman updates for robust mdps.
\newblock In \emph{International Conference on Machine Learning}, pp.\  1979--1988. PMLR, 2018.

\bibitem[Hong et~al.(2023)Hong, Wai, Wang, and Yang]{hong2023two}
Mingyi Hong, Hoi-To Wai, Zhaoran Wang, and Zhuoran Yang.
\newblock A two-timescale stochastic algorithm framework for bilevel optimization: Complexity analysis and application to actor-critic.
\newblock \emph{SIAM Journal on Optimization}, 33\penalty0 (1):\penalty0 147--180, 2023.

\bibitem[Hou et~al.(2020)Hou, Pang, Hong, Lan, Ma, and Yin]{hou2020robust}
Linfang Hou, Liang Pang, Xin Hong, Yanyan Lan, Zhiming Ma, and Dawei Yin.
\newblock Robust reinforcement learning with wasserstein constraint.
\newblock \emph{arXiv preprint arXiv:2006.00945}, 2020.

\bibitem[Hu et~al.(2022)Hu, Sun, Chen, Huang, Piao, Chang, and Sun]{hu2022distributional}
Jifeng Hu, Yanchao Sun, Hechang Chen, Sili Huang, Haiyin Piao, Yi~Chang, and Lichao Sun.
\newblock Distributional reward estimation for effective multi-agent deep reinforcement learning.
\newblock In \emph{Advances in Neural Information Processing Systems (NeurIPS)}, 2022.

\bibitem[Iyengar(2005)]{Iyengar2005}
Garud~N. Iyengar.
\newblock Robust dynamic programming.
\newblock \emph{Mathematics of Operations Research}, 30\penalty0 (2):\penalty0 257--280, 2005.

\bibitem[Jiao \& Li(2024)Jiao and Li]{jiao2024minimax}
Yuchen Jiao and Gen Li.
\newblock Minimax-optimal multi-agent robust reinforcement learning.
\newblock \emph{arXiv preprint arXiv:2412.19873}, 2024.

\bibitem[Jung et~al.(2022)Jung, Cho, Park, and Sung]{jung2022quantile}
Whiyoung Jung, Myungsik Cho, Jongeui Park, and Youngchul Sung.
\newblock Quantile constrained reinforcement learning: A framework constraining outage probability.
\newblock In \emph{Advances in Neural Information Processing Systems (NeurIPS)}, 2022.

\bibitem[Kim et~al.(2023)Kim, Lee, and Oh]{kim2023sdac}
Dohyeong Kim, Kyungjae Lee, and Songhwai Oh.
\newblock Trust region-based safe distributional reinforcement learning for multiple constraints.
\newblock In \emph{Advances in Neural Information Processing Systems (NeurIPS)}, 2023.

\bibitem[Kitamura et~al.(2025)Kitamura, Kozuno, Kumagai, Hoshino, Hosoe, Kasaura, Hamaya, Parmas, and Matsuo]{kitamura2025nearoptimal}
Toshinori Kitamura, Tadashi Kozuno, Wataru Kumagai, Kenta Hoshino, Yohei Hosoe, Kazumi Kasaura, Masashi Hamaya, Paavo Parmas, and Yutaka Matsuo.
\newblock Near-optimal policy identification in robust constrained markov decision processes via epigraph form.
\newblock In \emph{The Thirteenth International Conference on Learning Representations}, 2025.
\newblock URL \url{https://openreview.net/forum?id=G5sPv4KSjR}.

\bibitem[Kumar et~al.(2023)Kumar, Derman, Geist, Levy, and Mannor]{kumar2023policy}
Navdeep Kumar, Esther Derman, Matthieu Geist, Kfir~Y Levy, and Shie Mannor.
\newblock Policy gradient for rectangular robust markov decision processes.
\newblock \emph{Advances in Neural Information Processing Systems}, 36:\penalty0 59477--59501, 2023.

\bibitem[Li et~al.(2024)Li, Cai, Chen, Wei, and Chi]{li2024q}
Gen Li, Changxiao Cai, Yuxin Chen, Yuting Wei, and Yuejie Chi.
\newblock Is q-learning minimax optimal? a tight sample complexity analysis.
\newblock \emph{Operations Research}, 72\penalty0 (1):\penalty0 222--236, 2024.

\bibitem[Li \& Shapiro(2023)Li and Shapiro]{li2023rectangularity}
Yan Li and Alexander Shapiro.
\newblock Rectangularity and duality of distributionally robust markov decision processes.
\newblock \emph{arXiv preprint arXiv:2308.11139}, 2023.

\bibitem[Li et~al.(2022)Li, Lan, and Zhao]{li2022first}
Yan Li, Guanghui Lan, and Tuo Zhao.
\newblock First-order policy optimization for robust markov decision process.
\newblock \emph{arXiv preprint arXiv:2209.10579}, 2022.

\bibitem[Liang et~al.(2018)Liang, Que, and Modiano]{liang2018accelerated}
Qingkai Liang, Fanyu Que, and Eytan Modiano.
\newblock Accelerated primal-dual policy optimization for safe reinforcement learning.
\newblock \emph{arXiv preprint arXiv:1802.06480}, 2018.

\bibitem[Lim \& Xu(2013)Lim and Xu]{lim2013reinforcement}
Shiau~Hong Lim and Huan Xu.
\newblock Reinforcement learning in robust markov decision processes.
\newblock In \emph{Advances in Neural Information Processing Systems (NIPS)}, pp.\  701--709, 2013.

\bibitem[Liu et~al.(2021)Liu, Zhang, and Wang]{liu2021fast}
Yunan Liu, Shaofeng Zhang, and Liwei Wang.
\newblock Fast global convergence of policy optimization for constrained mdps.
\newblock \emph{arXiv preprint arXiv:2102.04692}, 2021.

\bibitem[Liu \& Xu(2024{\natexlab{a}})Liu and Xu]{liu2024distributionally}
Zhishuai Liu and Pan Xu.
\newblock Distributionally robust off-dynamics reinforcement learning: Provable efficiency with linear function approximation.
\newblock In \emph{International Conference on Artificial Intelligence and Statistics}, pp.\  2719--2727. PMLR, 2024{\natexlab{a}}.

\bibitem[Liu \& Xu(2024{\natexlab{b}})Liu and Xu]{liu2024minimax}
Zhishuai Liu and Pan Xu.
\newblock Minimax optimal and computationally efficient algorithms for distributionally robust offline reinforcement learning.
\newblock \emph{arXiv preprint arXiv:2403.09621}, 2024{\natexlab{b}}.

\bibitem[Liu \& Xu(2025)Liu and Xu]{liu2025linear}
Zhishuai Liu and Pan Xu.
\newblock Linear mixture distributionally robust markov decision processes.
\newblock \emph{arXiv preprint arXiv:2505.18044}, 2025.

\bibitem[Liu et~al.(2024)Liu, Wang, and Xu]{liu2024upper}
Zhishuai Liu, Weixin Wang, and Pan Xu.
\newblock Upper and lower bounds for distributionally robust off-dynamics reinforcement learning.
\newblock \emph{arXiv preprint arXiv:2409.20521}, 2024.

\bibitem[Lu et~al.(2024)Lu, Zhong, Zhang, and Blanchet]{lu2024distributionally}
Miao Lu, Han Zhong, Tong Zhang, and Jose Blanchet.
\newblock Distributionally robust reinforcement learning with interactive data collection: Fundamental hardness and near-optimal algorithm.
\newblock \emph{arXiv preprint arXiv:2404.03578}, 2024.

\bibitem[L{\"u}tjens et~al.(2019)L{\"u}tjens, Everett, and How]{lutjens2019safe}
Bj{\"o}rn L{\"u}tjens, Michael Everett, and Jonathan~P How.
\newblock Safe reinforcement learning with model uncertainty estimates.
\newblock In \emph{2019 International Conference on Robotics and Automation (ICRA)}, pp.\  8662--8668. IEEE, 2019.

\bibitem[Ma et~al.(2023)Ma, Chen, Zou, and Zhou]{ma2023decentralized}
Shaocong Ma, Ziyi Chen, Shaofeng Zou, and Yi~Zhou.
\newblock Decentralized robust v-learning for solving markov games with model uncertainty.
\newblock \emph{Journal of Machine Learning Research}, 24\penalty0 (371):\penalty0 1--40, 2023.

\bibitem[Ma et~al.(2022)Ma, Liang, Blanchet, Liu, Xia, Zhang, Zhao, and Zhou]{ma2022distributionally}
Xiaoteng Ma, Zhipeng Liang, Jose Blanchet, Mingwen Liu, Li~Xia, Jiheng Zhang, Qianchuan Zhao, and Zhengyuan Zhou.
\newblock Distributionally robust offline reinforcement learning with linear function approximation.
\newblock \emph{arXiv preprint arXiv:2209.06620}, 2022.

\bibitem[Mahadevan et~al.(2014)Mahadevan, Liu, Thomas, Dabney, Giguere, Jacek, Gemp, and Liu]{mahadevan2014proximal}
Sridhar Mahadevan, Bo~Liu, Philip Thomas, Will Dabney, Steve Giguere, Nicholas Jacek, Ian Gemp, and Ji~Liu.
\newblock Proximal reinforcement learning: A new theory of sequential decision making in primal-dual spaces.
\newblock \emph{arXiv preprint arXiv:1405.6757}, 2014.

\bibitem[Mannor et~al.(2012)Mannor, Mebel, and Xu]{Mannor2012}
Shie Mannor, Ofir Mebel, and Huan Xu.
\newblock Lightning does not strike twice: Robust {MDP}s with coupled uncertainty.
\newblock In \emph{Proceedings of the 29th International Conference on Machine Learning (ICML)}, 2012.

\bibitem[Mannor et~al.(2016)Mannor, Mebel, and Xu]{mannor2016robust}
Shie Mannor, Ofir Mebel, and Huan Xu.
\newblock Robust mdps with k-rectangular uncertainty.
\newblock \emph{Mathematics of Operations Research}, 41\penalty0 (4):\penalty0 1484--1509, 2016.

\bibitem[Moldovan \& Abbeel(2012)Moldovan and Abbeel]{Moldovan2012}
Teodor~Mihai Moldovan and Pieter Abbeel.
\newblock Safe exploration in {M}arkov decision processes.
\newblock In \emph{Proceedings of the 29th International Conference on Machine Learning (ICML)}, 2012.

\bibitem[Nilim \& Ghaoui(2005)Nilim and Ghaoui]{Nilim2005}
Arnab Nilim and Laurent~El Ghaoui.
\newblock Robust control of {M}arkov decision processes with uncertain transition matrices.
\newblock \emph{Operations Research}, 53\penalty0 (5):\penalty0 780--798, 2005.

\bibitem[Panaganti \& Kalathil(2022)Panaganti and Kalathil]{panaganti2021sample}
Kishan Panaganti and Dileep Kalathil.
\newblock Sample complexity of robust reinforcement learning with a generative model.
\newblock In \emph{International Conference on Artificial Intelligence and Statistics}, pp.\  9582--9602. PMLR, 2022.

\bibitem[Paternain et~al.(2019)Paternain, Chamon, Calvo-Fullana, and Ribeiro]{paternain2019constrained}
Santiago Paternain, Luiz Chamon, Miguel Calvo-Fullana, and Alejandro Ribeiro.
\newblock Constrained reinforcement learning has zero duality gap.
\newblock \emph{Advances in Neural Information Processing Systems}, 32, 2019.

\bibitem[Pinto et~al.(2017)Pinto, Davidson, Sukthankar, and Gupta]{Pinto2017}
Lerrel Pinto, James Davidson, Rahul Sukthankar, and Abhinav Gupta.
\newblock Robust adversarial reinforcement learning.
\newblock In \emph{Proceedings of the 34th International Conference on Machine Learning (ICML)}, pp.\  2817--2826, 2017.

\bibitem[Roy et~al.(2017)Roy, Xu, and Pokutta]{roy2017reinforcement}
Aurko Roy, Huan Xu, and Sebastian Pokutta.
\newblock Reinforcement learning under model mismatch.
\newblock \emph{Advances in neural information processing systems}, 30, 2017.

\bibitem[Russel \& Petrik(2020)Russel and Petrik]{russel2020robust}
Nathan Russel and Marek Petrik.
\newblock Robust constrained-mdps: Soft-constrained robust policy optimization under model uncertainty.
\newblock \emph{arXiv preprint arXiv:2010.04870}, 2020.

\bibitem[Satia \& Lave~Jr.(1973)Satia and Lave~Jr.]{satia1973markovian}
Jay~K. Satia and Roy~E. Lave~Jr.
\newblock Markovian decision processes with uncertain transition probabilities.
\newblock \emph{Operations Research}, 21\penalty0 (3):\penalty0 728--740, 1973.

\bibitem[Stooke et~al.(2020)Stooke, Achiam, and Abbeel]{stooke2020responsive}
Adam Stooke, Joshua Achiam, and Pieter Abbeel.
\newblock Responsive safety in reinforcement learning by pid lagrangian methods.
\newblock In \emph{International Conference on Machine Learning}, pp.\  9133--9143, 2020.

\bibitem[Sun et~al.(2024)Sun, He, Miao, and Zou]{sunconstrained}
Zhongchang Sun, Sihong He, Fei Miao, and Shaofeng Zou.
\newblock Constrained reinforcement learning under model mismatch.
\newblock In \emph{Forty-first International Conference on Machine Learning}, 2024.

\bibitem[Sutton et~al.(1999)Sutton, McAllester, Singh, and Mansour]{sutton1999policy}
Richard~S Sutton, David McAllester, Satinder Singh, and Yishay Mansour.
\newblock Policy gradient methods for reinforcement learning with function approximation.
\newblock In \emph{Advances in Neural Information Processing Systems}, volume~12, pp.\  1057--1063, 1999.

\bibitem[Tamar et~al.(2013)Tamar, Xu, and Mannor]{tamar2013scaling}
Aviv Tamar, Huan Xu, and Shie Mannor.
\newblock Scaling up robust mdps by reinforcement learning.
\newblock \emph{arXiv preprint arXiv:1306.6189}, 2013.

\bibitem[Tang et~al.(2024)Tang, Liu, and Xu]{tang2024robust}
Cheng Tang, Zhishuai Liu, and Pan Xu.
\newblock Robust offline reinforcement learning with linearly structured $ f $-divergence regularization.
\newblock \emph{arXiv preprint arXiv:2411.18612}, 2024.

\bibitem[Tessler et~al.(2019)Tessler, Mankowitz, and Mannor]{Tessler2019}
Chen Tessler, Daniel~J. Mankowitz, and Shie Mannor.
\newblock Reward constrained policy optimization.
\newblock In \emph{International Conference on Learning Representations (ICLR)}, 2019.

\bibitem[Towers et~al.(2024)Towers, Kwiatkowski, Terry, Balis, De~Cola, Deleu, Goul{\~a}o, Kallinteris, Krimmel, KG, et~al.]{towers2024gymnasium}
Mark Towers, Ariel Kwiatkowski, Jordan Terry, John~U Balis, Gianluca De~Cola, Tristan Deleu, Manuel Goul{\~a}o, Andreas Kallinteris, Markus Krimmel, Arjun KG, et~al.
\newblock Gymnasium: A standard interface for reinforcement learning environments.
\newblock \emph{arXiv preprint arXiv:2407.17032}, 2024.

\bibitem[Vaswani et~al.(2022)Vaswani, Yang, and Szepesv{\'a}ri]{vaswani2022near}
Sharan Vaswani, Lin Yang, and Csaba Szepesv{\'a}ri.
\newblock Near-optimal sample complexity bounds for constrained mdps.
\newblock \emph{Advances in Neural Information Processing Systems}, 35:\penalty0 3110--3122, 2022.

\bibitem[Viano et~al.(2021)Viano, Huang, Kamalaruban, Weller, and Cevher]{viano2021robust}
Luca Viano, Yu-Ting Huang, Parameswaran Kamalaruban, Adrian Weller, and Volkan Cevher.
\newblock Robust inverse reinforcement learning under transition dynamics mismatch.
\newblock \emph{Advances in Neural Information Processing Systems}, 34:\penalty0 25917--25931, 2021.

\bibitem[Wachi \& Sui(2020)Wachi and Sui]{wachi2020safe}
Akifumi Wachi and Yanan Sui.
\newblock Safe reinforcement learning in constrained markov decision processes.
\newblock In \emph{International Conference on Machine Learning}, pp.\  9797--9806. PMLR, 2020.

\bibitem[Wang et~al.(2019)Wang, Cai, Yang, and Wang]{wang2019neural}
Lingxiao Wang, Qi~Cai, Zhuoran Yang, and Zhaoran Wang.
\newblock Neural policy gradient methods: Global optimality and rates of convergence.
\newblock \emph{arXiv preprint arXiv:1909.01150}, 2019.

\bibitem[Wang et~al.(2023{\natexlab{a}})Wang, Ho, and Petrik]{wang2023policy}
Qiuhao Wang, Chin~Pang Ho, and Marek Petrik.
\newblock Policy gradient in robust mdps with global convergence guarantee.
\newblock In \emph{International Conference on Machine Learning}, pp.\  35763--35797. PMLR, 2023{\natexlab{a}}.

\bibitem[Wang et~al.(2023{\natexlab{b}})Wang, Zhan, Jiao, Wang, Jin, Yang, Wang, Huang, and Zhu]{wang2023barrier}
Yixuan Wang, Simon~S. Zhan, Ruochen Jiao, Zhilu Wang, Wanxin Jin, Zhuoran Yang, Zhaoran Wang, Chao Huang, and Qi~Zhu.
\newblock Enforcing hard constraints with soft barriers: Safe reinforcement learning in unknown stochastic environments.
\newblock In \emph{Proceedings of the 40th International Conference on Machine Learning (ICML)}, 2023{\natexlab{b}}.

\bibitem[Wang \& Zou(2021)Wang and Zou]{wang2021online}
Yue Wang and Shaofeng Zou.
\newblock Online robust reinforcement learning with model uncertainty.
\newblock \emph{Advances in Neural Information Processing Systems}, 34:\penalty0 7193--7206, 2021.

\bibitem[Wang \& Zou(2022)Wang and Zou]{wang2022policy}
Yue Wang and Shaofeng Zou.
\newblock Policy gradient method for robust reinforcement learning.
\newblock In \emph{Proceedings of the 36th AAAI Conference on Artificial Intelligence (AAAI)}, pp.\  8443--8450, 2022.

\bibitem[Wang et~al.(2022)Wang, Miao, and Zou]{wang2022robust}
Yue Wang, Fei Miao, and Shaofeng Zou.
\newblock Robust constrained reinforcement learning.
\newblock \emph{arXiv preprint arXiv:2209.06866}, 2022.

\bibitem[Wang et~al.(2024)Wang, Sun, and Zou]{wangunified}
Yue Wang, Zhongchang Sun, and Shaofeng Zou.
\newblock A unified principle of pessimism for offline reinforcement learning under model mismatch.
\newblock In \emph{The Thirty-eighth Annual Conference on Neural Information Processing Systems}, 2024.

\bibitem[Wang et~al.(2023{\natexlab{c}})Wang, Pan, Zhou, and Wang]{wang2023efficient}
Zhihai Wang, Taoxing Pan, Qi~Zhou, and Jie Wang.
\newblock Efficient exploration in resource-restricted reinforcement learning.
\newblock In \emph{Proceedings of the AAAI Conference on Artificial Intelligence}, volume~37, pp.\  10279--10287, 2023{\natexlab{c}}.

\bibitem[Wiesemann et~al.(2013)Wiesemann, Kuhn, and Rustem]{Wiesemann2013}
Wolfram Wiesemann, Daniel Kuhn, and Ber{\c{c}} Rustem.
\newblock Robust {M}arkov decision processes.
\newblock \emph{Mathematics of Operations Research}, 38\penalty0 (1):\penalty0 153--183, 2013.

\bibitem[Xu \& Mannor(2006)Xu and Mannor]{Xu2006}
Huan Xu and Shie Mannor.
\newblock The robustness-performance tradeoff in {M}arkov decision processes.
\newblock In \emph{Advances in Neural Information Processing Systems 19 (NIPS)}, pp.\  1537--1544, 2006.

\bibitem[Xu et~al.(2021)Xu, Liang, and Lan]{xu2021crpo}
Tengyu Xu, Yingbin Liang, and Guanghui Lan.
\newblock Crpo: A new approach for safe reinforcement learning with convergence guarantee.
\newblock In \emph{International Conference on Machine Learning}, pp.\  11480--11491. PMLR, 2021.

\bibitem[Yang \& Zhang(2020)Yang and Zhang]{yang2020projection}
Lantao Yang and Chongjie Zhang.
\newblock Projection-based constrained policy optimization.
\newblock In \emph{Advances in Neural Information Processing Systems}, pp.\  12345--12356, 2020.

\bibitem[Yao et~al.(2024)Yao, Liu, Cen, Zhu, Yu, Zhang, and Zhao]{yao2024constraint}
Yihang Yao, Zuxin Liu, Zhepeng Cen, Jiacheng Zhu, Wenhao Yu, Tingnan Zhang, and Ding Zhao.
\newblock Constraint-conditioned policy optimization for versatile safe reinforcement learning.
\newblock \emph{Advances in Neural Information Processing Systems}, 36, 2024.

\bibitem[Zhai et~al.(2024)Zhai, Wei, Hou, Ji, Dong, Yi, and Zhang]{zhai2024robust}
Peng Zhai, Xiaoyi Wei, Taixian Hou, Xiaopeng Ji, Zhiyan Dong, Jiafu Yi, and Lihua Zhang.
\newblock Robust proximal adversarial reinforcement learning under model mismatch.
\newblock \emph{IEEE Robotics and Automation Letters}, 2024.

\bibitem[Zhang et~al.(2020)Zhang, Sun, Tao, Genc, Mallya, and Basar]{zhang2020robust}
Kaiqing Zhang, Tao Sun, Yunzhe Tao, Sahika Genc, Sunil Mallya, and Tamer Basar.
\newblock Robust multi-agent reinforcement learning with model uncertainty.
\newblock \emph{Advances in neural information processing systems}, 33:\penalty0 10571--10583, 2020.

\bibitem[Zhang et~al.(2024)Zhang, Panaganti, Shi, Sui, Wierman, and Yue]{zhang2024distributionally}
Zhengfei Zhang, Kishan Panaganti, Laixi Shi, Yanan Sui, Adam Wierman, and Yisong Yue.
\newblock Distributionally robust constrained reinforcement learning under strong duality.
\newblock \emph{Reinforcement Learning Journal}, 4:\penalty0 1793--1821, 2024.

\bibitem[Zhou et~al.(2024)Zhou, Liu, Cheng, Kalathil, Kumar, and Tian]{zhou2024natural}
Ruida Zhou, Tao Liu, Min Cheng, Dileep Kalathil, PR~Kumar, and Chao Tian.
\newblock Natural actor-critic for robust reinforcement learning with function approximation.
\newblock \emph{Advances in neural information processing systems}, 36, 2024.

\end{thebibliography}
\bibliographystyle{tmlr}

\appendix

\setlist[itemize,1]{label=\textbullet}  % Level 1
\setlist[itemize,2]{label=\textopenbullet}  % Level 2, for example
\setlist[itemize,3]{label=$\triangleright$}  % Level 3, if desired

\section{Further Discussions on Related Work}
In this section, we include further discussions on existing literature include other closely related areas and clarification of existing results on the strong duality of robust constrained RL problems. 
\subsection{Other Related Work}\label{appendix:related_work}
In this section, we further explore the two closely related areas: robust RL and constrained RL.  
\paragraph{Robust RL} Robust reinforcement learning (RL) aims to develop policies that perform well under the worst-case transitions. Early works on robust RL primarily focused on model-based approaches, where the uncertainty set of transition probabilities is known or can be estimated, and robust policies are computed using robust dynamic programming techniques \citep{bagnell2001solving, Iyengar2005, Nilim2005, satia1973markovian, Wiesemann2013, lim2013reinforcement}. These methods consider worst-case scenarios over the uncertainty set to ensure robustness. In the model-free setting, robust RL algorithms have been proposed that do not require explicit knowledge of the uncertainty set but instead utilize samples to estimate robust value functions and policies \citep{roy2017reinforcement, wang2021online, panaganti2021sample}. These methods often involve solving a robust optimization problem over the estimated uncertainties. Recent theoretical advancements overcome the issues of directly solving the worst-case transitions. \citet{wang2022policy} considers the $R$-contamination model to obtain the unbiased estimator for the policy gradient method. \citet{zhou2024natural} applies the double sampling method or the structure of the IPM uncertainty structure  to obtain the unbiased estimation involving the worst-case transition probability.  And \citet{kumar2023policy} provides the analytical solution for the $p$-norm uncertainty set. These advancements allow us to directly obtain the robust policy gradient or the robut value function without estimating the worst-case transition probability. 

\paragraph{Constrained RL} Constrained reinforcement learning extends the standard RL framework by incorporating constraints into the agent's decision-making process, aiming to optimize performance while satisfying certain safety, resource, or risk constraints \citep{Altman1999}. A widely used approach for solving constrained RL problems is the primal-dual method \citep{paternain2019constrained, Tessler2019, liang2018accelerated, stooke2020responsive}, which leverages the strong duality property of constrained RL \citep{Altman1999} to formulate a Lagrangian that combines the objective function with the weighted constraints. These methods iteratively update the policy and the Lagrange multipliers, and convergence guarantees have been established under certain conditions \citep{ding2020natural, liu2021fast}. However, these methods rely on the assumption of strong duality, which may not hold in more complex settings. Alternative methods, known as primal methods, enforce constraints directly by projecting policies onto the feasible set or using safe policy improvement techniques \citep{Achiam2017, chow2018lyapunov, dalal2018safe, xu2021crpo, yang2020projection}. These methods aim to ensure constraint satisfaction without relying on dual variables.

\subsection{Strong Duality in Robust Constrained RL Problems}\label{appendix:strong_duality}
In this section, we provide more detailed discussions in the existing literature discussing the strong duality in robust constrained RL problems.

In the existing literature, \citet{ghosh2024sample} provide an intuitive explanation for why existing primal-dual methods for non-robust constrained RL problems could fail in robust case: In the standard routine of showing the strong duality \citep{paternain2019constrained,Altman1999}, the state visitation distribution $d^{\pi, P}$ is convex in the policy $\pi$; that is, there always exists a policy $\pi'$ such that $(1-\alpha) d^{\pi, P} + \alpha d^{\pi, P} =d^{\pi', P}$. However, this relation obviously does {not} hold, which makes the strong duality of robust constrained RL problems unclear. Our counterexample offers a theoretical justification of this conjecture by providing a concrete example where the duality gap is strictly positive.

Additionally, \citet{zhang2024distributionally} has proved the strong duality for robust constrained RL problems by employing the ``randomization trick" that modifies the optimization problem's policy space. However, their results do not apply to our setting.
Specifically, it doesn't consider the space of all random policies; instead, it only considers the distribution of deterministic policies. The choice of deterministic policy is made at the beginning of each round. This approach redefines the robust constrained RL problem to ensure strong duality, differing from the classical definition used in constrained RL \citep{Altman1999, paternain2019constrained}. Our work, instead, aims to align with this classical definition, highlighting that without such extensions, strong duality may not hold.

As the result, existing literature has not addressed the critical question in the robust constrained RL problems, which is what we aim to solve in this paper.

\section{Counterexample: Robust Constrained RL with Non-Zero Duality Gap}\label{appendix:counterexample}
\begin{proof} We divide the proof into three parts: (1) The construction of counterexample constrained robust MDP. (2) The evaluation of Lagrangian function. (3) The evaluation of the duality gap. 
\begin{enumerate} 
    \item  \textbf{Construction of the constrained robust MDP:}

We consider the constrained robust MDP described in \Cref{fig:two_actions} (which is the same as \Cref{fig:counterexample_mdp}).  The nominal transition probability is explicitly defined as follows:
\begin{align*}
    P(s_0\mid a_0, s_0) &= 1,\\
    P(s_1\mid a_0, s_0) &= 0,\\
    P(s_0\mid a_1, s_0) &= p,\\
    P(s_1\mid a_1, s_0) &= 1-p,\\
    P(s_0\mid a_0, s_1) &= 1,\\
    P(s_1\mid a_0, s_1) &= 0,\\
    P(s_0\mid a_1, s_1) &= 1,\\
    P(s_1\mid a_1, s_1) &= 0 . 
\end{align*} 
Then we obtain the state transition probability induced by the policy $\pi$:
    $$P^\pi = \begin{bmatrix}
        \pi_0 + \pi_1 p & 1 \\ 
        \pi_1(1-p) & 0
    \end{bmatrix},$$
where $\pi_0:=\pi(a_0|s_0)$ and $\pi_1:=\pi(a_1|s_0)$. The action at the state $s_1$ doesn't make any differences. The $(i,j)$-th entry of $P^\pi$ represents the probability of moving from $s_j$ to $s_i$ by following the policy $\pi$. We fix the initial state to $s_0$. Its corresponding distribution is given by  $\mu_0=\begin{bmatrix}
    1 \\ 
    0
\end{bmatrix}.$ The reward for the objective value function is
$r_0 = \begin{bmatrix}
    1\\ 
    0
\end{bmatrix}.$ The reward for the constraint is
$r_1 = \begin{bmatrix}
    0\\ 
    1
\end{bmatrix}.$ Here, we only consider a single constraint.

\begin{figure}[t]
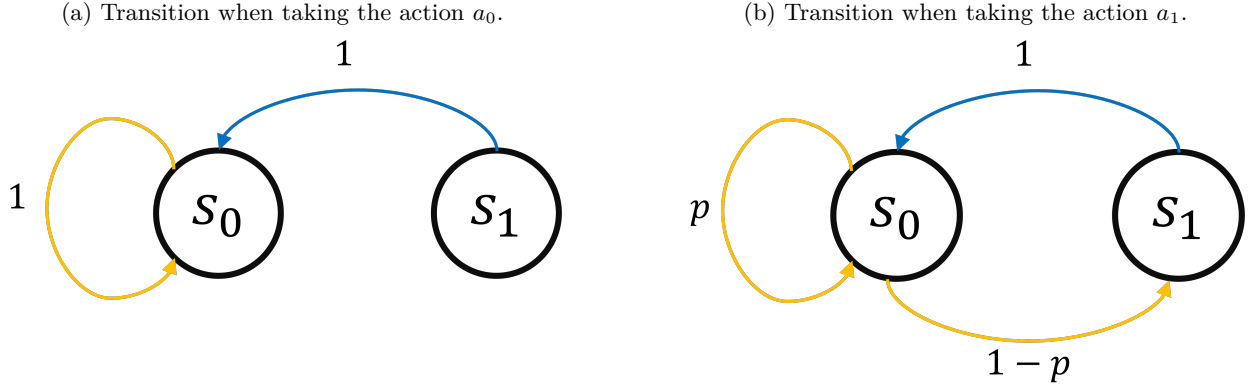

    \centering
    \begin{subfigure}[t]{0.45\textwidth}
        \centering
        \caption{Transition when taking the action $a_0$.}
        \includegraphics[width=\textwidth]{figs/a0.png}
        \label{fig:a0}
    \end{subfigure} \hfill
    \begin{subfigure}[t]{0.45\textwidth}
        \centering
        \caption{Transition when taking the action $a_1$.}
        \includegraphics[width=\textwidth]{figs/a1.png}
        \label{fig:a1}
    \end{subfigure}
    \caption{The transition diagram of the MDP considered in Theorem \ref{thm:counterexample}. At state $s_1$, the agent always moves to state $s_0$ with probability 1, regardless of the action taken. At state $s_0$, the agent has a probability $p$ of staying in the current state when taking action $a_1$, and a probability of 1 of staying in state $s_0$ when taking action $a_0$. The uncertainty only occurs in the transition probability $p$; we let it vary from $[\underline{p}, \overline{p}]$.}
    \label{fig:two_actions}
\end{figure}

Lastly, we consider the following $(s,a)$-uncertainty set defined by the $L_\infty$ distance: 
\begin{align*}
    \mathcal{V}_{s,a} &:= \{ 0 \}, \text{ for }(s,a)\neq (s_0, a_1)\\
    \mathcal{V}_{s_0,a_1} &:= \{ v \in R^{|S|} \mid \langle v, 1_{|S|}\rangle=0, \|v\|_{\infty} \leq \beta \}, \\
    \mathcal{U}_{s,a} &:= \mathcal{V}_{s,a} + P .
\end{align*}  
Given this uncertainty set, the value of $p$ varies from $p-\delta$ to $p+\delta$. Here, we further assume that $p$ is strictly less than $1$, $\delta < p$, and $p+\delta<1$. We denote $\overline{p}:=p+\delta$ and $\underline{p}:=p-\delta$.

    \item 
\textbf{Evaluate the Lagrangian function:}

Now we evaluate the discounted visitation measure of the policy $\pi$.  Define the discounted visitation measure as
$d:=(I - \gamma P^{\pi})^{-1}\mu_0.$
Because
$I - \gamma P^{\pi} = \begin{bmatrix}
    1 - \gamma(\pi_0 + \pi_1 p) & -\gamma \\ 
    -\gamma \pi_1 (1-p) & 1
\end{bmatrix},$ 
where $\mu_0$ is the initial state distribution. Then we obtain
$$d = \frac{1}{|I - \gamma P^{\pi}|} \begin{bmatrix}
    1 & \gamma \\
    \gamma \pi_1 (1-p) & 1 - \gamma + \gamma\pi_1 (1-p)
\end{bmatrix} \begin{bmatrix}
    1 \\ 
    0
\end{bmatrix} = \frac{1}{|I - \gamma P^{\pi}|} \begin{bmatrix}
    1   \\
    \gamma \pi_1 (1-p)  
\end{bmatrix}.$$
Here, $|\cdot|$ represent the determinant. And we choose $\gamma$ to ensure $\| \gamma P^\pi\|_2<1$. 
 
Given the discounted visitation measure, we immediately obtain the non-robust value function of each reward by using $V^\pi(\mu_0) = r^\top d$:
\begin{align*}
  V_0^{\pi}(\mu_0) &=  \frac{1}{1-\gamma + \pi_1 (1-p)(\gamma - \gamma^2)}, \\ 
V_1^{\pi}(\mu_0)  &=  \frac{\gamma \pi_1 (1-p)}{1-\gamma + \pi_1 (1-p)(\gamma - \gamma^2)}. 
\end{align*}
From our definition of the uncertainty set, we have $p \in [\underline{p}, \overline{p}]$.  Then the robust value function of each reward is:
\begin{align*}
    \widetilde{V}^\pi_0(\mu_0) &= \min_p V_0^{\pi}(\mu_0) =  \frac{1}{1-\gamma + \pi_1 (1-\underline{p})(\gamma - \gamma^2)}, \\ 
    \widetilde{V}^\pi_1(\mu_0) &= \min_p V_1^{\pi}(\mu_0) = \frac{\gamma \pi_1 (1-\overline{p})}{1-\gamma + \pi_1 (1-\overline{p})(\gamma - \gamma^2)}.
\end{align*} 
Therefore, the Lagrangian function is given by 
\begin{align*}
    \mathcal{L}(\pi, \lambda)&= \widetilde{V}^\pi_0 - \lambda (\rho - \widetilde{V}^\pi_1) \\ 
    &= \frac{1}{1-\gamma + \pi_1 (1-\underline{p})(\gamma - \gamma^2)} - \lambda \left( \rho -  \frac{\gamma \pi_1 (1-\overline{p})}{1-\gamma + \pi_1 (1-\overline{p})(\gamma - \gamma^2)} \right).
\end{align*}

    \item \textbf{Evaluate the duality gap:}

    It suffices to evaluate both $\min_\lambda \max_\pi \mathcal{L}(\pi, \lambda)$ and $\max_\pi  \min_\lambda \mathcal{L}(\pi, \lambda)$. 

    \begin{itemize}
        \item Solve $\min_\lambda \max_\pi \mathcal{L}(\pi, \lambda)$:

        We will solve $\frac{\partial \mathcal{L}}{\partial \pi}\geq 0$ to find monotone intervals of the function $\mathcal{L}(\cdot, \lambda):\pi \mapsto \mathcal{L}(\pi, \lambda)$. Then we will obtain when $\mathcal{L}(\cdot, \lambda):\pi \mapsto \mathcal{L}(\pi, \lambda)$ achieves its maxima.  
\begin{align*}
    \frac{\partial \mathcal{L}}{\partial \pi} &= \frac{-(1-\underline{p})(\gamma - \gamma^2)}{\left[1-\gamma + \pi_1 (1-\underline{p})(\gamma - \gamma^2)\right]^2}  \\ 
    &\quad + \lambda \frac{\gamma   (1-\overline{p})\left[1-\gamma + \pi_1 (1-\overline{p})(\gamma - \gamma^2)\right] - \gamma \pi_1 (1-\overline{p}) \left[ (1-\overline{p})(\gamma - \gamma^2)\right] }{\left[1-\gamma + \pi_1 (1-\overline{p})(\gamma - \gamma^2)\right]^2} \\ 
    &=\frac{-(1-\underline{p})(\gamma - \gamma^2)}{\left[1-\gamma + \pi_1 (1-\underline{p})(\gamma - \gamma^2)\right]^2} + \lambda \frac{ (1-\overline{p}) (\gamma  -\gamma^2)}{\left[1-\gamma + \pi_1 (1-\overline{p})(\gamma - \gamma^2)\right]^2}. 
\end{align*}
Let $\frac{\partial \mathcal{L}}{\partial \pi}\geq 0$. We obtain 
\begin{align*}
    &\lambda  \frac{ 1 - \overline{p}}{\left[1 + \pi_1 (1-\overline{p})\gamma\right]^2} \geq \frac{1-\underline{p}}{\left[1 + \pi_1 (1-\underline{p})\gamma\right]^2} \\ 
    \implies \quad &\lambda \frac{1 - \overline{p}}{1-\underline{p}} \left[1 + \pi_1 (1-\underline{p})\gamma\right]^2 \geq \left[1 + \pi_1 (1-\overline{p})\gamma\right]^2 \\
    \implies \quad &\sqrt{\lambda \frac{1 - \overline{p}}{1-\underline{p}} }\left[1 + \pi_1 (1-\underline{p})\gamma\right] \geq \left[1 + \pi_1 (1-\overline{p})\gamma\right].
\end{align*} 
Then we obtain
$$\sqrt{\lambda \frac{1 - \overline{p}}{1-\underline{p}} } - 1  \geq  \pi_1 \gamma \left[(1-\overline{p}) - (1 - \underline{p})\sqrt{\lambda \frac{1 - \overline{p}}{1-\underline{p}} } \right].$$
It is easy to notice that when $\lambda \leq \frac{1 - \overline{p}}{1-\underline{p}}$, the coefficient of $\pi_1$ 
$$\left[(1-\overline{p}) - (1 - \underline{p})\sqrt{\lambda \frac{1 - \overline{p}}{1-\underline{p}} } \right] \geq 0.$$
When $\lambda \geq \frac{1 - \overline{p}}{1-\underline{p}}$, the coefficient of $\pi_1$ 
$$\left[(1-\overline{p}) - (1 - \underline{p})\sqrt{\lambda \frac{1 - \overline{p}}{1-\underline{p}} } \right] \leq 0.$$
We separately consider each case to solve the monotone intervals.
\begin{itemize}
    \item 
    Case 1: $\lambda \leq \frac{1 - \overline{p}}{1-\underline{p}} .$ In this case,  $ (1-\overline{p}) - (1 - \underline{p})\sqrt{\lambda \frac{1 - \overline{p}}{1-\underline{p}} }  \geq 0 $.  It solves:
    $$\pi_1 \leq \frac{\left[\sqrt{\lambda \frac{1 - \overline{p}}{1-\underline{p}} } - 1\right]}{\gamma \left[(1-\overline{p}) - (1 - \underline{p})\sqrt{\lambda \frac{1 - \overline{p}}{1-\underline{p}} } \right]}.$$
    We further consider if this upper bound is positive or negative.
    \begin{itemize}
        \item Case 1.1: $\lambda \geq \frac{1-\underline{p}}{1-\overline{p}}$. In this case, 
        $\left[\sqrt{\lambda \frac{1 - \overline{p}}{1-\underline{p}} } - 1\right] \geq 0.$
        It violates $\lambda \leq \frac{1 - \overline{p}}{1-\underline{p}}$. 
        \item Case 1.2: $\lambda \leq \frac{1-\underline{p}}{1-\overline{p}}$. In this case, 
        $\left[\sqrt{\lambda \frac{1 - \overline{p}}{1-\underline{p}} } - 1\right] \leq 0.$
    \end{itemize}
    Therefore, in the Case 1 ($\lambda \leq \frac{1 - \overline{p}}{1-\underline{p}}$), we always have $\pi_1 \leq 0$. It indicates that $\mathcal{L}(\pi,\lambda)$ is decreasing in $\pi_1$ when $\pi_1\in [0,1]$. The maximum is achieved when setting $\pi_1 = 0$. That is, 
    $$\max_\pi \mathcal{L}(\pi,\lambda) = \mathcal{L}(0,\lambda)  = \frac{1}{1-\gamma} - \lambda \rho,$$
    where $0\leq \lambda \leq \frac{1 - \overline{p}}{1-\underline{p}}$.
    
    % {\color{black}
    % Then
    % $$\min_{\lambda\leq \frac{1 - \overline{p}}{1-\underline{p}} } \max_\pi \mathcal{L}(\pi,\lambda) =  \max_\pi \mathcal{L}(0,\lambda)  = \frac{1}{1-\gamma} - \frac{1 - \overline{p}}{1-\underline{p}} \rho.$$}

    \item 
    Case 2: $\lambda \geq \frac{1 - \overline{p}}{1-\underline{p}}.$ In this case,
    $(1-\overline{p}) - (1 - \underline{p})\sqrt{\lambda \frac{1 - \overline{p}}{1-\underline{p}} }  \leq 0.$ It solves: 
    $$\pi_1 \geq \frac{\left[\sqrt{\lambda \frac{1 - \overline{p}}{1-\underline{p}} } - 1\right]}{\gamma \left[(1-\overline{p}) - (1 - \underline{p})\sqrt{\lambda \frac{1 - \overline{p}}{1-\underline{p}} } \right]}.$$
    Again, we further consider if this lower bound is positive or negative:
    \begin{itemize}
        \item Case 2.1: $\lambda \geq \frac{1-\underline{p}}{1-\overline{p}}$. In this case, 
        $\left[\sqrt{\lambda \frac{1 - \overline{p}}{1-\underline{p}} } - 1\right] \geq 0.$ 
        It indicates that  $\mathcal{L}(\pi,\lambda)$ is increasing in $\pi_1$ when $\pi_1\in [0,1]$. The maximum is achieved at $x_1 = 1$.  
        
        \item Case 2.2: $\frac{1 - \overline{p}}{1-\underline{p}} \leq \lambda \leq \frac{1-\underline{p}}{1-\overline{p}}$. In this case, 
        $\left[\sqrt{\lambda \frac{1 - \overline{p}}{1-\underline{p}} } - 1\right] \leq 0.$ It indicates that $\mathcal{L}(\pi,\lambda)$ is decreasing then increasing within $\pi_1 \in [0,1]$.  The maximum is either achieved at $x_1 = 1$ or $x_1 = 0$.       We need to decide which one is larger: When $x_1=1$, we have
    $$\mathcal{L}(1,\lambda) = \frac{1}{1-\gamma +  (1-\underline{p})(\gamma - \gamma^2)} - \lambda \left( \rho -  \frac{\gamma   (1-\overline{p})}{1-\gamma +  (1-\overline{p})(\gamma - \gamma^2)} \right)$$
    or when $x_1=0$,
    $$\mathcal{L}(0,\lambda) = \frac{1}{1-\gamma  } - \lambda  \rho .$$
By letting $\mathcal{L}(1,\lambda) \geq \mathcal{L}(0,\lambda)$, we solve the boundary is 
$$\hat{\lambda} = \frac{1 - \underline{p}}{1-\overline{p}} \frac{1 + (1-\overline{p})\gamma}{1+ (1 - \underline{p}) \gamma}.$$
It means if $\lambda \geq \hat{\lambda}$, then 
    $\mathcal{L}(1,\lambda) \geq \mathcal{L}(0,\lambda)$. If $\lambda \leq \hat{\lambda}$, then 
    $\mathcal{L}(1,\lambda) \leq \mathcal{L}(0,\lambda)$.  
    \end{itemize} 
    Combining both Case 2.1 and Case 2.2, we obtain
    \begin{align*}
        \max_\pi \mathcal{L}(\pi,\lambda) = \begin{cases}
           \mathcal{L}(1,\lambda)  & \lambda \geq \hat{\lambda} \\ 
           \mathcal{L}(0,\lambda) & \hat{\lambda}\geq \lambda \geq \frac{1 - \overline{p}}{1-\underline{p}}\\ 
        \end{cases}, 
    \end{align*}
    where $\hat{\lambda} = \frac{1 - \underline{p}}{1-\overline{p}} \frac{1 + (1-\overline{p})\gamma}{1+ (1 - \underline{p}) \gamma}.$  
\end{itemize}
Now, combining  both Case 1 and Case 2, we have
    \begin{align*}
        \max_\pi  \mathcal{L}(\pi,\lambda) = \begin{cases}
           \mathcal{L}(1,\lambda)  & \lambda \geq \hat{\lambda} \\ 
           \mathcal{L}(0,\lambda) & \hat{\lambda}\geq \lambda \geq 0\\ 
        \end{cases}, 
    \end{align*}
    where $\hat{\lambda} = \frac{1 - \underline{p}}{1-\overline{p}} \frac{1 + (1-\overline{p})\gamma}{1+ (1 - \underline{p}) \gamma}.$  When $\lambda = \hat{\lambda}$, the function $\max_\pi  \mathcal{L}(\pi,\cdot):\lambda \mapsto \max_\pi  \mathcal{L}(\pi,\lambda)$ achieves its minimum. That is,  {
$$\min_{\lambda }\max_\pi \mathcal{L}(\pi,\lambda) = \frac{1}{1-\gamma  } - \frac{1 - \underline{p}}{1-\overline{p}} \frac{1 + (1-\overline{p})\gamma}{1+ (1 - \underline{p}) \gamma} \rho .$$
}

        \item  Solve $\max_\pi \min_\lambda  \mathcal{L}(\pi, \lambda)$: 

We let the constraint be satisfied; that is 
$$ \rho -  \frac{\gamma \pi_1 (1-\overline{p})}{1-\gamma + \pi_1 (1-\overline{p})(\gamma - \gamma^2)} \leq 0.$$
Otherwise, simply letting $\lambda = +\infty$ will lead to $-\infty$ function value.  It solves
$$\rho (1-\gamma) \leq [1 - \rho(1-\gamma)]\gamma (1-\overline{p}) \pi_1.$$
Since $\rho$ must be less than $\frac{1}{1-\gamma}$ (so, $1-(1-\gamma)\rho \geq 0$), we have
$$\pi_1 \geq \frac{\rho(1-\gamma)}{ [1 - \rho(1-\gamma)]\gamma (1-\overline{p})}.$$
Therefore, the maximum is achieved at $\pi_1 = \frac{\rho(1-\gamma)}{ [1 - \rho(1-\gamma)]\gamma (1-\overline{p})}$:
{ \begin{align*}
    \max_\pi \min_\lambda  \mathcal{L}(\pi, \lambda) &= \max_\pi \widetilde{V}^\pi_0  \\ 
    &=  \frac{1}{1-\gamma + \frac{\rho (1-\gamma)^2}{1-\rho(1-\gamma)} \frac{1-\underline{p}}{1-\overline{p}}} \\ 
    &= \frac{1}{1-\gamma} - \frac{\rho \frac{1-\underline{p}}{1-\overline{p}}}{1- \rho(1-\gamma) + \rho(1-\gamma) \frac{1-\underline{p}}{1-\overline{p}}}.
\end{align*}
}
% In summary,
% {\color{black}
% $$
% \max_\pi \min_\lambda  \mathcal{L}(\pi, \lambda) = \frac{1}{1-\gamma + \frac{\rho (1-\gamma)^2}{1-\rho(1-\gamma)} \frac{1-\underline{p}}{1-\overline{p}}}.$$
% }
% Simplify it:
% $$
% \max_\pi \min_\lambda  \mathcal{L}(\pi, \lambda) = \frac{1}{1-\gamma} - \frac{\rho}{1- \rho(1-\gamma) + \rho(1-\gamma) \frac{1-\underline{p}}{1-\overline{p}}}.$$
\end{itemize}
Therefore,  the duality gap is given by 
\begin{align*} 
   \mathscr{D}(\calL)&= \max_\pi \min_\lambda \calL (\pi, \lambda)  - \min_\lambda \max_\pi \calL (\pi, \lambda) \\
   &=  \frac{1 - \underline{p}}{1-\overline{p}} \frac{1 + (1-\overline{p})\gamma}{1+ (1 - \underline{p}) \gamma} \rho - \frac{\rho \frac{1-\underline{p}}{1-\overline{p}}}{1- \rho(1-\gamma) + \rho(1-\gamma) \frac{1-\underline{p}}{1-\overline{p}}} .
\end{align*} 
\end{enumerate} 
When $\overline{p} = \underline{p}$, the duality gap turns to be exactly $0$. It is because the constrained non-robust RL problem has zero duality gap. However, when we set $p=\gamma=0.5$, $\overline{p}=0.75$, $\underline{p}=0.25$, and $\rho=1$. We have the non-zero duality gap
$$\mathscr{D}(\calL) = \frac{21}{22}.$$ 
\end{proof}

\section{Proof of \Cref{thm:main}}\label{appendix:proof}
In this section, we provide the detailed proof of \Cref{thm:main}. 

\subsection{Assumptions}
In this subsection, we recap the assumptions used in this proof. We additionally restrict all rewards to $[0,1]$; however, this restriction is not crucial. It only affects the constant upper bound of robust value (or Q) functions $V^\pi$ and $Q^\pi$. We can relax this assumption to a general $[-r_{\max}, r_{\max}]$ with changing the upper and lower bound of these value functions to be $\frac{r_{\max}}{1-\gamma}$.

\begin{assumption}[Policy Evaluation Accuracy] 
The approximate robust value functions \(\hat{Q}^{\pi}_i(s,a)\) satisfy \(|\hat{Q}^{\pi}_i(s,a) - Q^{\pi}_i(s,a)| \leq \epsilon_{\text{approx}}\) for all \(s \in \mathcal{S}\), \(a \in \mathcal{A}\), and \(i = 0, \dots, I\).
\end{assumption}

\begin{assumption}[Worst-Case Exploration] 
For any policy \(\pi\) and its worst-case transition \(P\), there exists a positive constant \(p_{\min} > 0\) such that its state visitation probability satisfies \(d_\mu^{\pi,P}(s) \geq p_{\min}\) for all \(s \in \mathcal{S}\), where \(d_\mu^{\pi,P}\) is the state visitation distribution starting from initial distribution \(\mu\) under policy \(\pi\) and transition \(P\).
\end{assumption}

\begin{assumption}[Bounded Rewards] 
For all rewards $r_i: \calS \times \calA \to \RR$, it satisfies 
$$0\leq r_i(s,a) \leq 1$$
for all $(s,a) \in \calS \times \calA$.
\end{assumption}

{\color{black}
% \begin{assumption}\label{ass:uncertain-1}
%     For all policy $\pi$ and the initial distribution $\mu$, suppose that the worst-case transition under $\pi$ is $P$, then
%     \[   d_{\text{TV}}(d^{\pi^*,P^*}_\mu,d^{\pi^*,P}_\mu ) \leq c .\]
% \end{assumption}
\begin{assumption}\label{ass:uncertain-2}
    The diameter of the uncertainty set $C$ is given as 
    \[   \sup_{P, P' \in C} \sup_{s,a} d_{\text{TV}}( P(\cdot|s,a), P'(\cdot|s,a) ) \leq c .\] 
\end{assumption}
}

\subsection{Supporting Lemmas}
We summarize all required lemmas in this subsection. These lemmas will be used to prove the main result. 

The following performance difference lemma is originally developed by \citet{zhou2024natural} for bounding the value function difference of two policies $\pi'$ and $\pi$. Here we apply \Cref{assum:worst_case_exploration} to turn the upper and lower bound to transition this inequality to the desired distribution needed in our convergence analysis.  
{\color{black}
\begin{lemma}[Robust performance difference lemma]\label{lemma:robust-difference} Let $\pi,\pi'$ be two policies and $P, P'$ be their worst-case transition kernels. Suppose that $\mu$ is the initial distribution over the state space $S$. Then
\begin{align}\label{eq:robust-performance-difference} 
    \frac{1}{1-\gamma} \EE_{s\sim d_\mu^{\pi',P'}} \EE_{a\sim \pi'(\cdot|s)} [A^{\pi}(s,a)] \leq  V^{\pi'}(\mu) - V^\pi(\mu) \leq \frac{1}{1-\gamma} \EE_{s\sim d_\mu^{\pi',P}}  \EE_{a\sim \pi'(\cdot|s)} [A^{\pi}(s,a)].
\end{align} 
\end{lemma}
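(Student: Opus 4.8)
The plan is to adapt the classical performance difference lemma of Kakade–Langford to the robust setting, where the subtlety is that the two policies $\pi$ and $\pi'$ induce (possibly) different worst-case transition kernels $P$ and $P'$. First I would recall the standard identity: for any fixed kernel $\widehat{P}$ and any policy $\pi'$,
\[
V^{\pi',\widehat P}(\mu) - V^{\pi,P}(\mu)
= \frac{1}{1-\gamma}\,\EE_{s\sim d_\mu^{\pi',\widehat P}}\EE_{a\sim \pi'(\cdot|s)}\bigl[A^{\pi,P}(s,a)\bigr],
\]
where $A^{\pi,P}$ is the advantage of $\pi$ computed \emph{under its own worst-case kernel} $P$ (so $A^\pi$ in the statement is shorthand for this). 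This is the usual telescoping argument, unrolling $V^{\pi',\widehat P}(\mu)$ along trajectories generated by $(\pi',\widehat P)$ and inserting $\pm V^{\pi,P}$ at each step.

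Next I would exploit the definition of the robust value function as an infimum over the uncertainty set. Since $P$ is the worst-case kernel for $\pi$, we have $V^\pi(\mu)=V^{\pi,P}(\mu)\le V^{\pi,\widehat P}(\mu)$ for every $\widehat P\in\mathcal P$; likewise $V^{\pi'}(\mu)=V^{\pi',P'}(\mu)=\inf_{\widehat P} V^{\pi',\widehat P}(\mu)\le V^{\pi',P}(\mu)$. Plugging $\widehat P = P'$ into the identity and using $V^{\pi,P'}(\mu)\ge V^{\pi,P}(\mu)=V^\pi(\mu)$ gives
\[
V^{\pi'}(\mu)-V^\pi(\mu)\;\ge\; V^{\pi',P'}(\mu)-V^{\pi,P'}(\mu)
\;=\;\frac{1}{1-\gamma}\EE_{s\sim d_\mu^{\pi',P'}}\EE_{a\sim\pi'(\cdot|s)}[A^{\pi,P'}(s,a)],
\]
but I must be careful: the identity uses the advantage under $P'$, not under $P$. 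The cleaner route is to keep the advantage evaluated under $P$ throughout and instead write the telescoping identity for $V^{\pi',P'}(\mu)-V^{\pi,P}(\mu)$ directly, obtaining $\tfrac{1}{1-\gamma}\EE_{s\sim d_\mu^{\pi',P'}}\EE_{a\sim\pi'}[A^{\pi,P}(s,a)] + (\text{a correction term from evaluating } V^\pi \text{ under } P' \text{ vs } P \text{ along the trajectory})$; the correction term has a sign that can be controlled because $P$ is the minimizer for $\pi$. For the lower bound I combine $V^{\pi'}(\mu)\le V^{\pi',P}(\mu)$ with $V^{\pi,P}(\mu)=V^\pi(\mu)$ to get $V^{\pi'}(\mu)-V^\pi(\mu)\le V^{\pi',P}(\mu)-V^{\pi,P}(\mu)$, then apply the plain performance difference identity with the \emph{single} kernel $P$, yielding exactly $\tfrac{1}{1-\gamma}\EE_{s\sim d_\mu^{\pi',P}}\EE_{a\sim\pi'}[A^\pi(s,a)]$. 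Symmetrically, for the lower bound on $V^{\pi'}(\mu)-V^\pi(\mu)$ I use $V^{\pi,P'}(\mu)\ge V^\pi(\mu)$ together with the identity applied with the single kernel $P'$.

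So the skeleton is: (i) state and prove the one-kernel performance difference identity by telescoping; (ii) for the upper bound, sandwich via $V^{\pi'}(\mu)\le V^{\pi',P}(\mu)$ and $V^{\pi,P}(\mu)=V^\pi(\mu)$, then apply (i) with kernel $P$; (iii) for the lower bound, sandwich via $V^{\pi,P'}(\mu)\ge V^{\pi,P}(\mu)=V^\pi(\mu)$ and $V^{\pi',P'}(\mu)=V^{\pi'}(\mu)$, then apply (i) with kernel $P'$; (iv) note that $A^\pi$ in both bounds denotes the advantage under $\pi$'s own worst-case kernel, which is the object that appears naturally in the NPG analysis. I expect the main obstacle to be bookkeeping the direction of the inequalities induced by swapping worst-case kernels — specifically making sure that in each of the two bounds I only ever need an inequality of the form ``$V^\pi$ under some other kernel $\ge V^\pi$ under its own worst case,'' which always points the right way — rather than any deep analytic difficulty; \Cref{assum:worst_case_exploration} (the $p_{\min}$ lower bound) is not actually needed for this lemma itself and is only invoked downstream when converting the visitation distributions $d_\mu^{\pi',P}$, $d_\mu^{\pi',P'}$ to the comparison distribution used in the convergence proof.
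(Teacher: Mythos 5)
The paper does not actually prove this lemma: its entire ``proof'' is a one-line citation to Lemma~8 of \citet{zhou2024natural}, so your self-contained telescoping argument is necessarily a different route, and it is worth checking on its own merits. Your overall strategy is sound, and the upper bound goes through exactly as you describe: since $P$ is a worst-case kernel for $\pi$ you have $V^{\pi'}(\mu)\le V^{\pi',P}(\mu)$ and $V^{\pi}(\mu)=V^{\pi,P}(\mu)$, and the one-kernel performance difference identity with kernel $P$ gives the right-hand side --- provided you also record that, for a rectangular uncertainty set, the kernel attaining the worst case for $V^\pi$ also attains the inner infimum defining the robust $Q^\pi$, so that $A^{\pi,P}=A^{\pi}$ (the paper uses this identification implicitly via Proposition~2.2 of \citet{li2022first}).

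The one genuine soft spot is step (iii) of your final skeleton. Applying the one-kernel identity with kernel $P'$ yields $V^{\pi',P'}(\mu)-V^{\pi,P'}(\mu)=\tfrac{1}{1-\gamma}\EE_{s\sim d_\mu^{\pi',P'}}\EE_{a\sim\pi'}[A^{\pi,P'}(s,a)]$, whose integrand is the advantage of $\pi$ \emph{under $P'$}, and the replacement $A^{\pi,P'}\to A^{\pi}$ is not justified pointwise: $A^{\pi,P'}-A^{\pi}=\bigl(Q^{\pi,P'}-Q^{\pi}\bigr)-\bigl(V^{\pi,P'}-V^{\pi}\bigr)$ is a difference of two nonnegative quantities and has no fixed sign. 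The route that does close the gap is the one you sketch mid-proposal and then drop from the skeleton: telescope $V^{\pi',P'}(\mu)-V^{\pi,P}(\mu)$ directly along $(\pi',P')$-trajectories, which produces the integrand $r(s,a)+\gamma\,\EE_{s'\sim P'(\cdot|s,a)}[V^{\pi}(s')]-V^{\pi}(s)$, and then lower-bound $\EE_{s'\sim P'(\cdot|s,a)}[V^{\pi}(s')]$ by $\inf_{p\in\mathcal{P}_{(s,a)}}\EE_{s'\sim p}[V^{\pi}(s')]$ so that the integrand dominates $A^{\pi}(s,a)$; this is exactly where ``$P$ is the minimizer for $\pi$'' enters with the correct sign. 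Commit to that version and delete the ``apply (i) with kernel $P'$'' step. You are also right that \Cref{assum:worst_case_exploration} is not needed for this lemma and only enters downstream.
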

\begin{proof}
    \Cref{eq:robust-performance-difference} is given by Lemma 8 from \citet{zhou2024natural}.  
\end{proof} 
\begin{remark}Moreover, we have
\begin{align*} 
     V^{\pi'}(\mu) - V^\pi(\mu) &\leq \frac{1}{1-\gamma} \EE_{s\sim d_\mu^{\pi',P}}  \EE_{a\sim \pi'(\cdot|s)} [A^{\pi}(s,a)] \\ 
     &\leq \frac{1}{1-\gamma} \EE_{s\sim d_\mu^{\pi',P}}  \EE_{a\sim \pi'(\cdot|s)} [A^{\pi}(s,a)] - \frac{1}{1-\gamma} \EE_{s\sim d_\mu^{\pi',P'}}  \EE_{a\sim \pi'(\cdot|s)} [A^{\pi}(s,a)] \\ 
     &\quad + \frac{1}{1-\gamma} \EE_{s\sim d_\mu^{\pi',P'}}  \EE_{a\sim \pi'(\cdot|s)} [A^{\pi}(s,a)]  \\ 
     &\leq \frac{1}{1-\gamma} \EE_{s\sim d_\mu^{\pi',P'}}  \EE_{a\sim \pi'(\cdot|s)} [A^{\pi}(s,a)]  + \frac{2}{(1-\gamma)^2} d_{\textrm{TV}}(d_\mu^{\pi',P'}, d_\mu^{\pi',P})  
\end{align*}
\end{remark}
}

\begin{lemma}\label{lemma:npg-update-rule}
    Let the NPG update rule be given by
    $$\pi_{t+1}(a|s) = \pi_t(a|s) \frac{\exp\big(\eta  \hat{Q}^{\pi_t}(s,a)/(1-\gamma)\big)}{Z_t},$$
    where the normalization factor $Z_t:=\sum_{a\in\calA} \pi_t(a|s) \exp  \big(\eta \hat{Q}_t(s,a)/(1-\gamma) \big) $. Then
    $$\hat{Q}^{\pi_t}(s,a) = \frac{1-\gamma}{\eta } \log Z_t \frac{\pi_{t+1} (a|s)}{\pi_t(a|s)}.$$
\end{lemma}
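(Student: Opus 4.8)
The plan is to invert the exponential-weights update by taking logarithms; this is a purely algebraic manipulation. Starting from the recursion $\pi_{t+1}(a\mid s) = \pi_t(a\mid s)\,\exp\!\big(\eta\hat{Q}^{\pi_t}(s,a)/(1-\gamma)\big)/Z_t$, I would first note that every logarithm appearing below is well defined: under the softmax parametrization $\pi_t(a\mid s)>0$ for all $(s,a)$, so dividing by $\pi_t(a\mid s)$ is legitimate, and $Z_t>0$ since it is a positive combination of positive exponentials (using that $\hat{Q}^{\pi_t}$ is finite).

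The core step is then immediate. Dividing both sides of the update by $\pi_t(a\mid s)$ and multiplying through by $Z_t$ gives
\[
  Z_t\,\frac{\pi_{t+1}(a\mid s)}{\pi_t(a\mid s)} \;=\; \exp\!\Big(\tfrac{\eta}{1-\gamma}\,\hat{Q}^{\pi_t}(s,a)\Big).
\]
Taking the natural logarithm of both sides and solving for $\hat{Q}^{\pi_t}(s,a)$ yields $\hat{Q}^{\pi_t}(s,a) = \frac{1-\gamma}{\eta}\,\log\!\big(Z_t\,\pi_{t+1}(a\mid s)/\pi_t(a\mid s)\big)$, which is exactly the claimed identity (reading the right-hand side of the statement as the logarithm of the product $Z_t\,\pi_{t+1}(a\mid s)/\pi_t(a\mid s)$, equivalently $\frac{1-\gamma}{\eta}\big(\log Z_t + \log\frac{\pi_{t+1}(a\mid s)}{\pi_t(a\mid s)}\big)$).

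There is no real obstacle here; the only point worth making explicit is the well-definedness of the logarithm, i.e. that the NPG update preserves full support $\pi_t(a\mid s)>0$ when initialized from a softmax policy. The identity itself is the standard bridge between the mirror-descent view of NPG and the performance-difference machinery, since it lets one re-express the per-step normalizer $\log Z_t$ in terms of the $Q$-estimates when combining with \Cref{lemma:robust-difference} in the convergence analysis.
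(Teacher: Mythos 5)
Your proposal is correct: the paper itself does not spell out an argument but simply cites Lemma~4 of \citet{xu2021crpo}, and your log-inversion of the exponential-weights update (with the parenthesization $\log\big(Z_t\,\pi_{t+1}(a\mid s)/\pi_t(a\mid s)\big)$, which is also how the paper uses the identity later) is exactly that standard derivation. The remark on positivity of $\pi_t(a\mid s)$ and $Z_t$ under the softmax parametrization is a reasonable extra precaution but not a point of divergence.
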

\begin{proof}
    See Lemma 4 from \citet{xu2021crpo}. 
\end{proof} %clear

The following lemma tells how much the worst-case value function of a given reward $r_i$ is improved by updating $\pi_t$ to $\pi_{t+1}$ using its corresponding robust policy gradient.  The first term $\frac{1}{\eta} \mathbb{E}_{s\sim d_\nu^{\pi_{t+1},P'}} \left[  d_{\text{KL}}\left( \pi_{t+1}(\cdot|s) \| \pi_t(\cdot|s) \right) \right] $ is always non-negative. The second term $\Delta_t$ will be merged with other errors later. 
\begin{lemma}\label{lemma:6} Under the NPG update rule with learning rate \(\eta\), the robust value functions with an arbitrary initial distribution $\nu$ satisfy the following inequality:
\begin{align*}
V^{\pi_{t+1}}_{i_t}(\nu) - V^{\pi_t}_{i_t}(\nu) \geq  \frac{1}{\eta} \mathbb{E}_{s\sim d_\nu^{\pi_{t+1},P'}} \left[  d_{\text{KL}}\left( \pi_{t+1}(\cdot|s) \| \pi_t(\cdot|s) \right) \right] + \Delta_t,
\end{align*}
where the worst-case transition probability of the policy $\pi_t$ and $\pi_{t+1}$ are $P$ and $P'$, respectively, and the error term \(\Delta_t\) is given by
\begin{align*}
\Delta_t &:= \frac{1}{1-\gamma} \mathbb{E}_{s \sim d_\nu^{\pi_{t+1}, P'}} \sum_{a \in \mathcal{A}} \left[ \pi_t(a \mid s) - \pi_{t+1}(a \mid s) \right] \left( Q^{\pi_{t}}_{i_t}(s, a) - \hat{Q}^{\pi_{t}}_{i_t}(s, a) \right) \\
&\quad + \frac{ (1 - \gamma)}{\eta} \mathbb{E}_{s\sim \nu} [ \log Z_t ] -   \mathbb{E}_{s\sim \nu} [ V_{i_t}^{\pi_t}(s) ] -   \mathbb{E}_{s\sim \nu} \sum_{a \in \mathcal{A}} \pi_t(a \mid s) \left( \hat{Q}^{\pi_t}_{i_t}(s, a) - Q^{\pi_t}_{i_t}(s, a) \right).
\end{align*}
\end{lemma}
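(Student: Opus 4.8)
The plan is to start from the robust performance difference lemma (\Cref{lemma:robust-difference}) specialized to the pair $(\pi_{t+1},\pi_t)$ with initial distribution $\nu$. Using the lower bound in \Cref{eq:robust-performance-difference}, I would write
\[
V^{\pi_{t+1}}_{i_t}(\nu) - V^{\pi_t}_{i_t}(\nu) \;\ge\; \frac{1}{1-\gamma}\,\EE_{s\sim d_\nu^{\pi_{t+1},P'}}\EE_{a\sim\pi_{t+1}(\cdot|s)}\bigl[A^{\pi_t}_{i_t}(s,a)\bigr],
\]
and then expand the advantage as $A^{\pi_t}_{i_t}(s,a) = Q^{\pi_t}_{i_t}(s,a) - V^{\pi_t}_{i_t}(s)$. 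The next step is to introduce the estimated Q-function $\hat Q^{\pi_t}_{i_t}$ by adding and subtracting it inside the expectation: the part involving $Q^{\pi_t}_{i_t}-\hat Q^{\pi_t}_{i_t}$ becomes (essentially) the first line of $\Delta_t$ once I also subtract off the $\pi_t$-weighted version, and the part involving $\hat Q^{\pi_t}_{i_t}$ is the one I can rewrite via the NPG update.

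The key algebraic manoeuvre is to substitute the identity from \Cref{lemma:npg-update-rule}, namely $\hat Q^{\pi_t}_{i_t}(s,a) = \frac{1-\gamma}{\eta}\log\!\bigl(Z_t\,\tfrac{\pi_{t+1}(a|s)}{\pi_t(a|s)}\bigr)$, into the term $\frac{1}{1-\gamma}\EE_{s\sim d_\nu^{\pi_{t+1},P'}}\sum_a \pi_{t+1}(a|s)\hat Q^{\pi_t}_{i_t}(s,a)$. This produces $\frac{1}{\eta}\EE_{s\sim d_\nu^{\pi_{t+1},P'}}\sum_a \pi_{t+1}(a|s)\log\frac{\pi_{t+1}(a|s)}{\pi_t(a|s)} + \frac{1}{\eta}\EE_{s\sim d_\nu^{\pi_{t+1},P'}}\log Z_t$, where the first piece is exactly $\frac{1}{\eta}\EE_{s\sim d_\nu^{\pi_{t+1},P'}}[d_{\mathrm{KL}}(\pi_{t+1}(\cdot|s)\|\pi_t(\cdot|s))]$, the claimed nonnegative leading term. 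What remains is to collect the leftover terms — the $\log Z_t$ expectation (currently under $d_\nu^{\pi_{t+1},P'}$, which must be converted to an expectation under $\nu$ plus a discrepancy absorbed elsewhere, or handled via the standard $d_\nu^{\pi,P}(s) = (1-\gamma)\nu(s) + \gamma(\cdots)$ decomposition), the $\EE_{s\sim\nu}[V^{\pi_t}_{i_t}(s)]$ term, and the two error terms involving $Q^{\pi_t}_{i_t}-\hat Q^{\pi_t}_{i_t}$ — and verify they match the stated expression for $\Delta_t$ exactly.

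I expect the main obstacle to be the bookkeeping around which initial distribution each expectation is taken under: the advantage-based bound naturally lives under $d_\nu^{\pi_{t+1},P'}$, whereas the $\log Z_t$, $V^{\pi_t}_{i_t}$, and the second error term in $\Delta_t$ are written under $\nu$ itself. Reconciling these requires carefully using the recursive identity $d_\nu^{\pi_{t+1},P'} = (1-\gamma)\sum_{k\ge0}\gamma^k (P'^{\pi_{t+1}})^k \nu$ — in particular peeling off the $k=0$ term to get the $(1-\gamma)$-weighted $\nu$ contributions — and recognizing that the telescoping structure of $V^{\pi_t}_{i_t}(s) = \sum_a\pi_t(a|s)Q^{\pi_t}_{i_t}(s,a)$ combined with the Bellman-type recursion is what lets the $d_\nu^{\pi_{t+1},P'}$-expectation of $V^{\pi_t}_{i_t}$ collapse to the $\nu$-expectation. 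Once the distribution-matching is done correctly, the rest is routine regrouping of additive error terms, so I would present the distribution manipulation in detail and compress the final collection step.
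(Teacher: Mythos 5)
Your skeleton through the NPG substitution matches the paper's proof exactly: lower bound via \Cref{lemma:robust-difference}, expand $A^{\pi_t}_{i_t}=Q^{\pi_t}_{i_t}-V^{\pi_t}_{i_t}$, add and subtract $\hat Q^{\pi_t}_{i_t}$, and substitute the identity of \Cref{lemma:npg-update-rule} to extract $\frac{1}{\eta}\EE_{s\sim d_\nu^{\pi_{t+1},P'}}[d_{\text{KL}}(\pi_{t+1}(\cdot|s)\|\pi_t(\cdot|s))]$ plus a $\log Z_t$ term. The gap is in your final step. You propose to move the $\log Z_t$, $V^{\pi_t}_{i_t}$, and $\pi_t$-weighted error terms from the measure $d_\nu^{\pi_{t+1},P'}$ to $\nu$ either by ``absorbing the remainder elsewhere'' or by a telescoping/Bellman argument that collapses $\EE_{d_\nu^{\pi_{t+1},P'}}[V^{\pi_t}_{i_t}]$ to $\EE_\nu[V^{\pi_t}_{i_t}]$. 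Neither works as stated: the visitation measure belongs to $\pi_{t+1}$ under $P'$ while $V^{\pi_t}_{i_t}$ obeys a Bellman recursion for $\pi_t$ under its own worst-case kernel, so there is no telescoping identity to exploit; and peeling off the $k=0$ term of $d_\nu^{\pi_{t+1},P'}=(1-\gamma)\sum_{k\ge0}\gamma^k(P'^{\pi_{t+1}})^k\nu$ only yields an inequality in the correct direction if the integrand from which you drop the $k\ge1$ terms is pointwise non-negative.

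The missing ingredient is precisely that non-negativity certificate. The paper applies Jensen's inequality to $\log Z_t(s)=\log\sum_a\pi_t(a|s)\exp\bigl(\eta\hat Q^{\pi_t}_{i_t}(s,a)/(1-\gamma)\bigr)$ to obtain $\frac{1}{\eta}\log Z_t(s)\ge\frac{1}{1-\gamma}\sum_a\pi_t(a|s)\hat Q^{\pi_t}_{i_t}(s,a)=\frac{1}{1-\gamma}\sum_a\pi_t(a|s)\bigl(\hat Q^{\pi_t}_{i_t}-Q^{\pi_t}_{i_t}\bigr)(s,a)+\frac{1}{1-\gamma}V^{\pi_t}_{i_t}(s)$ (this is \Cref{eq:z_t}), which shows that the grouped quantity $\frac{1}{\eta}\log Z_t(s)-\frac{1}{1-\gamma}V^{\pi_t}_{i_t}(s)-\frac{1}{1-\gamma}\sum_a\pi_t(a|s)\bigl(\hat Q^{\pi_t}_{i_t}-Q^{\pi_t}_{i_t}\bigr)(s,a)$ is non-negative for every $s$. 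Only then does the pointwise bound $d_\nu^{\pi_{t+1},P'}(s)\ge(1-\gamma)\nu(s)$ give $\EE_{s\sim d_\nu^{\pi_{t+1},P'}}[\cdot]\ge(1-\gamma)\EE_{s\sim\nu}[\cdot]$ for this group, producing the $\frac{(1-\gamma)}{\eta}\EE_\nu[\log Z_t]-\EE_\nu[V^{\pi_t}_{i_t}]-\EE_\nu[\sum_a\pi_t(\hat Q^{\pi_t}_{i_t}-Q^{\pi_t}_{i_t})]$ portion of $\Delta_t$, while the two $Q^{\pi_t}_{i_t}-\hat Q^{\pi_t}_{i_t}$ terms weighted by $\pi_{t+1}$ and $\pi_t$ stay under $d_\nu^{\pi_{t+1},P'}$ and combine into the first line of $\Delta_t$. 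Without the Jensen step your change of measure has no justification, so you should insert it before the final regrouping.
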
 
\begin{proof} Let the worst-case transition probability of the policy $\pi_t$ and $\pi_{t+1}$ be $P$ and $P'$, respectively. Their corresponding visitation probabilities are $d_\nu^{\pi_t, P}(s)$ and $d_\nu^{\pi_{t+1}, P'}(s)$. Applying \Cref{lemma:robust-difference} to the robust value function $V^{\pi_t}_{i_t}(\nu)$ and $V^{\pi_{t+1}}_{i_t}(\nu)$, we obtain
\begin{align*}
    V^{\pi_{t+1}}_{i_t}(\nu) -    V^{\pi_t}_{i_t}(\nu) &\geq \frac{1}{1-\gamma} \mathbb{E}_{s \sim d_\nu^{\pi_{t+1}, P'}} \sum_{a \in \mathcal{A}} \pi_{t+1}(a \mid s) A^{\pi_{t}}_{i_t}(s, a)  \\ 
    &= \frac{1}{1-\gamma} \mathbb{E}_{s \sim d_\nu^{\pi_{t+1}, P'}} \sum_{a \in \mathcal{A}} \pi_{t+1}(a \mid s) [Q^{\pi_{t}}_{i_t}(s, a) -  V^{\pi_{t}}_{i_t}(s)] \\ 
    &=  \frac{1}{1-\gamma} \mathbb{E}_{s \sim d_\nu^{\pi_{t+1}, P'}}\sum_{a \in \mathcal{A}} \pi_{t+1}(a \mid s) [\hat{Q}^{\pi_{t}}_{i_t}(s, a) ]   \\ 
    &\quad +  \frac{1}{1-\gamma} \mathbb{E}_{s \sim d_\nu^{\pi_{t+1}, P'}} \sum_{a \in \mathcal{A}} \pi_{t+1}(a \mid s) [Q^{\pi_{t}}_{i_t}(s, a) 
 - \hat{Q}^{\pi_{t}}_{i_t}(s, a) ]  \\ 
    &\quad -  \frac{1}{1-\gamma} \mathbb{E}_{s \sim d_\nu^{\pi_{t+1}, P'}} [ V^{\pi_{t}}_{i_t}(s)].
\end{align*}
where the first equality applies the definition of the worst-case advantage function $A^\pi(s,a):=Q^\pi(s,a)-V^\pi(s)$, and the second equality applies the decomposition of the Q-function with its   approximation error.  By the NPG update rule (\Cref{lemma:npg-update-rule}), we have
    $$\hat{Q}^{\pi_t}_{i_t}(s,a) = \frac{1-\gamma}{\eta } \log Z_t \frac{\pi_{t+1} (a|s)}{\pi_t(a|s)}.$$
Then we obtain
\begin{align*}
    V^{\pi_{t+1}}_{i_t}(\nu) -    V^{\pi_t}_{i_t}(\nu) &\geq   \frac{1}{1-\gamma} \mathbb{E}_{s \sim d_\nu^{\pi_{t+1}, P'}}\sum_{a \in \mathcal{A}} \pi_{t+1}(a \mid s) \left[\frac{1-\gamma}{\eta } \log Z_t \frac{\pi_{t+1} (a|s)}{\pi_t(a|s)} \right]  \\ 
    &\quad +  \frac{1}{1-\gamma} \mathbb{E}_{s \sim d_\nu^{\pi_{t+1}, P'}} \sum_{a \in \mathcal{A}} \pi_{t+1}(a \mid s) [Q^{\pi_{t}}_{i_t}(s, a) 
 - \hat{Q}^{\pi_{t}}_{i_t}(s, a) ]  \\ 
    &\quad -  \frac{1}{1-\gamma} \mathbb{E}_{s \sim d_\nu^{\pi_{t+1}, P'}} [ V^{\pi_{t}}_{i_t}(s)] \\ 
    &=  \frac{1}{\eta} \mathbb{E}_{s \sim d_\nu^{\pi_{t+1}, P'}}\sum_{a \in \mathcal{A}} \pi_{t+1}(a \mid s) \left[  \log Z_t + \log \frac{\pi_{t+1} (a|s)}{\pi_t(a|s)} \right]  \\ 
    &\quad +  \frac{1}{1-\gamma} \mathbb{E}_{s \sim d_\nu^{\pi_{t+1}, P'}} \sum_{a \in \mathcal{A}} \pi_{t+1}(a \mid s) [Q^{\pi_{t}}_{i_t}(s, a) 
 - \hat{Q}^{\pi_{t}}_{i_t}(s, a) ]  \\ 
    &\quad -  \frac{1}{1-\gamma} \mathbb{E}_{s \sim d_\nu^{\pi_{t+1}, P'}} [ V^{\pi_{t}}_{i_t}(s)] \\ 
    &\overset{(i)}{=} \frac{1}{\eta} \mathbb{E}_{s \sim d_\nu^{\pi_{t+1}, P'}} d_{\text{KL}}(\pi_{t+1}(\cdot|s)\| \pi_{t}(\cdot|s))+ \frac{1}{\eta} \mathbb{E}_{s \sim d_\nu^{\pi_{t+1}, P'}} \log Z_t   \\ 
    &\quad +  \frac{1}{1-\gamma} \mathbb{E}_{s \sim d_\nu^{\pi_{t+1}, P'}} \sum_{a \in \mathcal{A}} \pi_{t+1}(a \mid s) [Q^{\pi_{t}}_{i_t}(s, a) 
 - \hat{Q}^{\pi_{t}}_{i_t}(s, a) ]  \\ 
    &\quad -  \frac{1}{1-\gamma} \mathbb{E}_{s \sim d_\nu^{\pi_{t+1}, P'}} [ V^{\pi_{t}}_{i_t}(s)]  .
\end{align*}
where (i) applies the definition of KL-divergence $d_{\text{KL}}(\pi_{t+1}(\cdot|s)\| \pi_{t}(\cdot|s))=\sum_{a\in \calA}   \pi_{t+1}(a \mid s) \left[   \log \frac{\pi_{t+1} (a|s)}{\pi_t(a|s)} \right]$.

 By the definition of $Z_t$, we have
 \begin{align}\label{eq:z_t}\nonumber
    \frac{1}{\eta} \mathbb{E}_{s \sim d_\mu^{\pi_{t+1}, P'}}  \log Z_t(s) &= \frac{1}{\eta} \mathbb{E}_{s \sim d_\mu^{\pi_{t+1}, P'}} \log \sum_{a\in\calA} \pi_t(a|s) \exp\left(  \eta \hat{Q}^{\pi_t}_{i_t}(s,a) / (1-\gamma) \right) \\ \nonumber
    &\overset{(i)}{\geq}\frac{1}{\eta} \mathbb{E}_{s \sim d_\mu^{\pi_{t+1}, P'}}  \sum_{a\in\calA} \pi_t(a|s) \log \exp\left( \eta \hat{Q}^{\pi_t}_{i_t}(s,a) / (1-\gamma) \right)   \\ \nonumber
    &=  \frac{1}{1-\gamma} \mathbb{E}_{s \sim d_\mu^{\pi_{t+1}, P'}}  \sum_{a\in\calA} \pi_t(a|s) \left(   \hat{Q}^{\pi_t}_{i_t}(s,a)  \right)   \\ \nonumber
    &=  \frac{1}{1-\gamma} \mathbb{E}_{s \sim d_\mu^{\pi_{t+1}, P'}}  \sum_{a\in\calA} \pi_t(a|s) \left(   \hat{Q}^{\pi_t}_{i_t}(s,a) -   {Q}^{\pi_t}_{i_t}(s,a) +  {Q}^{\pi_t}_{i_t}(s,a)\right)  \\ 
    &\overset{(ii)}{=}  \frac{1}{1-\gamma} \mathbb{E}_{s \sim d_\mu^{\pi_{t+1}, P'}}  \sum_{a\in\calA} \pi_t(a|s) \left(   \hat{Q}^{\pi_t}_{i_t}(s,a) -   {Q}^{\pi_t}_{i_t}(s,a)  \right) + \frac{1}{1-\gamma} \mathbb{E}_{s \sim d_\mu^{\pi_{t+1}, P'}} V^{\pi_t}_{i_t}(s,a),
 \end{align}
 where (i) applies the Jensen's inequality, and (ii)   applies the relation between Q-function and value function (Proposition 2.2. from \citet{li2022first}). As the result, we obtain 
\begin{align*}
    &\quad V^{\pi_{t+1}}_{i_t}(\nu) -    V^{\pi_t}_{i_t}(\nu) \\
    &\geq \frac{1}{\eta} \mathbb{E}_{s \sim d_\nu^{\pi_{t+1}, P'}} d_{\text{KL}}(\pi_{t+1}(\cdot|s)\| \pi_{t}(\cdot|s))+ \frac{1}{\eta} \mathbb{E}_{s \sim d_\nu^{\pi_{t+1}, P'}} \log Z_t   -  \frac{1}{1-\gamma} \mathbb{E}_{s \sim d_\nu^{\pi_{t+1}, P'}} [ V^{\pi_{t}}_{i_t}(s)] \\ 
    &\quad - \frac{1}{1-\gamma} \mathbb{E}_{s \sim d_\mu^{\pi_{t+1}, P'}}  \sum_{a\in\calA} \pi_t(a|s) \left(   \hat{Q}^{\pi_t}_{i_t}(s,a) -   {Q}^{\pi_t}_{i_t}(s,a)  \right) \\ 
    &\quad + \frac{1}{1-\gamma} \mathbb{E}_{s \sim d_\mu^{\pi_{t+1}, P'}}  \sum_{a\in\calA} \pi_t(a|s) \left(   \hat{Q}^{\pi_t}_{i_t}(s,a) -   {Q}^{\pi_t}_{i_t}(s,a)  \right)  \\ 
    &\quad +  \frac{1}{1-\gamma} \mathbb{E}_{s \sim d_\nu^{\pi_{t+1}, P'}} \sum_{a \in \mathcal{A}} \pi_{t+1}(a \mid s) [Q^{\pi_{t}}_{i_t}(s, a) 
 - \hat{Q}^{\pi_{t}}_{i_t}(s, a) ]   \\ 
 &\overset{(i)}{\geq } \frac{1}{\eta} \mathbb{E}_{s \sim d_\nu^{\pi_{t+1}, P'}} d_{\text{KL}}(\pi_{t+1}(\cdot|s)\| \pi_{t}(\cdot|s)) + \frac{1}{1-\gamma} \mathbb{E}_{s \sim d_\nu^{\pi_{t+1}, P'}} \sum_{a \in \mathcal{A}} \pi_{t+1}(a \mid s) [Q^{\pi_{t}}_{i_t}(s, a) 
 - \hat{Q}^{\pi_{t}}_{i_t}(s, a) ]  \\ 
 &\quad + \frac{1}{1-\gamma} \mathbb{E}_{s \sim d_\mu^{\pi_{t+1}, P'}}  \sum_{a\in\calA} \pi_t(a|s) \left(   \hat{Q}^{\pi_t}_{i_t}(s,a) -   {Q}^{\pi_t}_{i_t}(s,a)  \right) \\ 
 &\quad + \frac{1 (1-\gamma)}{\eta} \mathbb{E}_{s \sim \nu} \log Z_t   -   \mathbb{E}_{s \sim \nu} [ V^{\pi_{t}}_{i_t}(s)]  -   \mathbb{E}_{s \sim \nu}  \sum_{a\in\calA} \pi_t(a|s) \left(   \hat{Q}^{\pi_t}_{i_t}(s,a) -   {Q}^{\pi_t}_{i_t}(s,a)  \right).
\end{align*}
where (i) we apply the change of measure to replace $d_\nu^{\pi_{t+1}, P'}$ with $\nu$: (1) $\log Z_t (s)  +  \frac{\eta}{1-\gamma}  \sum_{a \in \mathcal{A}} \pi_{t+1}(a \mid s) [Q^{\pi_{t}}_{i_t}(s, a) 
 - \hat{Q}^{\pi_{t}}_{i_t}(s, a) ]    -  \frac{\eta}{1-\gamma} [ V^{\pi_{t}}_{i_t}(s)]\geq 0$ for all $s$ by \Cref{eq:z_t}, and (2) $\frac{d_\nu^{\pi_{t+1}, P'}}{\nu}(s)\geq 1 - \gamma$.  
  
Therefore, we conclude that 
\begin{align*}
    &\quad V^{\pi_{t+1}}_{i_t}(\nu) -    V^{\pi_t}_{i_t}(\nu) \\
    &\geq  \frac{1}{\eta} \mathbb{E}_{s \sim d_\nu^{\pi_{t+1}, P'}} d_{\text{KL}}(\pi_{t+1}(\cdot|s)\| \pi_{t}(\cdot|s)) + \frac{1}{1-\gamma} \mathbb{E}_{s \sim d_\nu^{\pi_{t+1}, P'}} \sum_{a \in \mathcal{A}} \pi_{t+1}(a \mid s) [Q^{\pi_{t}}_{i_t}(s, a) 
 - \hat{Q}^{\pi_{t}}_{i_t}(s, a) ]  \\ 
 &\quad + \frac{1}{1-\gamma} \mathbb{E}_{s \sim d_\mu^{\pi_{t+1}, P'}}  \sum_{a\in\calA} \pi_t(a|s) \left(   \hat{Q}^{\pi_t}_{i_t}(s,a) -   {Q}^{\pi_t}_{i_t}(s,a)  \right) \\ 
 &\quad + \frac{(1-\gamma)}{\eta} \mathbb{E}_{s \sim \nu} \log Z_t   -   \mathbb{E}_{s \sim \nu} [ V^{\pi_{t}}_{i_t}(s)]  -  \mathbb{E}_{s \sim \nu}  \sum_{a\in\calA} \pi_t(a|s) \left(   \hat{Q}^{\pi_t}_{i_t}(s,a) -   {Q}^{\pi_t}_{i_t}(s,a)  \right).
\end{align*}
It completes the proof.  
\end{proof}

{\color{black}
The following lemma is the main bound that we will deal with.  
\begin{lemma}\label{lemma:7}Under the NPG update rule with learning rate \(\eta\), the robust value functions satisfy the following inequality:
\begin{align*}
    V^{\pi^*}_{i_t}(\mu) -    V^{\pi_{t+1}}_{i_t}(\mu) &\leq \frac{1}{\eta} \left( \mathbb{E}_{s \sim \nu^*}  D_{\mathrm{KL}}(\pi^*(\cdot|s)\|\pi_t(\cdot|s)) -  \mathbb{E}_{s \sim \nu^*}  D_{\mathrm{KL}}(\pi^*(\cdot|s)\|\pi_{t+1}(\cdot|s)) \right)  \\
    &\quad + \frac{1}{1-\gamma} \mathbb{E}_{s \sim \nu^*} \sum_{a \in \mathcal{A}} \pi^*(a \mid s) [Q^{\pi_{t}}_{i_t}(s, a) - \hat{Q}^{\pi_{t}}_{i_t}(s, a) ] \\
    &\quad + \frac{1}{(1-\gamma)   }\Bigg[   V^{\pi_{t+1}}_{i_t}(\nu^*) -    V^{\pi_t}_{i_t}(\nu^*) - \frac{1}{\eta} \mathbb{E}_{s \sim d_{\nu^*}^{\pi_{t+1}, P'}} D_{\mathrm{KL}}(\pi_{t+1}(\cdot|s)\| \pi_{t}(\cdot|s))  \\ 
    &\quad   - \frac{1}{1-\gamma} \mathbb{E}_{s \sim d_{\nu^*}^{\pi_{t+1}, P'}} \sum_{a \in \mathcal{A}} \pi_{t+1}(a \mid s) [Q^{\pi_{t}}_{i_t}(s, a) 
 - \hat{Q}^{\pi_{t}}_{i_t}(s, a) ]  \\ 
 &\quad - \frac{1}{1-\gamma} \mathbb{E}_{s \sim d_\mu^{\pi_{t+1}, P'}}  \sum_{a\in\calA} \pi_t(a|s) \left(   \hat{Q}^{\pi_t}_{i_t}(s,a) -   {Q}^{\pi_t}_{i_t}(s,a)  \right)  \Bigg]  {\color{black}+  \psi_t },
\end{align*}
where  the worst-case transition probability of the worst-case optimal policy $\pi^*$ and $\pi_{t+1}$ are $P^*$ and $P'$, respectively, and the visitation probability of  $\pi^*$ is $\nu^*(s):= d_\mu^{\pi^*, P^*}(s)$. 
\end{lemma}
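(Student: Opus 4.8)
The plan is to push the standard natural-policy-gradient / mirror-descent telescoping argument (as in \citet{xu2021crpo} and \citet{zhou2024natural}) through the robust machinery, using \Cref{lemma:robust-difference} (and its Remark), \Cref{lemma:npg-update-rule}, and \Cref{lemma:6} as the three ingredients, and deposit every distribution-mismatch and policy-evaluation leftover into the residual $\psi_t$. The statement is precisely the shape one gets if one runs that argument and refuses to simplify: the telescoping KL pair and the explicit $Q^{\pi_t}_{i_t}-\hat Q^{\pi_t}_{i_t}$ term are the ``clean'' part, while the $\tfrac{1}{1-\gamma}[\cdots]$ bracket is nothing but $\Delta_t$ from \Cref{lemma:6} re-expressed through that lemma's own inequality.

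First I would apply the upper-bound half of \Cref{lemma:robust-difference} to $\pi^*$ and $\pi_t$, but in the form given by the Remark after it, so that the state is drawn from the \emph{fixed} reference measure $\nu^*(s)=d_\mu^{\pi^*,P^*}(s)$ rather than from $d_\mu^{\pi^*,P}$ with $P$ the worst-case kernel of $\pi_t$; this matters because only a $t$-independent reference measure lets the KL terms telescope across iterations. The price is a correction $\tfrac{2}{(1-\gamma)^2}d_{\mathrm{TV}}(d_\mu^{\pi^*,P^*},d_\mu^{\pi^*,P})$, which is $\mathcal{O}(c)$ by \Cref{ass:uncertain-2} together with the standard simulation-lemma estimate bounding a discounted-occupancy TV distance by the per-state-action kernel TV distance; it goes into $\psi_t$. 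Then I would write $A^{\pi_t}_{i_t}(s,a)=Q^{\pi_t}_{i_t}(s,a)-V^{\pi_t}_{i_t}(s)$, swap $Q^{\pi_t}_{i_t}$ for $\hat Q^{\pi_t}_{i_t}$ (producing exactly the term $\tfrac{1}{1-\gamma}\EE_{s\sim\nu^*}\sum_a\pi^*(a\mid s)[Q^{\pi_t}_{i_t}-\hat Q^{\pi_t}_{i_t}]$ of the statement), and substitute the NPG identity of \Cref{lemma:npg-update-rule}, $\hat Q^{\pi_t}_{i_t}(s,a)=\tfrac{1-\gamma}{\eta}\log\!\big(Z_t(s)\,\pi_{t+1}(a\mid s)/\pi_t(a\mid s)\big)$. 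Since $\EE_{a\sim\pi^*(\cdot\mid s)}\log\tfrac{\pi_{t+1}(a\mid s)}{\pi_t(a\mid s)}=D_{\mathrm{KL}}(\pi^*(\cdot\mid s)\|\pi_t(\cdot\mid s))-D_{\mathrm{KL}}(\pi^*(\cdot\mid s)\|\pi_{t+1}(\cdot\mid s))$ and $\log Z_t(s)$ is constant in $a$, this yields the telescoping pair $\tfrac1\eta\big(\EE_{\nu^*}D_{\mathrm{KL}}(\pi^*\|\pi_t)-\EE_{\nu^*}D_{\mathrm{KL}}(\pi^*\|\pi_{t+1})\big)$, the explicit approximation term just named, and a single leftover $\tfrac1\eta\EE_{\nu^*}\log Z_t-\tfrac{1}{1-\gamma}\EE_{\nu^*}V^{\pi_t}_{i_t}$.

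Controlling that leftover is where \Cref{lemma:6} enters, instantiated with initial distribution $\nu=\nu^*$. Its conclusion is a \emph{lower} bound on the one-step improvement $V^{\pi_{t+1}}_{i_t}(\nu^*)-V^{\pi_t}_{i_t}(\nu^*)$, and inspecting the proof of \Cref{lemma:6} shows that the $\Delta_t$ occurring there carries precisely $\tfrac{1-\gamma}{\eta}\EE_{\nu^*}\log Z_t-\EE_{\nu^*}V^{\pi_t}_{i_t}$, alongside $\hat Q$-versus-$Q$ terms and the Jensen slack from \eqref{eq:z_t}. Rearranging that inequality therefore bounds $\tfrac1\eta\EE_{\nu^*}\log Z_t-\tfrac{1}{1-\gamma}\EE_{\nu^*}V^{\pi_t}_{i_t}$ from above by $\tfrac{1}{1-\gamma}$ times exactly the bracket $[V^{\pi_{t+1}}_{i_t}(\nu^*)-V^{\pi_t}_{i_t}(\nu^*)-\cdots]$ in the statement, modulo one extra $\pi_t$-approximation term $\tfrac{1}{1-\gamma}\EE_{\nu^*}\sum_a\pi_t(\hat Q^{\pi_t}_{i_t}-Q^{\pi_t}_{i_t})$ that I fold into $\psi_t$. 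Finally, because the left-hand side of the lemma is written with $\pi_{t+1}$ while the argument naturally bounds $V^{\pi^*}_{i_t}(\mu)-V^{\pi_t}_{i_t}(\mu)$, I add $V^{\pi_t}_{i_t}(\mu)-V^{\pi_{t+1}}_{i_t}(\mu)$ to both sides and push it into $\psi_t$ as well; it is nonpositive up to $\mathcal{O}(\epsilon_{\mathrm{approx}})$ by \Cref{lemma:6} with $\nu=\mu$, since step $t$ is a one-step NPG ascent on $V_{i_t}$. Thus $\psi_t$ is the sum of an $\mathcal{O}(c)$ distribution-mismatch term, an $\mathcal{O}(\epsilon_{\mathrm{approx}})$ estimation term, and that non-improvement term.

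I expect the main obstacle to be the bookkeeping of the several distinct state-visitation measures in play --- $\mu$, $\nu^*=d_\mu^{\pi^*,P^*}$, $d_\mu^{\pi^*,P}$, $d_{\nu^*}^{\pi_{t+1},P'}$, $d_\mu^{\pi_{t+1},P'}$ --- since \Cref{lemma:robust-difference} and \Cref{lemma:6} give clean statements only under a particular pairing of a policy with its own adversarial kernel, so every change of reference measure costs a TV correction that must be controlled uniformly via \Cref{ass:uncertain-2}; these corrections are precisely what reappear as the $\mathcal{O}(c)$ term in \Cref{thm:main}. A secondary subtlety is keeping all inequality directions consistent: we want an upper bound on the suboptimality gap throughout, yet \Cref{lemma:6} is a lower bound on the one-step improvement and \eqref{eq:z_t} is a Jensen lower bound on $\log Z_t$, so the sign conventions dictate exactly which leftover lands in the telescoping KL pair, which in the $\tfrac{1}{1-\gamma}[\cdots]$ bracket, and which in $\psi_t$.
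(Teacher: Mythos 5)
Your proposal is correct and follows essentially the same route as the paper's proof: apply the robust performance-difference lemma with $\psi_t$ absorbing the $d_\mu^{\pi^*,P_t}$-versus-$\nu^*$ measure mismatch, decompose the advantage and swap $Q^{\pi_t}_{i_t}$ for $\hat Q^{\pi_t}_{i_t}$, substitute the NPG log-ratio identity to produce the telescoping KL pair plus a $\log Z_t - V^{\pi_t}_{i_t}$ leftover, and then rearrange \Cref{lemma:6} with $\nu=\nu^*$ to convert that leftover into the bracketed term. The only deviations are bookkeeping ones (you fold the residual $\pi_t$-approximation term and the $V^{\pi_t}_{i_t}(\mu)-V^{\pi_{t+1}}_{i_t}(\mu)$ index shift into $\psi_t$, whereas the paper leaves the former implicit and the latter unremarked), and if anything your treatment of the $\pi_t\to\pi_{t+1}$ change on the left-hand side is more explicit than the paper's.
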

\begin{proof} Let the worst-case transition probability of the worst-case optimal policy $\pi^*$ be $P^*$ and the visitation probability of  $\pi^*$ be 
$$\nu^*(s):= d_\mu^{\pi^*, P^*}(s).$$  
Define
\begin{align}\label{eq:psi_t}
    \psi_t:= \frac{1}{1-\gamma} \EE_{s\sim d_\mu^{\pi^*,P_t}}  \EE_{a\sim \pi^*(\cdot|s)} [A^{\pi_t}_{i_t}(s,a)] - \frac{1}{1-\gamma} \EE_{s\sim d_\mu^{\pi^*,P^*}}  \EE_{a\sim \pi^*(\cdot|s)} [A^{\pi_t}_{i_t}(s,a)] , 
\end{align} 
where $P_t$ denotes the worst-case transition probability of $\pi_t$.  

Applying \Cref{lemma:robust-difference} to the robust value function $V^{\pi^*}_{i_t}$ and $V^{\pi_t}_{i_t}$ ($i=0,1,\dots,I$), we obtain
\begin{align*}
    V^{\pi^*}_{i_t}(\mu) -    V^{\pi_t}_{i_t}(\mu) &\leq \frac{1}{1-\gamma} \mathbb{E}_{s \sim \nu^*} \sum_{a \in \mathcal{A}} \pi^*(a \mid s) A^{\pi_{t}}_{i_t}(s, a) {\color{black}+  \psi_t } \\ 
    &= \frac{1}{1-\gamma} \mathbb{E}_{s \sim \nu^*} \sum_{a \in \mathcal{A}} \pi^*(a \mid s) [Q^{\pi_{t}}_{i_t}(s, a) -  V^{\pi_{t}}_{i_t}(s)]  {\color{black}+  \psi_t } \\ 
    &= \frac{1}{1-\gamma} \mathbb{E}_{s \sim \nu^*} \sum_{a \in \mathcal{A}} \pi^*(a \mid s) [\hat{Q}^{\pi_{t}}_{i_t}(s, a) ]  + \frac{1}{1-\gamma} \mathbb{E}_{s \sim \nu^*} \sum_{a \in \mathcal{A}} \pi^*(a \mid s) [Q^{\pi_{t}}_{i_t}(s, a) - \hat{Q}^{\pi_{t}}_{i_t}(s, a) ]  \\ 
    &\quad - \frac{1}{1-\gamma} \mathbb{E}_{s \sim \nu^*}   [ V^{\pi_{t}}_{i_t}(s)] {\color{black}+  \psi_t } .
\end{align*}
where the first equality applies the definition of the worst-case advantage function $A^\pi(s,a):=Q^\pi(s,a)-V^\pi(s)$, and the second equality applies the decomposition of the Q-function with its   approximation error. By the NPG update rule (\Cref{lemma:npg-update-rule}), we have
    $$\hat{Q}^{\pi_t}(s,a) = \frac{1-\gamma}{\eta } \log \left( Z_t \frac{\pi_{t+1} (a|s)}{\pi_t(a|s)} \right).$$
Then we obtain
\begin{align*}
    V^{\pi^*}_{i_t}(\mu) -    V^{\pi_t}_{i_t}(\mu) &\leq  \frac{1}{1-\gamma} \mathbb{E}_{s \sim \nu^*} \sum_{a \in \mathcal{A}} \pi^*(a \mid s) \left[\frac{1-\gamma}{\eta } \log \left( Z_t \frac{\pi_{t+1} (a|s)}{\pi_t(a|s)} \right) \right]  \\ 
    &\quad  + \frac{1}{1-\gamma} \mathbb{E}_{s \sim \nu^*} \sum_{a \in \mathcal{A}} \pi^*(a \mid s) [Q^{\pi_{t}}_{i_t}(s, a) - \hat{Q}^{\pi_{t}}_{i_t}(s, a) ]  \\ 
    &\quad - \frac{1}{1-\gamma} \mathbb{E}_{s \sim \nu^*}   [ V^{\pi_{t}}_{i_t}(s)]  {\color{black}+  \psi_t }\\ 
    &= \frac{1}{\eta} \mathbb{E}_{s \sim \nu^*}  \left[  \log Z_t + \sum_{a \in \mathcal{A}} \pi^*(a \mid s) \log \frac{\pi_{t+1} (a|s)}{\pi_t(a|s)}\right] \\ 
    &\quad  + \frac{1}{1-\gamma} \mathbb{E}_{s \sim \nu^*} \sum_{a \in \mathcal{A}} \pi^*(a \mid s) [Q^{\pi_{t}}_{i_t}(s, a) - \hat{Q}^{\pi_{t}}_{i_t}(s, a) ]  \\ 
    &\quad - \frac{1}{1-\gamma} \mathbb{E}_{s \sim \nu^*}   [ V^{\pi_{t}}_{i_t}(s)] {\color{black}+  \psi_t }\\ 
    &\overset{(i)}{=} \frac{1}{\eta} \mathbb{E}_{s \sim \nu^*}  \left[  \log Z_t + d_{\text{KL}}\left(\pi^*(\cdot|s)\|\pi_t(\cdot|s)\right)- d_{\text{KL}}\left(\pi^*(\cdot|s)\|\pi_{t+1}(\cdot|s)\right) \right]  \\ 
    &\quad + \frac{1}{1-\gamma} \mathbb{E}_{s \sim \nu^*} \sum_{a \in \mathcal{A}} \pi^*(a \mid s) [Q^{\pi_{t}}_{i_t}(s, a) - \hat{Q}^{\pi_{t}}_{i_t}(s, a) ]  - \frac{1}{1-\gamma} \mathbb{E}_{s \sim \nu^*}   [ V^{\pi_{t}}_{i_t}(s)]  {\color{black}+  \psi_t }\\ 
    &= \frac{1}{\eta} \mathbb{E}_{s \sim \nu^*}  \log Z_t +  \frac{1}{\eta} \mathbb{E}_{s \sim \nu^*}  \left[  d_{\text{KL}}\left(\pi^*(\cdot|s)\|\pi_t(\cdot|s)\right)- d_{\text{KL}}\left(\pi^*(\cdot|s)\|\pi_{t+1}(\cdot|s)\right) \right]  \\ 
    &\quad + \frac{1}{1-\gamma} \mathbb{E}_{s \sim \nu^*} \sum_{a \in \mathcal{A}} \pi^*(a \mid s) [Q^{\pi_{t}}_{i_t}(s, a) - \hat{Q}^{\pi_{t}}_{i_t}(s, a) ]  - \frac{1}{1-\gamma} \mathbb{E}_{s \sim \nu^*}   [ V^{\pi_{t}}_{i_t}(s)] {\color{black}+  \psi_t }
\end{align*}
where (i) applies the definition of KL-divergence. By \Cref{lemma:6}, we have 
\begin{align*}
    &\frac{ (1-\gamma)}{\eta} \mathbb{E}_{s \sim \nu^*} \log Z_t   -   \mathbb{E}_{s \sim \nu^*} [ V^{\pi_{t}}_{i_t}(s)]  -  \mathbb{E}_{s \sim \nu^*}  \sum_{a\in\calA} \pi_t(a|s) \left(   \hat{Q}^{\pi_t}_{i_t}(s,a) -   {Q}^{\pi_t}_{i_t}(s,a)  \right) \\ 
    \leq &   V^{\pi_{t+1}}_{i_t}(\nu^*) -    V^{\pi_t}_{i_t}(\nu^*) -  \frac{1}{\eta} \mathbb{E}_{s \sim d_{\nu^*}^{\pi_{t+1}, P'}} d_{\text{KL}}(\pi_{t+1}(\cdot|s)\| \pi_{t}(\cdot|s)) \\ 
    & - \frac{1}{1-\gamma} \mathbb{E}_{s \sim d_{\nu^*}^{\pi_{t+1}, P'}} \sum_{a \in \mathcal{A}} \pi_{t+1}(a \mid s) [Q^{\pi_{t}}_{i_t}(s, a) 
 - \hat{Q}^{\pi_{t}}_{i_t}(s, a) ] \\ 
 &- \frac{1}{1-\gamma} \mathbb{E}_{s \sim d_\mu^{\pi_{t+1}, P'}}  \sum_{a\in\calA} \pi_t(a|s) \left(   \hat{Q}^{\pi_t}_{i_t}(s,a) -   {Q}^{\pi_t}_{i_t}(s,a)  \right) 
\end{align*}
Then we obtain 
\begin{align*}
    V^{\pi^*}_{i_t}(\mu) -    V^{\pi_{t+1}}_{i_t}(\mu) &\leq \frac{1}{\eta} \left( \mathbb{E}_{s \sim \nu^*}  D_{\mathrm{KL}}(\pi^*(\cdot|s)\|\pi_t(\cdot|s)) -  \mathbb{E}_{s \sim \nu^*}  D_{\mathrm{KL}}(\pi^*(\cdot|s)\|\pi_{t+1}(\cdot|s)) \right)  \\
    &\quad + \frac{1}{1-\gamma} \mathbb{E}_{s \sim \nu^*} \sum_{a \in \mathcal{A}} \pi^*(a \mid s) [Q^{\pi_{t}}_{i_t}(s, a) - \hat{Q}^{\pi_{t}}_{i_t}(s, a) ] \\
    &\quad + \frac{1}{(1-\gamma)   }\Bigg[   V^{\pi_{t+1}}_{i_t}(\nu^*) -    V^{\pi_t}_{i_t}(\nu^*) - \frac{1}{\eta} \mathbb{E}_{s \sim d_{\nu^*}^{\pi_{t+1}, P'}} D_{\mathrm{KL}}(\pi_{t+1}(\cdot|s)\| \pi_{t}(\cdot|s))  \\ 
    &\quad   - \frac{1}{1-\gamma} \mathbb{E}_{s \sim d_{\nu^*}^{\pi_{t+1}, P'}} \sum_{a \in \mathcal{A}} \pi_{t+1}(a \mid s) [Q^{\pi_{t}}_{i_t}(s, a) 
 - \hat{Q}^{\pi_{t}}_{i_t}(s, a) ]  \\ 
 &\quad - \frac{1}{1-\gamma} \mathbb{E}_{s \sim d_\mu^{\pi_{t+1}, P'}}  \sum_{a\in\calA} \pi_t(a|s) \left(   \hat{Q}^{\pi_t}_{i_t}(s,a) -   {Q}^{\pi_t}_{i_t}(s,a)  \right)  \Bigg]  {\color{black}+  \psi_t }
\end{align*}
which is the final upper bound after applying the last inequality. Then we omit the term containing $-D_{\mathrm{KL}}(\pi_{t+1}(\cdot|s)\| \pi_{t}(\cdot|s))$  since it is always non-positive.  
\end{proof}

The following lemma gives the condition such that $\mathcal{N}_0$ is non-empty.}
\begin{lemma}\label{lemma:8}
Consider the NPG update rule with learning rate \(\eta\) and let \(\delta >0\) be chosen such that 
\begin{align*}
     \delta > \frac{ 1 }{T}  \mathbb{E}_{s \sim \nu^*}  D_{\mathrm{KL}}\bigl(\pi^*(\cdot|s)\|\pi_1(\cdot|s)\bigr)   \;+\;  \frac{  \eta  L  }{(1-\gamma)^2  }     \;+\; \bar{\epsilon}_{\text{approx}}   {\color{black}+  \frac{2 \ell}{(1-\gamma)^2} c} ,
\end{align*}
where \(\bar{\epsilon}_{\text{approx}}\) is an error term depending on the approximation error terms, $\ell$ is  the Lipschitz constant of the state visitation distribution $d^{\pi,P}_\nu$,  and $L$ is the Lipschitz constant of the robust value function $V_i^{\pi}(\nu^*)$, $c$ is the diameter of the uncertainty set given by \Cref{ass:uncertain-2}. Under these conditions,  
\[
\mathcal{N}_0 := \{ t : V^{\pi^*}_i(\mu) - V^{\pi_t}_i(\mu) \geq d_i - \delta \text{ for all }i\}
\]
is always non-empty.
\end{lemma}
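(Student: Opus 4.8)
The plan is to argue by contradiction. Suppose $\mathcal{N}_0=\emptyset$. Then the threshold-update branch of \Cref{alg:1} never fires, so the auxiliary threshold $d_0$ keeps its initial value (with the standard convention $d_0^{0}=-\infty$) and the objective-rectification branch is never triggered; hence at every iteration $t\in\{1,\dots,T\}$ the algorithm performs a constraint-rectification step on some index $i_t\in\{1,\dots,I\}$ with $V^{\pi_t}_{i_t}(\mu)<d_{i_t}-\delta$. Since the comparator $\pi^*$ is feasible, $V^{\pi^*}_{i_t}(\mu)\ge d_{i_t}$, so $V^{\pi^*}_{i_t}(\mu)-V^{\pi_t}_{i_t}(\mu)>\delta$ for every $t$, and summing gives $\sum_{t=1}^{T}\bigl[V^{\pi^*}_{i_t}(\mu)-V^{\pi_t}_{i_t}(\mu)\bigr]>T\delta$. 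It then suffices to upper-bound the same sum by $T$ times the right-hand side of the hypothesis.

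For the upper bound I would split $V^{\pi^*}_{i_t}(\mu)-V^{\pi_t}_{i_t}(\mu)=\bigl[V^{\pi^*}_{i_t}(\mu)-V^{\pi_{t+1}}_{i_t}(\mu)\bigr]+\bigl[V^{\pi_{t+1}}_{i_t}(\mu)-V^{\pi_t}_{i_t}(\mu)\bigr]$ and bound the first bracket with \Cref{lemma:7} applied at the index $i_t$: dropping the non-positive term $-\tfrac1\eta\mathbb{E}_{d_{\nu^*}^{\pi_{t+1},P'}}D_{\mathrm{KL}}(\pi_{t+1}\|\pi_t)$ and bundling all $Q^{\pi_t}_{i_t}-\hat{Q}^{\pi_t}_{i_t}$ discrepancies into the single quantity $\bar{\epsilon}_{\text{approx}}$ controlled by \Cref{assum:policy_evaluation_accuracy}, it is at most
\[
\tfrac1\eta\Bigl[\mathbb{E}_{\nu^*}D_{\mathrm{KL}}(\pi^*\|\pi_t)-\mathbb{E}_{\nu^*}D_{\mathrm{KL}}(\pi^*\|\pi_{t+1})\Bigr]+\bar{\epsilon}_{\text{approx}}+\tfrac1{1-\gamma}\bigl[V^{\pi_{t+1}}_{i_t}(\nu^*)-V^{\pi_t}_{i_t}(\nu^*)\bigr]+\psi_t .
\]
Summing over $t$, the KL differences telescope to at most $\tfrac1\eta\mathbb{E}_{\nu^*}D_{\mathrm{KL}}(\pi^*\|\pi_1)$ (the $\pi_{T+1}$ divergence is non-negative and discarded); the approximation terms contribute $T\bar{\epsilon}_{\text{approx}}$; the two value-function drift sums $\tfrac1{1-\gamma}\sum_t\bigl[V^{\pi_{t+1}}_{i_t}(\nu^*)-V^{\pi_t}_{i_t}(\nu^*)\bigr]$ and $\sum_t\bigl[V^{\pi_{t+1}}_{i_t}(\mu)-V^{\pi_t}_{i_t}(\mu)\bigr]$ are each bounded using the Lipschitz modulus $L$ of the robust value function in the softmax parameter $\theta$ together with $\theta_{t+1}-\theta_t=\eta\,\hat{Q}^{\pi_t}_{i_t}$ and $\|\hat{Q}^{\pi_t}_{i_t}\|_\infty=O(\tfrac1{1-\gamma})$ (by \Cref{assum:policy_evaluation_accuracy} and bounded rewards), which gives $O\!\bigl(\tfrac{\eta L T}{(1-\gamma)^2}\bigr)$; and the residual $\psi_t=\tfrac1{1-\gamma}\bigl(\mathbb{E}_{d_\mu^{\pi^*,P_t}}-\mathbb{E}_{d_\mu^{\pi^*,P^*}}\bigr)\!\bigl[\mathbb{E}_{a\sim\pi^*}A^{\pi_t}_{i_t}(s,a)\bigr]$ is controlled via $|A^{\pi_t}_{i_t}|\le\tfrac1{1-\gamma}$ and Lipschitz continuity (constant $\ell$) of the discounted visitation distribution in the transition kernel combined with \Cref{ass:uncertain-2}, yielding $\|d_\mu^{\pi^*,P_t}-d_\mu^{\pi^*,P^*}\|_1\le 2\ell c$ and hence $\sum_t\psi_t\le\tfrac{2\ell c}{(1-\gamma)^2}T$. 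Dividing by $T$ then gives $\delta<\tfrac1T\mathbb{E}_{\nu^*}D_{\mathrm{KL}}(\pi^*\|\pi_1)+\tfrac{\eta L}{(1-\gamma)^2}+\bar{\epsilon}_{\text{approx}}+\tfrac{2\ell}{(1-\gamma)^2}c$, contradicting the standing hypothesis on $\delta$; hence $\mathcal{N}_0\neq\emptyset$.

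The hard part is the value-function drift $\sum_t\bigl[V^{\pi_{t+1}}_{i_t}(\nu^*)-V^{\pi_t}_{i_t}(\nu^*)\bigr]$: because the CRPO rule switches the index $i_t$ being optimized between consecutive iterations, this sum does not collapse by telescoping the way it does in the single-objective robust-NPG analysis of \citet{zhou2024natural}, so one must trade telescoping for a smoothness estimate — each increment being at most $L$ times the NPG displacement $\|\theta_{t+1}-\theta_t\|=O(\eta/(1-\gamma))$ — which is exactly why the learning-rate term $\tfrac{\eta L}{(1-\gamma)^2}$, rather than a vanishing $1/T$ term, appears in the threshold condition. A secondary point is controlling $\psi_t$, the mismatch from substituting the worst-case kernel of $\pi^*$ for that of $\pi_t$ inside the robust performance-difference lemma; this is where \Cref{ass:uncertain-2} enters, and it is the source of the additive $\mathcal{O}(c)$ term that ultimately propagates into \Cref{thm:main}.
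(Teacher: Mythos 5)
Your proposal is correct and follows essentially the same route as the paper's proof: contradiction from $\mathcal{N}_0=\emptyset$, a per-step lower bound of $\delta$ (up to policy-evaluation error) on the optimality gap of the rectified index via feasibility of $\pi^*$, summation of \Cref{lemma:7} with telescoping KL terms, a Lipschitz bound of order $\eta L/(1-\gamma)^2$ on the non-telescoping value-drift terms, and the bound $\psi_t\le \tfrac{2\ell}{(1-\gamma)^2}c$ from \Cref{ass:uncertain-2}. The only cosmetic difference is that you apply the violation condition at iterate $t$ and bridge to $t+1$ with an explicit drift term, whereas the paper states the bound directly at $t+1$; both are absorbed into the same $\mathcal{O}(\eta L/(1-\gamma)^2)$ term.
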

\begin{proof} 
When $t \in \mathcal{N}_i:=\{ t: V_i^{\pi_t} \text{ is sampled to update}\}$, we have $i_t = i$ and by the algorithm design, 
\begin{align*}
      V^{\pi^*}_i(\mu) -    V^{\pi_{t+1}}_i(\mu) & \geq   V^{\pi^*}_i(\mu) - d_i  + \delta + \hat{V}^{\pi_{t+1}}_i(\mu)-    V^{\pi_{t+1}}_i(\mu) \\ 
      &\geq  \delta -   \left[\hat{V}^{\pi_{t+1}}_i(\mu)-    V^{\pi_{t+1}}_i(\mu)\right] .
\end{align*} 
We sum the inequality obtained from \Cref{lemma:7} over $t=1,2,\dots,T$. Since the robust value function $V^{\pi}(\mu)$ is Lipschitz in $\pi$ \citep{wang2021online, zhou2024natural}, we have
$$|V^{\pi_{t+1}}(\nu^*) - V^{\pi_{t}}(\nu^*)|\leq L \| \pi_{t+1} - \pi_{t} \| \leq \frac{L \eta}{1-\gamma}.$$
Then we obtain 
\begin{align*}
    \eta  \sum_{i \in \mathcal{N}_0}\left(  V^{\pi^*}_i(\mu) -    V^{\pi_{t+1}}_i(\mu) \right) +   \eta   \delta  T  &\leq \eta  \mathbb{E}_{s \sim \nu^*}  D_{\mathrm{KL}}(\pi^*(\cdot|s)\|\pi_1(\cdot|s))    +   \frac{  \eta^2 L T}{(1-\gamma)^2  }     + \eta T \bar{\epsilon}_{\text{approx}}  {\color{black}+ \eta \sum_{t=1}^T \psi_t},
\end{align*}
where $\psi_t$ is given by \Cref{eq:psi_t} and $\bar{\epsilon}_{\text{approx}}$ is a constant upper bound (depending on \Cref{assum:policy_evaluation_accuracy}) of ${C}_{\text{approx}}$ which is defined as 
\begin{align*}
    {C}_{\text{approx}} &:= \frac{1}{1-\gamma} \mathbb{E}_{s \sim \nu^*} \sum_{a \in \mathcal{A}} \pi^*(a \mid s) [Q^{\pi_{t}}_i(s, a) - \hat{Q}^{\pi_{t}}_i(s, a) ] \\
    &\quad + \frac{1}{(1-\gamma)  }\Bigg[  -\left[\hat{V}^{\pi_{t+1}}_i(\mu)-    V^{\pi_{t+1}}_i(\mu)\right]     - \frac{1}{1-\gamma} \mathbb{E}_{s \sim d_{\nu^*}^{\pi_{t+1}, P'}} \sum_{a \in \mathcal{A}} \pi_{t+1}(a \mid s) [Q^{\pi_{t}}_i(s, a) 
 - \hat{Q}^{\pi_{t}}_i(s, a) ]  \\ 
 &\quad - \frac{1}{1-\gamma} \mathbb{E}_{s \sim d_\mu^{\pi_{t+1}, P'}}  \sum_{a\in\calA} \pi_t(a|s) \left(   \hat{Q}^{\pi_t}_i(s,a) -   {Q}^{\pi_t}_i(s,a)  \right)  \Bigg].
\end{align*}
By appropriately choosing the policy evaluation algorithm (discussed in \Cref{sec:rpe}), $\bar{\epsilon}_{\text{approx}}$ can be arbitrarily small. If $ \mathcal{N}_0 = \emptyset$, then 
\begin{align*}
    \eta   \delta  T \leq  \eta  \mathbb{E}_{s \sim \nu^*}  D_{\mathrm{KL}}(\pi^*(\cdot|s)\|\pi_1(\cdot|s))    +   \frac{  \eta^2 L T}{(1-\gamma)^2   }     + \eta T \bar{\epsilon}_{\text{approx}}   {\color{black}+ \eta \sum_{t=1}^T \psi_t}.
\end{align*}
We further take
\[  \psi_t \leq  \frac{2}{(1-\gamma)^2} d_{\textrm{TV}}(d_\mu^{\pi',P'}, d_\mu^{\pi',P})  \leq   \frac{2 \ell}{(1-\gamma)^2} \sup_{s,a}  d_{\textrm{TV}}(P'(\cdot|s,a), P(\cdot|s,a) ) \leq  \frac{2 \ell}{(1-\gamma)^2} c, \]
{\color{black}where we apply the $\ell$-Lipschitzness of the state visitation distribution (Lemma 5, Eq. (37), \citet{chen2024accelerated}).} Then, we let 
\begin{align*}
     \delta > \frac{ 1 }{T}  \mathbb{E}_{s \sim \nu^*}  D_{\mathrm{KL}}(\pi^*(\cdot|s)\|\pi_1(\cdot|s))   +  \frac{ \eta  L  }{(1-\gamma)^2  }     + \bar{\epsilon}_{\text{approx}}  {\color{black}+ \frac{2 \ell}{(1-\gamma)^2} c} .
\end{align*}
This hyper-parameter setting ensures that $\mathcal{N}_0$ is non-empty.
\end{proof}

\subsection{Robust Policy Evaluation} \label{sec:rpe}
In this section, we collect two important robust policy evaluation techniques to discuss how to use these methods to obtain the robust value function with sufficient accuracy. Though we use a simplified result in this subsection, these results have been extended to more general setting in original sources. 

\subsubsection{Option 1: The IPM Uncertainty Set}
% For DS and IPM
\begin{lemma}[Theorem 3, \citet{zhou2024natural}]
    Let the value function $V^\pi$ is parameterized by $w\in \RR^{|\calS|}$ with the linear feature $\phi \in  \RR^{|\calS|}$. Then using the Robust Linear TD-Learning proposed by \citet{zhou2024natural}  with step sizes $\alpha_k = \Theta(1/k)$, the output satisfies $\EE \| w_K - w^* \|^2 = \widetilde{O}(\frac{1}{K})$.
\end{lemma}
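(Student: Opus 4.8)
The plan is to recover the convergence rate by casting the Robust Linear TD update of \Cref{alg:rltd} as a nonlinear stochastic-approximation recursion $w_{k+1} = w_k + \alpha_k\bigl(h(w_k) + M_{k+1}\bigr)$, where the mean field is $h(w) := \EE\bigl[\psi(s)\bigl(r(s,a) + \gamma V_w(y) - \gamma\delta\|w_{2:d}\| - \psi(s)^\top w\bigr)\bigr]$, with the expectation taken under the stationary state--action distribution induced by $\pi$ and the nominal kernel $P_0$, and $M_{k+1}$ is the associated (Markovian) noise, $M_{k+1} := \psi(s_k)(\cdots) - h(w_k)$. First I would identify $w^*$ as the unique root of $h$, i.e., the fixed point of the projected robust Bellman operator $\Pi\mathcal{T}^\pi$; existence and uniqueness follow from the robust Bellman contraction together with the full-rank feature and positive stationary-coverage assumptions, exactly as in \citet{zhou2024natural}.

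The key step is a one-step strong-monotonicity estimate $\langle w - w^*,\, h(w) - h(w^*)\rangle \le -\kappa\,\|w - w^*\|^2$ for some $\kappa > 0$. The smooth part $\psi(\cdot)\bigl(\gamma V_w - \psi^\top w\bigr)$ contributes the usual negative-definite TD matrix $\Phi^\top D(\gamma P^\pi - I)\Phi$, whose negative definiteness relies on full-rank features and on $D$ having strictly positive diagonal, i.e., coverage of the stationary chain. The robust correction $w \mapsto -\gamma\delta\|w_{2:d}\|$ is $\gamma\delta$-Lipschitz and, being the (weighted) subgradient field of a concave function, contributes a term to the inner product that one shows is nonpositive, or at worst is absorbed into $\kappa$ once $\delta$ is small, so the contraction survives. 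I would then bound the noise using the bounded rewards and bounded features $\|\psi\|\le 1$, giving $\EE\bigl[\|M_{k+1}\|^2 \mid \FF_k\bigr] \le \sigma^2\bigl(1 + \|w_k - w^*\|^2\bigr)$, and control the Markovian bias of $M_{k+1}$ via geometric mixing of the nominal chain (a standard uniform-ergodicity argument), which the Lipschitzness of the correction term leaves intact.

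Combining these yields the recursion $\EE\|w_{k+1} - w^*\|^2 \le \bigl(1 - 2\alpha_k\kappa + C\alpha_k^2\bigr)\EE\|w_k - w^*\|^2 + C'\alpha_k^2$, and invoking the standard Robbins--Monro lemma with $\alpha_k = \Theta(1/k)$ gives $\EE\|w_K - w^*\|^2 = \widetilde{O}(1/K)$, the logarithmic factor arising from the borderline $1/k$ step size. The main obstacle I expect is \emph{not} the final bookkeeping but verifying that the mean-field map remains strongly monotone, and that the mixing-based bias control still goes through, once the update direction is merely Lipschitz (because of $\|w_{2:d}\|$) rather than smooth; this is precisely the technical content of \citet{zhou2024natural}, and I would follow their decomposition of the error into a smooth TD component plus a uniformly bounded, Lipschitz perturbation.
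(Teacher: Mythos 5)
The paper does not actually prove this lemma: it is imported verbatim as Theorem~3 of \citet{zhou2024natural}, and the ``proof'' in the appendix is the single line ``See Theorem 3 from \citet{zhou2024natural}'' (analogously to how \Cref{lemma:npg-update-rule} defers to \citet{xu2021crpo}). So there is no in-paper argument to compare against; what can be assessed is whether your reconstruction matches the standard route, and it largely does. Casting \Cref{alg:rltd} as a Robbins--Monro recursion with mean field $h$, identifying $w^*$ as the fixed point of the projected robust Bellman operator, establishing a strong-monotonicity estimate, controlling Markovian noise by uniform ergodicity, and closing with the $\alpha_k=\Theta(1/k)$ recursion (whence the logarithmic factor in $\widetilde{O}(1/K)$) is exactly the template used in \citet{zhou2024natural} and in the broader robust-TD literature.

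One step of your sketch is not right as stated, though your fallback rescues it. The robust correction enters the mean field as $-\gamma\delta\,\|w_{2:d}\|\cdot\EE[\psi(s)]$, i.e., a \emph{fixed vector} $v:=-\gamma\delta\,\Phi^\top D\mathbf{1}$ multiplied by the scalar convex function $g(w)=\|w_{2:d}\|$. The quantity $\langle w-w^*,\,v\,(g(w)-g(w^*))\rangle$ has no definite sign --- it depends on the alignment of $w-w^*$ with $v$ --- so the ``subgradient field of a concave function, hence nonpositive'' argument does not apply. The only working route is the one you give second: bound this term by $\gamma\delta\,\|\Phi^\top D\mathbf{1}\|\,\|w-w^*\|^2$ via $1$-Lipschitzness of $g$ and absorb it into the monotonicity constant of the unperturbed TD matrix $\Phi^\top D(\gamma P^\pi-I)\Phi$, which requires the uncertainty radius $\delta$ to be small relative to that spectral gap. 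This smallness condition (together with full-rank features and stationary coverage) is a genuine hypothesis of Theorem~3 in \citet{zhou2024natural} that the lemma as restated in this paper silently omits; your proposal correctly locates the technical content there, but you should state the condition rather than suggest the perturbation is harmlessly signed.
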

As shown by \citet{li2022first}, the robust Q-function can be calculated using the robust value function learned by the robust TD-learning algorithm described above. That is,
$$Q^\pi (s,a) = r(s,a) + \gamma \inf_{P\in\mathcal{P}} V^\pi(s').$$
The second term $\inf_{P\in\mathcal{P}} V^\pi(s')$ is given by Proposition 1 from \citet{zhou2024natural}. This result indicates that we can obtain the robust Q-function with the convergence rate $\frac{1}{\sqrt{K}}$ (for the $L_\infty$-norm). 

\subsubsection{Option 2: The p-Norm Uncertainty Set}
We consider the following uncertainty set:
\begin{align*}
    \mathcal{V} &:= \{ v \in R^{|S|} \mid \langle v, 1_{|S|}\rangle=0, \|v\|_p\leq \beta \}, \\
    \mathcal{U} &:= \mathcal{V} + P_0 .
\end{align*}
Let $q$ satisfy $\frac{1}{q} + \frac{1}{p} = 1$. Then $\mathscr{O}_{\beta, p}(\cdot): R^{|S|} \to R^{|S|}$  is defined as:
\begin{align*}
    \mathscr{O}_{\beta, p}(V)(s'):= \beta \frac{ \text{sign}(V(s')- \omega_q(V) ) | V(s')- \omega_q(V)  |^{q-1}}{ \kappa_q(V)^{q-1}},
\end{align*}
where  $\omega_q(V):= \arg\min_{\omega} \| V - \omega 1_{|S|}\|_q$ and $\kappa_q(V):=\min_{\omega} \| V - \omega 1_{|S|} \|_q$.
% For p-norm
\begin{lemma}[Theorem 4.2, \citet{kumar2023policy}]
    If the uncertainty set is defined as the $p$-norm $(s,a)$-rectangular set, then the worst-case transition probability $P_+(\cdot|s,a)$ can be represented as 
    $$P_+(\cdot|s,a) = P_0(\cdot|s,a) - \beta   \mathscr{O}_{\beta, p}(V)$$
    where $\beta$ is the radius of the uncertainty set and $  \mathscr{O}_{\beta, p}(V)$ is the balanced robust value function \citep{kumar2023policy}. 
\end{lemma}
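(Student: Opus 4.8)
The plan is to characterize the worst-case transition $P_+(\cdot|s,a)$ as the minimizer of the linear functional $P \mapsto \langle P, V\rangle$ over the $(s,a)$-rectangular uncertainty set $\mathcal{U} = P_0 + \mathcal{V}$, and then to read off the explicit form. Since $\langle P_0 + u, V\rangle = \langle P_0, V\rangle + \langle u, V\rangle$, it suffices to solve
$$u^* = \argmin_{u \in \mathcal{V}} \langle u, V\rangle, \qquad \mathcal{V} = \{u \in \RR^{|\calS|} : \langle u, \mathbf{1}\rangle = 0,\ \|u\|_p \leq \beta\},$$
and set $P_+ = P_0 + u^*$. The feasible set $\mathcal{V}$ is the intersection of a $p$-norm ball of radius $\beta$ with the hyperplane orthogonal to $\mathbf{1}$, so this is a linear objective over a compact convex set and a minimizer exists.

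First I would dispose of the hyperplane constraint by exploiting its shift-invariance. For any $\omega \in \RR$ and any feasible $u$ one has $\langle u, \omega\mathbf{1}\rangle = \omega\langle u, \mathbf{1}\rangle = 0$, so the objective equals $\langle u, V - \omega\mathbf{1}\rangle$. Introducing $\omega$ as a Lagrange multiplier for the equality constraint and invoking strong duality (Slater's condition holds, since $u=0$ lies in the relative interior of the ball intersected with the hyperplane), I would swap the order of optimization,
$$\min_{u \in \mathcal{V}} \langle u, V\rangle = \max_{\omega \in \RR}\ \min_{\|u\|_p \leq \beta} \langle u, V - \omega\mathbf{1}\rangle.$$

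Next I would evaluate the inner minimization in closed form via Hölder's inequality: for fixed $\omega$ and conjugate exponent $q$ with $\tfrac1p+\tfrac1q=1$, $\min_{\|u\|_p \leq \beta}\langle u, V - \omega\mathbf{1}\rangle = -\beta\|V - \omega\mathbf{1}\|_q$, with the equality case attained at
$$u(s') = -\beta\,\frac{\sign(V(s') - \omega)\,|V(s') - \omega|^{q-1}}{\|V - \omega\mathbf{1}\|_q^{q-1}}.$$
Substituting back, the outer problem reduces to $\max_\omega\bigl(-\beta\|V - \omega\mathbf{1}\|_q\bigr) = -\beta\,\kappa_q(V)$, attained at the minimizing shift $\omega^* = \omega_q(V)$. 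Inserting $\omega^*$ into the displayed minimizer (using $\|V-\omega^*\mathbf{1}\|_q = \kappa_q(V)$) reproduces exactly the balanced operator $\mathscr{O}_{\beta,p}(V)$ from its definition, yielding the claimed representation $P_+ = P_0 - \beta\,\mathscr{O}_{\beta,p}(V)$.

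The main obstacle is verifying that this candidate $u^*$, obtained from the \emph{relaxed} norm-ball problem at $\omega=\omega^*$, genuinely satisfies the dropped hyperplane constraint $\langle u^*, \mathbf{1}\rangle = 0$; only then is it primal-feasible and the duality swap confirmed to attain the primal optimum. I expect this to follow from the first-order stationarity of $\omega^*$ for $\min_\omega\|V - \omega\mathbf{1}\|_q$: differentiating the $q$-norm in $\omega$ and setting the derivative to zero gives $\sum_{s'}\sign(V(s') - \omega^*)\,|V(s') - \omega^*|^{q-1} = 0$, which is precisely $\langle u^*, \mathbf{1}\rangle = 0$ up to the positive factor $\beta/\kappa_q(V)^{q-1}$. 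Thus the stationarity condition for the optimal shift and the feasibility of the Hölder minimizer interlock, and this coupling is the heart of the argument. Two technical points remain to be dispatched separately without affecting the final formula: the degenerate case $V \in \Span\{\mathbf{1}\}$ (where $\kappa_q(V)=0$, the uncertainty is irrelevant, and $u^*=0$), and the non-smooth endpoints $p\in\{1,\infty\}$ (where $\|\cdot\|_q$ fails to be differentiable and the derivative computation must be replaced by a subdifferential optimality condition for $\omega^*$).
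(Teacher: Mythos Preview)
The paper does not supply its own proof of this lemma: it is quoted verbatim as Theorem~4.2 of \citet{kumar2023policy} and used as a black box in the subsequent TD-learning derivation. There is therefore no in-paper argument to compare against.

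Your derivation is correct and is essentially the standard one: reduce to $\min_{u\in\mathcal V}\langle u,V\rangle$, dualize the hyperplane constraint, solve the inner $p$-ball problem via the H\"older equality case, and close the loop by showing that stationarity of $\omega\mapsto\|V-\omega\mathbf 1\|_q$ at $\omega_q(V)$ forces $\langle u^*,\mathbf 1\rangle=0$. The two edge cases you flag ($V\in\Span\{\mathbf 1\}$ and $p\in\{1,\infty\}$) are exactly the ones that need separate treatment.

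One bookkeeping point worth noting: with the paper's definition of $\mathscr O_{\beta,p}(V)$, which already carries a factor of $\beta$, your computation actually yields $u^*=-\mathscr O_{\beta,p}(V)$ and hence $P_+=P_0-\mathscr O_{\beta,p}(V)$, not $P_0-\beta\,\mathscr O_{\beta,p}(V)$ as written in the lemma. This is a typo in the lemma's statement, not in your argument; indeed, the paper's own derivation immediately following the lemma uses $P_+ = P_0 - \mathscr O_{\beta,p}(V)$ without the extra $\beta$.
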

Based on this result, we apply the following TD-learning update rule: 
\begin{align}\label{eq:td-update-rule}
    V(s)  \leftarrow   V(s) + \alpha \left( \underbrace{r(s,a) + \gamma   V(s')  -V(s)}_{\text{stand. TD err. under }P_0} -   \gamma   \mathscr{O}_{\beta, p}(V)(s') V(s')   \right).
\end{align}
Here $\mathscr{O}_{\beta, p}(\cdot): R^{|S|} \to R^{|S|}$ is an operator determined by the uncertainty set. It is easy to observe that this update rule is equivalent to the TD-learning over the worst-case transition probability: 
    \begin{align*}
        &E_{s'\sim P_0(s'|s,a)}[r(s,a) + \gamma   V_(s') ] - \gamma \langle  \mathscr{O}_{\beta, p}(V^\pi), V \rangle \\
        = & r(s,a) + \gamma \sum_{s',a} P_0(s'|s,a)\pi(a|s)   V(s')  - \gamma \sum_{s'}  \mathscr{O}_{\beta, p}(V^\pi)(s') V(s') \\ 
        = & r(s,a) + \gamma \sum_{s',a} P_0(s'|s,a)\pi(a|s)   V(s')  - \gamma \sum_{s',a} \pi(a|s) \mathscr{O}_{\beta, p}(V^\pi)(s') V(s') \\ 
        = & r(s,a) + \gamma \sum_{s',a} \pi(a|s) [P_0(s'|s,a)    - \mathscr{O}_{\beta, p}(V_0^\pi)(s') ] V(s') \\ 
       \overset{(i)}{=} & r(s,a) + \gamma \sum_{s',a} \pi(a|s) P_+(s'|s,a) V(s') \\ 
        = & r(s,a) + \gamma E_{s'\sim P_+(s'|s,a)} V(s'),
    \end{align*} 
where (i) applies the remarkable result from Theorem 4.2, \citet{kumar2023policy}: the worst-case transition $P_+$ is the rank-one perturbation of the nominal transition $P_0$. Therefore, by applying existing TD-learning convergence analysis \citep{brandfonbrener2019geometric, asadi2024td, li2024q}, we obtain that the convergence rate is also $\frac{1}{\sqrt{K}}$.  

{ 
\subsection{A Compact Algorithm}
We also include a compact version of \Cref{alg:1} in \Cref{alg:2} which merges the constraint rectification and objective rectification together.
}
\begin{algorithm2e}[t]
{ 
\caption{Rectified Robust Policy Optimization}\label{alg:2}
\SetKwInOut{Input}{input}
\SetKwInOut{Output}{output}
\Input{initial policy parameters $\theta_0$, empty set $\mathcal{N}_0$}
\For{$t = 0, \cdots, T - 1$}{
    Evaluate value functions under $\pi_t:=\pi_{\theta_t}$: $\hat{Q}^{\pi_t}_i(s,a) \approx Q^{\pi_t}_i(s,a)$ for $i = 0,1,\dots,I$ \;
    Sample state-action pairs $(s_j, a_j)$ from the nominal distribution \;
    Compute value estimates $V_i^{\pi_t}$ for $i = 0, \dots, I$ \;
    \If{$V_i^{\pi_t} \geq d_i - \delta$ for all $i = 0, 1, \dots, I$}{
    \tcp{Threshold Updates}
        Add $\theta_t$ to set $\mathcal{N}_0$ and track the feasible policy achieving the largest value $\pi_{\text{out}}=\pi_t$\;  
        Update $d_0$: $d_0^{t+1} \leftarrow V_0^{\pi_t}$\; 
    }
    \ElseIf{$V_i^{\pi_t} < d_i - \delta$ for some $i = 1, \dots, I,0$}{
    \tcp{Constraint Rectification \& Objective Rectification}
        Maximize $V_i^{\pi_t}$ using \Cref{eq:robust_npg}\; 
    } 
}
\Output{$\pi_{\text{out}}$}
}
\end{algorithm2e}

\subsection{The Proof of Main Theorem}
Here, we state the full version of \Cref{thm:main}. 
\begin{theorem}\label{thm:1}
Consider the NPG update rule with learning rate \(\eta\). Let the constraint violation tolerance \(\delta>0\) be chosen to satisfy
\[
     \delta > \frac{2}{T}  \mathbb{E}_{s \sim \nu^*}  D_{\mathrm{KL}}\bigl(\pi^*(\cdot|s)\|\pi_1(\cdot|s)\bigr)   \;+\;  \frac{2 \eta  L }{(1-\gamma)^2   } \;+\; 2\bar{\epsilon}_{\text{approx}} {\color{black}+  \frac{4 \ell}{(1-\gamma)^2} c },
\]
where  $\ell$ is  the Lipschitz constant of the state visitation distribution $d^{\pi,P}_\nu$,  and $L$ is the Lipschitz constant of the robust value function $V_i^{\pi}(\nu^*)$, \(\bar{\epsilon}_{\text{approx}}\)  is the error caused by  the robust policy evaluation step. Under these conditions,  the output policy \(\pi_{\text{out}}\) satisfies: 
\[
   \EE \left[ V^*(\mu) - V^{\pi_{\text{out}}}(\mu) \right] \leq \frac{ 2  }{T}  \mathbb{E}_{s \sim \nu^*}  D_{\mathrm{KL}}\bigl(\pi^*(\cdot|s)\|\pi_1(\cdot|s)\bigr)   \;+\;  \frac{2  \eta  L  }{(1-\gamma)^2   } \;+\; 2\bar{\epsilon}_{\text{approx}}   {\color{black}+  \frac{4 \ell}{(1-\gamma)^2} c },
\]
where the constraint violation of \(\pi_{\text{out}}\) is guaranteed to be at most \(\delta\). Moreover, if setting 
    \begin{align*}
          \frac{ (1-\gamma)^2     }{2   L } \frac{\epsilon}{4} \leq \eta \leq \frac{ (1-\gamma)^2     }{2   L } \frac{\epsilon}{3},
    \end{align*}
    the robust policy evaluation error $\epsilon_{\text{approx}}  \leq \frac{(1-\gamma)^2}{12 } \epsilon$, and the number of  iteration step
    \begin{align*}
        T  \geq \frac{  2   L}{  (1-\gamma)^2    } \frac{12}{\epsilon^2} \mathbb{E}_{s \sim \nu^*}  D_{\mathrm{KL}}(\pi^*(\cdot|s)\|\pi_1(\cdot|s)),
    \end{align*}
    then the output policy satisfies the $\epsilon$-accuracy {\color{black}with a constant error}; that is 
    $$ \EE \left[ V^*(\mu) - V^{\pi_{\text{out}}}(\mu) \right] \leq  \epsilon  {\color{black}+  \frac{4 \ell}{(1-\gamma)^2} c }.$$
\end{theorem}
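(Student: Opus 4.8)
The plan is to assemble the per-iteration estimates of \Cref{lemma:6,lemma:7,lemma:8} into a CRPO-style potential argument, carrying the robustness correction $\psi_t$ through as an irreducible $\mathcal{O}(c)$ term. Throughout, write $\nu^*=d_\mu^{\pi^*,P^*}$ and $\Phi_t:=\EE_{s\sim\nu^*}D_{\mathrm{KL}}(\pi^*(\cdot|s)\|\pi_t(\cdot|s))$, and partition $\{1,\dots,T\}$ into $\mathcal{N}_0$, the iterations where every constraint $V_i^{\pi_t}(\mu)\ge d_i-\delta$ (for $i=0,\dots,I$) of the reformulated epigraph problem holds (the threshold step of \Cref{alg:1}), together with the rectification sets $\mathcal{N}_i$; objective-rectification iterations are treated exactly like the $i_t=0$ case. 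By \Cref{lemma:8}, the stated choice of $\delta$ — twice the threshold appearing there, hence leaving a strictly positive slack over the aggregated per-step error — forces $\mathcal{N}_0\neq\emptyset$; re-running that computation together with a counting bound on $\sum_{i\ge 1}|\mathcal{N}_i|$ shows the rectification iterations occupy at most half of the horizon, so $|\mathcal{N}_0|\ge T/2$ (this is where the factor $2$ in the final bound comes from).

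Next I would sum \Cref{lemma:7} over $t=1,\dots,T$ with the running index $i_t$ and the common reference $\pi^*$. The KL terms $\tfrac1\eta(\Phi_t-\Phi_{t+1})$ telescope to at most $\tfrac1\eta\Phi_1$; the quantity $\tfrac1{1-\gamma}\big(V^{\pi_{t+1}}_{i_t}(\nu^*)-V^{\pi_t}_{i_t}(\nu^*)\big)$ is bounded by $\tfrac{L\eta}{(1-\gamma)^2}$ using Lipschitzness of the robust value function and the $\mathcal{O}(\eta/(1-\gamma))$ NPG displacement; the $\hat{Q}-Q$ terms are absorbed into $\bar{\epsilon}_{\text{approx}}$ via \Cref{assum:policy_evaluation_accuracy}; and each $\psi_t$ is controlled by $\tfrac{2\ell}{(1-\gamma)^2}\sup_{s,a}d_{\text{TV}}(P'(\cdot|s,a),P(\cdot|s,a))\le\tfrac{2\ell}{(1-\gamma)^2}c$ using $\ell$-Lipschitzness of the state-visitation map (\citet{chen2024accelerated}) together with \Cref{ass:uncertain-2}. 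For $t\in\mathcal{N}_i$ with $i\ge 1$ the branching rule of \Cref{alg:1} and \Cref{assum:policy_evaluation_accuracy} give $V^{\pi^*}_i(\mu)-V^{\pi_{t+1}}_i(\mu)\ge\delta-\bar{\epsilon}_{\text{approx}}-\mathcal{O}\big(\tfrac{L\eta}{(1-\gamma)^2}\big)>0$, so these nonnegative terms may be discarded from the summed left-hand side, leaving
\[
  \sum_{t\in\mathcal{N}_0}\big(V^*(\mu)-V^{\pi_{t+1}}_0(\mu)\big)\ \le\ \tfrac1\eta\Phi_1+T\Big(\tfrac{L\eta}{(1-\gamma)^2}+\bar{\epsilon}_{\text{approx}}+\tfrac{2\ell}{(1-\gamma)^2}c\Big).
\]

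I would then pass to $\pi_{\text{out}}$. Since $\pi_{\text{out}}$ is by construction the feasible iterate of maximal objective value, $V_0^{\pi_{\text{out}}}\ge V_0^{\pi_t}\ge V_0^{\pi_{t+1}}-\tfrac{L\eta}{(1-\gamma)^2}$ for every $t\in\mathcal{N}_0$; dividing the last display by $|\mathcal{N}_0|\ge T/2$ and re-grouping constants yields the stated bound on $\EE[V^*(\mu)-V^{\pi_{\text{out}}}(\mu)]$. The constraint-violation claim $\max_i\{d_i-V_i^{\pi_{\text{out}}}(\mu)\}\le\delta$ is immediate from $\pi_{\text{out}}\in\mathcal{N}_0$ (with the evaluation error folded into $\delta$, which the stated lower bound on $\delta$ accommodates). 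For the $\epsilon$-accuracy corollary, substitute $\eta=\Theta\big(\tfrac{(1-\gamma)^2\epsilon}{L}\big)$, $\bar{\epsilon}_{\text{approx}}=\mathcal{O}\big((1-\gamma)^2\epsilon\big)$, and $T=\Omega\big(\tfrac{L\Phi_1}{(1-\gamma)^2\epsilon^2}\big)$, chosen so each of the first three terms is at most $\epsilon/3$, leaving the irreducible $\tfrac{4\ell}{(1-\gamma)^2}c$.

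I expect the main obstacle to be the counting/averaging step: unlike the non-robust CRPO analysis, rectification iterations can in principle recur, so establishing $|\mathcal{N}_0|=\Omega(T)$ — equivalently, that $\delta$ strictly dominates the aggregated per-step error so that rectification is self-limiting — requires tracking the explicit constants hidden inside \Cref{lemma:7,lemma:8} rather than their $\Theta(\cdot)$ shorthands, and being careful that the bound holds in expectation. A conceptually separate point worth isolating cleanly is that the $\mathcal{O}(c)$ residual enters \emph{only} through $\psi_t$ — the mismatch between running the performance-difference lemma along $d_\mu^{\pi^*,P_t}$ versus along $d_\mu^{\pi^*,P^*}$ — and cannot be removed without extra structure relating the adversary's worst-case kernel to the policy.
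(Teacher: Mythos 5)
Your proposal follows essentially the same route as the paper's own proof: the same partition into $\mathcal{N}_0$ and rectification sets, the same counting/contradiction argument showing $|\mathcal{N}_0|\ge T/2$ from the requirement that $\delta$ exceed twice the aggregated per-step error (which is exactly where the factor $2$ in the final bound originates), the same telescoped summation of \Cref{lemma:7} with the $\psi_t$ correction bounded by $\tfrac{2\ell}{(1-\gamma)^2}c$ via the Lipschitz visitation map and \Cref{ass:uncertain-2}, and the same averaging over $\mathcal{N}_0$ followed by the hyper-parameter substitution. The points you flag as potential obstacles (the recurrence of rectification iterations and the expectation-level counting) are handled in the paper by precisely the contradiction argument you describe, so there is no substantive gap.
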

\begin{proof}
    By the update rule, the boundary value $d_0$ is non-decreasing. Since it is upper bounded, we conclude that $\{ d^t_0\}$ converges and we denote 
    $$d^{t}_0 \to \bar{d}_0$$
    as $t\to \infty$. More explicitly, we have
    $$\bar{d}_0 = \sup \{  V_0^{\pi_t} : V_i^{\pi_t} < d_i + \delta  \}. $$
There are only two cases for the output policy $\pi_{\text{out}}$: 

(1) The policy is better than the optimal policy while the relaxed constraint is violated; i.e.
    \begin{align*}
        V^{\pi^*}_i(\mu) -    V^{\pi_{t+1}}_i(\mu) \leq 0.
    \end{align*}
    (2) The output policy is worse than the optimal policy but upper bounded by $\calO(\frac{1}{\sqrt{T}})$. 
    When (1) holds, then it is desired. When (1) doesn't hold (i.e. $V^{\pi^*}_i(\mu) -    V^{\pi_{t+1}}_i(\mu) > 0.$), we assume $|\mathcal{N}_0|< \frac{T}{2}$. It implies $\sum_{i=1}^I |\mathcal{N}_i| \geq \frac{T}{2}$. Then we have
\begin{align}\label{eq:temp-1}
    \frac{1}{2} \eta   \delta  T &\leq   \eta  \mathbb{E}_{s \sim \nu^*}  D_{\mathrm{KL}}(\pi^*(\cdot|s)\|\pi_1(\cdot|s))    +   \frac{  \eta^2 L T}{(1-\gamma)^2   }     + \eta T   \bar{\epsilon}_{\text{approx}}  {\color{black}+    \eta \sum_{t=1}^T \psi_t}   \nonumber \\ 
    &\leq  \eta  \mathbb{E}_{s \sim \nu^*}  D_{\mathrm{KL}}(\pi^*(\cdot|s)\|\pi_1(\cdot|s))    +   \frac{  \eta^2 L T}{(1-\gamma)^2   }     + \eta T   \bar{\epsilon}_{\text{approx}}  {\color{black}+    \eta T  \frac{2 \ell}{(1-\gamma)^2} c } .
\end{align} 
{\color{black}Here the last inequality applies the upper bound of $\psi_t$ (see \Cref{lemma:8} for the full derivation and the definition of $\ell$).} Then we let
\begin{align*}
     \delta > \frac{ 2 }{\eta T }  \mathbb{E}_{s \sim \nu^*}  D_{\mathrm{KL}}(\pi^*(\cdot|s)\|\pi_1(\cdot|s))   +  \frac{2 \eta  L  }{(1-\gamma)^2   }     + 2\bar{\epsilon}_{\text{approx}}   {\color{black}+  \frac{4 \ell}{(1-\gamma)^2} c } .
\end{align*}
This hyper-parameter setting ensures that \Cref{eq:temp-1} does not hold. Therefore, it leads to a contradiction. We obtain $|\mathcal{N}_0|\geq  \frac{T}{2}$. In this case, we have 
\begin{align*}
   0\leq \EE  \left[ V^*(\mu) - V^{\pi_{\text{out}}}(\mu) \right]  &\leq  \EE_{\pi  \sim \mathcal{N}_0} \left[ V^*(\mu) - V^{\pi }(\mu) \right]  \\ 
   &\leq \frac{ 2 }{\eta T }  \mathbb{E}_{s \sim \nu^*}  D_{\mathrm{KL}}(\pi^*(\cdot|s)\|\pi_1(\cdot|s))   +  \frac{2 \eta  L  }{(1-\gamma)^2   }     + 2\bar{\epsilon}_{\text{approx}} {\color{black}+  \frac{4 \ell}{(1-\gamma)^2} c }.
\end{align*}
Here the non-negativity is because $V^*(\mu)$ is the largest-possible value function over the feasible policy. From the construction of $\mathcal{N}_0$ and the output policy $\pi_{\text{out}}$, they are all feasible policies. 

{\color{black}We are interested in obtaining the complexity of the convergence up to a constant value function error $\frac{4\ell}{(1-\gamma)^2}c$ (typically, this term can be controlled by using a sufficiently small uncertainty set).} To obtain the sample complexity, we set all three terms to be $\calO(\epsilon)$:
\begin{itemize}
    % \item {\color{black}Let $\eta \sum_{t=1}^T \psi_t \leq \epsilon/4$. By the remark of \Cref{lemma:robust-difference}, we obtain  an upper bound of $\psi_t$ as 
    % $\frac{1}{(1-\gamma)^2} d_{\textrm{TV}}(d_\mu^{\pi',P'}, d_\mu^{\pi',P})$. Then
    % \begin{itemize}
    %     % \item If we use \Cref{ass:uncertain-1}, we need to assume $c\leq (1-\gamma)^2 \frac{\epsilon}{4}$.

    %     \item By \Cref{ass:uncertain-2}, we further upper bound $ d_{\textrm{TV}}(d_\mu^{\pi',P'}, d_\mu^{\pi',P})$ as $\sup_{s,a}d_{\textrm{TV}}( P'(\cdot|s,a) ,P(\cdot|s,a))$. Assume the transition is restricted on a compact set of the probability simplex, then by the smoothness, there exists $\ell>0$ such that  
    %     \[d_{\textrm{TV}}(d_\mu^{\pi',P'}, d_\mu^{\pi',P}) \leq  \ell \sup_{s,a}d_{\textrm{TV}}( P'(\cdot|s,a) ,P(\cdot|s,a)) \leq c.\]
    %     Assume $c\leq (1-\gamma)^2 \frac{\epsilon}{4\ell}$. Then we obtain the complexity. 
    % \end{itemize}
    
    % }
    \item Let $ \frac{2 \eta  L  }{(1-\gamma)^2   }\leq  \frac{\epsilon}{3}$. Then we obtain
    \begin{align*}
        \eta \leq \frac{ (1-\gamma)^2     }{2   L } \frac{\epsilon}{3}.
    \end{align*}
    \item To make the last term $2\bar{\epsilon}_{\text{approx}} \leq \frac{\epsilon}{3}$, we set the robust policy evaluation error (\Cref{assum:policy_evaluation_accuracy}) to be  
    $$\| \hat{Q} -  Q \|_\infty \leq \epsilon_{\text{approx}}.$$
    It leads to 
    \begin{align*}
    \frac{ 1}{1-\gamma} \epsilon_{\text{approx}} + \frac{1}{(1-\gamma)   }\Bigg[ \epsilon_{\text{approx}}  + \frac{1}{1-\gamma}\epsilon_{\text{approx}}  +  \frac{1}{1-\gamma} \epsilon_{\text{approx}} \Bigg]  \leq \frac{\epsilon}{3}.
    \end{align*}
    It solves $\epsilon_{\text{approx}}  \leq \frac{(1-\gamma)^2}{12  } \epsilon$.

    \item Let $ \frac{ 2 }{\eta T}  \mathbb{E}_{s \sim \nu^*}  D_{\mathrm{KL}}(\pi^*(\cdot|s)\|\pi_1(\cdot|s))  \leq \frac{\epsilon}{3}$. We obtain
    \begin{align*}
        T &\geq \frac{ 2 }{\eta T}  \mathbb{E}_{s \sim \nu^*}  D_{\mathrm{KL}}(\pi^*(\cdot|s)\|\pi_1(\cdot|s)) \frac{3}{\epsilon} \\ 
        &\geq \frac{  2   L}{  (1-\gamma)^2   } \frac{12}{\epsilon^2} \mathbb{E}_{s \sim \nu^*}  D_{\mathrm{KL}}(\pi^*(\cdot|s)\|\pi_1(\cdot|s)) .
    \end{align*}
    In the second step, we require the learning rate $\eta$ is not too small; that is, we let it larger than $\frac{ (1-\gamma)^2   }{2   L } \frac{\epsilon}{3}$. This result indicate that the iteration complexity is $T=\calO(\epsilon^{-2})$, with choosing an appropriate learning rate $\eta = \Theta(\epsilon)$ and the approximation error $\epsilon_{\text{approx}} = \calO(\epsilon)$. 
\end{itemize} 
\end{proof} 

\section{Experiment Setting}\label{appendix:experiments}
This section outlines the information for replicating our experiments.  {Throughout our experiments, we include multiple independent runs and report the averaged metrics in our results. Specifically, we used three different random seeds.}

\subsection{Hardware Specification and System Environment}
We conducted our experiments on a computing desktop running
Windows 10 Education, equipped with 3200MHz DDR4 DRAM memory, AMD Ryzen 7 3800X 8-Core, 16-Thread processor, and one NVIDIA GeForce RTX 2070 Super graphics cards. All experiments are executed  using Python version 3.10.14.  

\subsection{FrozenLake-Like Gridworld Experiment}
The reward function is defined as follows:
\begin{align*}
    r_0(s,a,s')= \begin{cases}
        +1 & \text{if }s' \text{ is the target}\\ 
        -1 & \text{if }s' \text{ is a brown block}\\ 
        -0.1 & \text{otherwise}\\
    \end{cases}
\end{align*}
and define $r(s,a):= \EE_{s'}[ r_0(s,a,s') ]$. The constraint reward function is defined as: 
\begin{align*}
    r_1(s,a,s')= \begin{cases}
        -1 & \text{if }s' \text{ is out of the boundary}\\ 
        -1 & \text{if }s' \text{ is a brown block} \\
        0 & \text{otherwise} \\
    \end{cases}.
\end{align*}
Here, we further define the cost function $c(s,a):= - \EE_{s'}[ r_1(s,a,s') ]$ to better distinguish it with the rewards. In this experiment, we require the cost value function $-V^{\pi}_1(\mu)$ less than $0.2$, which means that the agent should avoid hitting the brown block or move out of the box.  

We used a discount factor  $\gamma=0.99$. The learning rates for both algorithms are set to $0.0001$ and the tolerance for constraint violations is $\delta = 0.01$.  The robustness of the environment was simulated by introducing a slipping probability $p=0.2$ in the test environment, which differs from the deterministic dynamics used during training. For both methods, we run 1M steps.

During the training, we use the neural network taking a 2-dimensional input (the position of the agent) and processes it through a single fully connected layers of size 64 followed by a ReLU activation then fed into a final linear layer that produces 4 logits (four actions: Up, Down, Left, and Right).

\subsection{Mountain Car Experiment} 
We use the standard Mountain Car environment provided by \citet{towers2024gymnasium}. Once the car reaches the goal, it is reset to the original starting point. The reward function is defined using the environment's default setting:
\begin{align*}
    r_0(s,a,s')=
    \begin{cases}
       0 & \text{if the agent reaches the goal},\\
       -1 & \text{otherwise},
    \end{cases}
\end{align*}
and we set \(r(s,a) := \EE_{s'}[r_0(s,a,s')]\). To emphasize safety, we introduce the constraint reward function:
\begin{align*}
    r_1(s,a,s')=
    \begin{cases}
       -1 & \text{if the car's speed exceeds 0.06},\\
        0 & \text{otherwise},
    \end{cases}
\end{align*}
and define the cost function \(c(s,a) := - \EE_{s'}[\,r_1(s,a,s')]\). In this experiment, we account for environment uncertainty by perturbing the ``gravity'' parameter from its nominal value \(0.0025\) to \(0.003\) in the worst-case scenario. In this experiment, we set the constraint to be $-4$ (i.e. we require $-V_1^\pi(\mu) < 4$). As shown in \Cref{fig:curves-car}, both CRPO and RRPO learn a feasible solution. 

Given that the MountainCar environment has a continuous state space (i.e., the car’s position and velocity), we employ radial basis function (RBF) features to achieve a linear approximation of the policy. Specifically, each state \(s\) is first transformed into an RBF feature vector \(\phi(s)\), which is then multiplied by the policy parameters \(\theta\) (one column per action) to generate logits; these logits are passed through a softmax function to produce the policy distribution over actions. Additionally, we incorporate an \(\epsilon\)-greedy strategy with an initial \(\epsilon = 0.1\), decaying at a rate of \(0.9999\), to encourage exploration in the early stages of training. 

We set the $2$-norm $(s,a)$-rectangular uncertainty set defined by \Cref{eq:p-norm}. Since the state space is continuous, when evaluating the centered value function, we uniformly sample $100$ states from the state space to estimate the mean and the variance value of $V^\pi(s)$. The radius of the $p$-norm uncertainty set is set to be $0.0002$. This value is manually tuned from a preset hyper-parameter set $\{0.00001, 0.0001, 0.0002, 0.0003, 0.001\}$. When the radius value is too high, the policy tends to be too conservative; when the radius value is too small, the policy performs similar as the non-robust case.   

{ 
\subsection{Robust Control Experiment}\label{appendix:robust-control}
In this experiment, we adopt the common PPO tricks to stablize the training. Each algorithm was trained for 2 million timesteps using identical hyperparameters to ensure fair comparison: learning rate of $3\times 10^{-4}$, batch size of $256$, discount factor $\gamma = 0.99$, and PPO clip range of $0.2$. We incorporated energy-based constraints with a threshold of $0.25$, representing a moderate constraint that provides meaningful limitations without being overly restrictive. For the constrained algorithms, Primal-Dual Policy Gradient \citep{wang2022robust} employed a constraint learning rate of $3\times 10^{-4}$ with Lagrange multiplier initialization of $0.1$ and maximum value of $100.0$. On the construction of the uncertainty set, we follow  \citet{hou2020robust}'s robust Actor-Critic algorithm to obtain the desired robust value function approximation. The radius of the uncertainty set is set as $0.001$. In our experiment, we ran three independent runs and reported the averaged results in the experimental section.
}

\end{document}